\documentclass[11pt,letterpaper]{article}
\usepackage[hypertexnames=false]{hyperref}
\hypersetup{
     colorlinks   = true,
     linkcolor = blue,
     urlcolor    = blue,
	 citecolor = blue
}
\usepackage{url}
\usepackage{authblk}

\usepackage{graphicx}
\usepackage{natbib}
\usepackage[margin=1in]{geometry}
\usepackage[utf8]{inputenc} 
\usepackage[T1]{fontenc}    
\usepackage{url}            
\usepackage{booktabs}       
\usepackage{amsfonts}       
\usepackage{nicefrac}       
\usepackage{microtype}      
\usepackage{xcolor}         
\usepackage{pgfplots}
\usepackage{subcaption}
\pgfplotsset{compat=1.18}

\definecolor{plotblue}{RGB}{31,119,180}
\usepackage{tikz}

\usepackage{amsmath,amsfonts,amssymb}
\usepackage{tikz}
\usepackage{pgfplots}
\usepackage{subcaption}

\pgfplotsset{compat=1.18}
\usepackage{aliascnt}
\usepackage{amsthm}
\usepackage{bm}
\usepackage{enumitem,array,booktabs}
\usepackage{algorithm}
\usepackage{algorithmic}
\pgfplotsset{compat=1.18}
\usepgfplotslibrary{groupplots}
\usetikzlibrary{calc}
\definecolor{avgcolor}{HTML}{BE0AFF}
\definecolor{calcolor}{HTML}{147DF5}
\definecolor{unconstcolor}{HTML}{A1FF0A}
\definecolor{trrcolor}{HTML}{FF8700}
\pgfplotsset{
  paperaxis/.style={/pgfplots/.cd,
    scale only axis,
    axis lines=box,
    axis line style={black,line width=0.8pt},
    tick align=outside,
    tick style={black,line width=0.5pt},
    major tick length=3pt,
    minor tick length=1.7pt,
    tick label style={font=\fontsize{9}{10}\selectfont},
    label style={font=\fontsize{10}{11}\selectfont},
    ymode=log, log basis y=10,
    ymin=0.8,
    grid=none,
    clip=true,
    unbounded coords=jump,
    legend style={draw=none,fill=none,font=\fontsize{8.5}{10}\selectfont,
      /tikz/every even column/.append style={column sep=6pt}},
    legend cell align=left,
    every axis plot/.append style={line width=1.0pt,mark size=2.4pt},
  },
  avgline/.style={color=avgcolor,mark=*,mark options={solid}},
  calline/.style={color=calcolor,mark=square*,mark options={solid}},
  unconstline/.style={color=unconstcolor,mark=triangle*,mark size=3pt,mark options={solid}},
  trrexpline/.style={color=trrcolor,mark=diamond*,mark size=3pt,mark options={solid}},
  trrmaxline/.style={color=trrcolor,densely dotted,line width=1.4pt,
    mark=diamond,mark size=4pt,mark options={solid,fill=none,line width=0.9pt}},
}
\newcommand{\paperlegend}{%
  \begin{tikzpicture}[baseline=-0.5ex]
    \matrix[column sep=9pt,ampersand replacement=\&]{
      \draw[avgcolor,line width=1pt] (0,0)--(.38,0);
      \fill[avgcolor] (.19,0) circle (2.2pt);
      \node[anchor=west,font=\fontsize{8.5}{10}\selectfont] at (.43,0) {MLE-AVG}; \&
      \draw[calcolor,line width=1pt] (0,0)--(.38,0);
      \fill[calcolor] (.19,0)+(-2pt,-2pt) rectangle +(2pt,2pt);
      \node[anchor=west,font=\fontsize{8.5}{10}\selectfont] at (.43,0) {MLE-CAL}; \&
      \draw[unconstcolor,line width=1pt] (0,0)--(.38,0);
      \fill[unconstcolor] (.19,0)+(0,2.6pt)--+(-2.4pt,-1.6pt)--+(2.4pt,-1.6pt)--cycle;
      \node[anchor=west,font=\fontsize{8.5}{10}\selectfont] at (.43,0) {MLE-UNCONST}; \&
      \draw[trrcolor,line width=1pt] (0,0)--(.38,0);
      \fill[trrcolor] (.19,0)+(0,2.6pt)--+(2.6pt,0)--+(0,-2.6pt)--+(-2.6pt,0)--cycle;
      \node[anchor=west,font=\fontsize{8.5}{10}\selectfont] at (.43,0) {TRR: expected}; \&
      \draw[trrcolor,densely dotted,line width=1.4pt] (0,0)--(.38,0);
      \draw[trrcolor,line width=.8pt] (.19,0)+(0,3.2pt)--+(3.2pt,0)--+(0,-3.2pt)--+(-3.2pt,0)--cycle;
      \node[anchor=west,font=\fontsize{8.5}{10}\selectfont] at (.43,0) {TRR: maximum}; \\
    };
  \end{tikzpicture}%
}

\newcounter{procedure}

\usepackage[nameinlink, capitalize]{cleveref}

\usepackage{natbib}
\usepackage{bbm}
\usepackage{tabulary,multirow}
\usepackage{makecell}
\usepackage{dsfont}
\usepackage{tcolorbox}
\usepackage{thmtools}
\usepackage{xpatch}
\xpatchcmd{\proof}{\itshape}{\scshape}{}{}
\usepackage{tablefootnote}

\theoremstyle{plain}
\newtheorem{theorem}{Theorem}[section]
\newtheorem{proposition}[theorem]{Proposition}
\newtheorem{lemma}[theorem]{Lemma}
\newtheorem{corollary}[theorem]{Corollary}

\theoremstyle{definition}
\newtheorem{definition}[theorem]{Definition}

\theoremstyle{remark}

\crefname{theorem}{Theorem}{Theorems}
\crefname{proposition}{Proposition}{Propositions}
\crefname{corollary}{Corollary}{Corollaries}

\DeclareMathOperator*{\argmin}{arg\,min}

\newcommand{\ra}{\rightarrow}

\newcommand{\Unif}{\mathrm{Unif}}

\newcommand{\1}{\mathds{1}}

\newcommand{\EEs}[2]{\mathbb{E}_{#1}\left[#2\right]}

\newcommand{\abs}[1]{\left|#1\right|}

\newcommand{\cA}{\mathcal{A}}

\newcommand{\cC}{\mathcal{C}}
\newcommand{\cD}{\mathcal{D}}

\newcommand{\cI}{\mathcal{I}}

\newcommand{\cM}{\mathcal{M}}

\newcommand{\cP}{\mathcal{P}}

\newcommand{\cT}{\mathcal{T}}

\newcommand{\cU}{\mathcal{U}}

\newcommand{\eps}{\varepsilon}
\renewcommand{\epsilon}{\varepsilon}
\renewcommand{\hat}{\widehat}
\renewcommand{\tilde}{\widetilde}
\renewcommand{\bar}{\overline}

\newcommand{\nothere}[1]{}

\newcommand{\TV}{\mathrm{TV}}

\newcommand\blfootnote[1]{%
  \begingroup
  \renewcommand\thefootnote{}\footnote{#1}%
  \addtocounter{footnote}{-1}%
  \endgroup
}
\title{How Many Repeated Pairwise Comparisons Are Needed for Ranking under Heterogeneity?}

\date{}
\begin{document}
\author[1]{Shashaank Aiyer}
\author[1]{Han Shao}
\affil[1]{University of Maryland}
\maketitle
\blfootnote{Emails: \href{mailto:saiyer1@umd.edu}{saiyer1@umd.edu}, \href{mailto:hanshao@umd.edu}{hanshao@umd.edu}}
\begin{abstract}
We study ranking models by population-average utility from pairwise
comparisons when preferences vary across users and tasks.
Prior work shows that a single comparison per user can be insufficient
to identify the alternative with the highest average utility, even with
arbitrarily many users \citep{golz2025distortion}.
We investigate how many repeated comparisons within each user-task
context are necessary and sufficient for ranking recovery.
Under a heterogeneous Bradley--Terry model with fixed inverse
temperature, we start with a naive MLE-based algorithm that requires $\Omega(1/\Delta^2)$ repeated comparisons per context to ensure ranking recovery. We then present two MLE-based variants and a randomized Russian Roulette-style algorithm that recover the ranking using
$O(\log(1/\Delta))$ repeated comparisons per context, and we prove that this logarithmic dependence is optimal.
Despite this worst-case requirement, our Russian Roulette algorithm uses only
$O(1)$ comparisons per context in expectation.
Synthetic experiments and semi-synthetic experiments based on Arena
data compare the four algorithms in settings with varying levels of preference heterogeneity and under varying context distributions.
\end{abstract}
\section{Introduction}
In recent years, human feedback in the form of pairwise comparisons has become a key instrument for evaluating and aligning large language models. Rather than asking for quantitative scores, platforms like LM Arena \citep{chatbotarena} present users with two model responses associated with a given prompt and ask which is better. LM Arena uses such comparisons to construct global model leaderboards, while RLHF uses them to learn reward models that guide finetuning \citep{ouyang, christiano}. 

Two features of pairwise human feedback introduce complications to these tasks: the heterogeneity of users and the price of obtaining pairwise comparison data. First, preferences across users and tasks can genuinely differ, making it somewhat impractical to assume a shared global ranking. For example, one user could prefer concise responses for some tasks and detailed explanations for others while another user's preferences are completely flipped. The standard procedure of fitting a single Bradley-Terry (BT) model inherently ignores these discrepancies and treats variation in user preferences as noise surrounding an agreed-upon ranking of models. \citet{golz2025distortion} show quantitatively that the BT-based alignment can fail to optimize for population-level average utility under heterogeneous user preferences. \citet{pukdee2026what} also study this problem, characterizing precise conditions under which heterogeneous preference data admits a BT representation. 

The second limiting feature of pairwise comparison data involves the high cost of obtaining feedback, particularly as AI models scale to increasingly specialized applications and tasks. We believe that this evaluation cost problem appears at an even more granular level than simply obtaining feedback from additional users. In particular, even in instances where a platform is able to collect a large number of pairwise comparisons, it may be difficult to obtain \textit{repeated} comparisons from any one user. For example, \cite{chatbotarena} obtained 240,000 votes in their January 2024 dataset, and these votes were dispersed across 90,000 users, fewer than three votes per user on average. Paired with the fact that users' preferences are heterogeneous, data from one user need not be useful for learning another user's internal model scores. In particular, \cite{golz2025distortion} show that in an observation model where each user only provides one comparison, population-average utility is unrecoverable. Armed with this intuition, we aim to model and study the following central question:

\textit{How many repeated comparisons per user-task context are necessary and sufficient to recover the ranking of models by population-average utility?}

We formalize this question through a contextual Bradley-Terry model under which each user $i$ is imbued with a latent bounded utility function $u_i$, where $u_i(t,m)$ denotes user $i$'s utility derived from model $m$ for a task $t$. We seek to estimate the average utility across users and tasks. Importantly, we do not place any assumptions on the relationship between the individual utility functions across different users as well as the utilities across tasks or models for the same user. For a fixed user and task, comparisons follow a BT model with fixed known parameter $\beta > 0$, and we assume that repeated comparisons of two models within a context are conditionally independent BT draws \citep{jin2020rank, negahban}. The number of evaluation samples then has two distinct components: the number of independently sampled user-task contexts and the number of repeated comparisons allowed for any pair of models within a fixed context. 

\begin{figure}[t]
    \centering
    \includegraphics[width=0.8\textwidth]{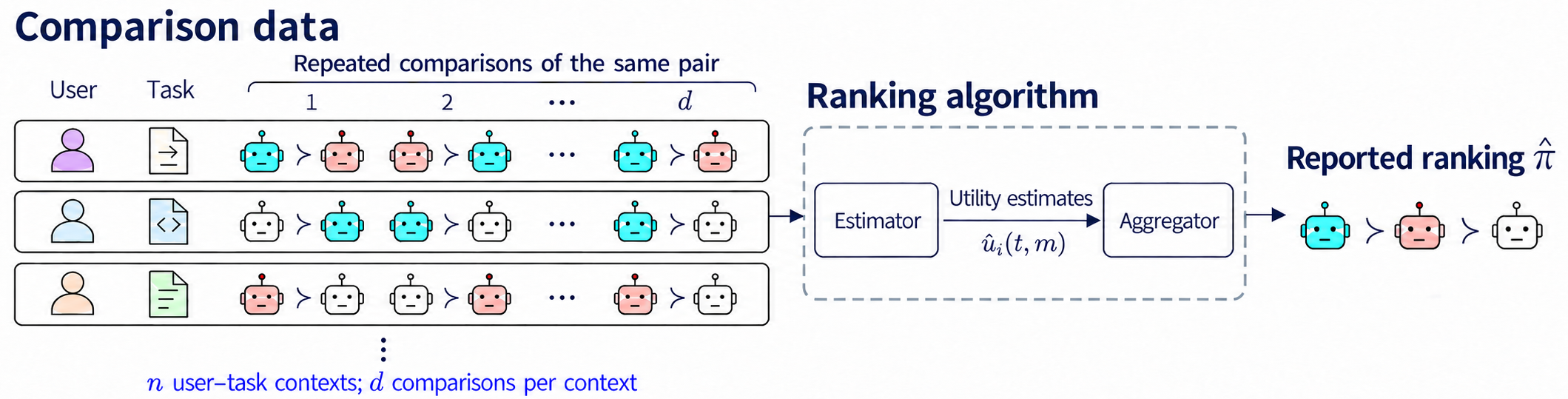}
    \label{fig:introvisual}
    \caption{\textbf{Ranking from repeated pairwise comparisons.}
For each of $n$ independently sampled user-task contexts, we sample one model pair and collect $d$ independent comparisons of that pair. The ranking algorithm consists of an estimator that estimates context-level utilities and an aggregator that combines these estimates to recover the ranking induced by population-average utility.} 
\end{figure}

To see why we explicitly distinguish between sampling fresh contexts and obtaining repeated comparisons within a fixed context, consider a fixed model and user-task pair, and let $L$ denote the latent utility difference between the two models. Then, a single comparison is a Bernoulli observation with mean $\sigma(\beta L)$, where $\sigma(x) = (1+e^{-x})^{-1}$, so sampling more contexts reveals the expected win probability $\mathbb{E}[\sigma(\beta L)]$. However, our target ranking is governed by the different in average utilities, $\mathbb{E}[L]$, whereas many contexts with a single comparison reveal the sign of $\mathbb{E}[\sigma(\beta L)] - 1/2$. Since the BT link function is nonlinear, these two quantities can have different signs. Repeated comparisons within contexts provide information about the individual win probabilities and thereby the latent utility differences. The question then becomes: How many repetitions do we need to recover enough information to estimate the average utility over the population?
\paragraph{Our Contributions.} In studying the previous question, we optimally characterize the number of repeated comparisons for a fixed user-task context needed to rank models under average utility. Let $\Delta$ denote the minimum nonzero gap in the average utility of two models in our class, and let $\delta$ be the desired failure probability. Throughout the paper, $\beta>0$ is fixed and the constants in our results may depend on it. We begin with a natural MLE-based approach and show that naively aggregating its estimates into a ranking can require $\Omega(1/\Delta^2)$ repeated comparisons because of estimation bias. This motivates two improved MLE-based algorithms as well as a randomized algorithm, all of which reduce the repetition budget to $O(\log (1/\Delta))$. We then show via a lower bound that this logarithmic dependence is optimal. In sum, our contributions are four-fold:
\begin{itemize}[leftmargin=*,itemsep=1.5pt,topsep=1.5pt]
\item \textbf{MLE-Based Algorithms.} A natural approach to this problem involves using maximum likelihood estimation to estimate each model's utility for each sampled user-task context. We formally describe and analyze three such algorithms, the first requiring a repetition budget quadratic in $1/\Delta$ and the final two using a repetition budget logarithmic in $1/\Delta$.
\item \textbf{Russian Roulette-Based Algorithm.} We introduce a Russian-Roulette estimator that randomizes the number of repeated comparisons it actually uses within each context. It also uses a repetition budget logarithmic in $1/\Delta$. However, in expectation, it only uses a constant number of repeated comparisons per context. 
\item \textbf{Information-Theoretic Lower Bound on the Repetition Budget.} We provide a lower bound showing that any ranking algorithm must require a repetition budget with logarithmic dependence on $1/\Delta$, proving that our earlier algorithms are optimal.
\item \textbf{Synthetic and Semi-Synthetic Experiments.} We complement our theoretical results with synthetic experiments as well as semi-synthetic, Arena-based experiments comparing our four algorithms. In particular, we investigate how varying the heterogeneity in user utilities and context distribution affects the performance of our algorithms. 
\end{itemize}

\section{Problem Setup} \label{sec:setup}

\paragraph{Data model.} We are given a task set $\cT$, a model set $\cM$, and a user set $\cU$. Suppose users are i.i.d.\ samples from a distribution $\cD$. Each user $i$ is imbued with a distribution over tasks, $\cP_i$, and a utility function $u_i:\cT\times \cM \ra[0,1]$ measuring the performance of each model on each task. The ground-truth ranking is determined by the expected utility. More specifically, for each user $i$ and model $m$, the utility of model $m$ for user $i$ is $u_i(m) = \EEs{t\sim \cP_i}{u_i(t,m)}$. And the expected utility takes expectation over users, i.e., 
    \[
    \bar{u}(m) = \EEs{i\sim \cD}{u_i(m)}.
    \]
Let $\pi^\star$ denote a ground-truth ranking satisfying $\bar{u}(m)>\bar{u}(m') \implies m\succ_{\pi^\star}m'$ for all $m,m'\in\cM$, where $m\succ_{\pi^\star}m'$ means that $m$ is ranked above $m'$ under $\pi^\star$. 

We define the minimum utility gap between any two models in $\cM$ as
\[
    \Delta
    :=
    \min_{\substack{m\neq m'\in\cM}}
    \abs{\bar u(m)-\bar u(m')}.
\]
For simplicity, we assume no ties between models, i.e., $\Delta>0$.

Rather than directly observing cardinal user utilities, we only have access to task-level pairwise comparison data. That is, we sample a user-task context $(i,t)$, and a pair of models $\{m,m'\}\sim \cC$ drawn from a distribution $\cC$, and we ask the user to compare the models' performance under this task. We follow Bradley-Terry to model the comparison outcome. In particular, 
    \[
    \mathbb{P}(m \succ m' | i,t) = \sigma(\beta(u_{i}(t,m)-u_i(t,m'))),
    \]
    where $\sigma(x) = \frac{1}{1+e^{-x}}$ and $\beta > 0$ is the inverse temperature parameter. 

Given positive integers $n,d$, the dataset $S=\{(i^j,t^j,m_1^j,m_2^j,y_1^j,\ldots,y_d^j)\}_{j=1}^n$ is generated as follows. Independently for each $j\in[n]$, we sample a user $i^j\sim\cD$, a task $t^j\sim\cP_{i^j}$, and a model pair $\{m_1^j,m_2^j\}\sim \cC$, where $\cC$ is a model-pair sampling distribution. The two models are labeled according to a fixed ordering of $\cM$. Conditional on this context and model pair, we draw
\[
y_1^j,\ldots,y_d^j
\overset{\mathrm{i.i.d.}}{\sim}
\operatorname{Bernoulli}\!\left(
\sigma\!\left(\beta\bigl(
u_{i^j}(t^j,m_1^j)-u_{i^j}(t^j,m_2^j)
\bigr)\right)
\right),
\]
where $y_s^j=1$ indicates that $m_1^j$ wins the $s$th comparison and $y_s^j=0$ indicates that $m_2^j$ wins. Thus, $n$ is the number of sampled user-task contexts, and $d$ is the number of repeated comparisons of the sampled model pair within each context.

\paragraph{Ranking Recovery and Repetition Budget Complexity.}
A ranking algorithm $\cA$ receives a dataset $S$ and outputs a ranking $\widehat{\pi}_{\cA}=\cA(S)$ over $\cM$. We measure its performance by the ranking error
\[
\mathrm{RankErr}(\widehat{\pi}_{\cA})
:= \1[\widehat{\pi}_{\cA}\ne\pi^\star].
\]

Our goal is to characterize the smallest repetition budget that permits ranking recovery using finitely many sampled contexts.

\begin{definition}[$(\epsilon,\delta)$-repetition budget]\label{def:repbudget}
Fix $\epsilon,\delta\in(0,1)$. For a ranking algorithm $\cA$, its $(\epsilon,\delta)$-repetition budget $d_{\cA}(\epsilon,\delta)$ is the smallest positive integer $d$ for which there exists a finite positive integer $n$ and a model-pair sampling distribution $\cC$ such that, for every problem instance with minimum average utility gap $\Delta\ge\epsilon$, a dataset $S$ generated with parameters $(n,d)$ satisfies
\[
\mathbb{P}\bigl(\mathrm{RankErr}(\cA(S))=0\bigr)\ge 1-\delta.
\]
The probability is over the data generation process and any internal randomness of $\cA$. 
The sample size $n$ may depend on $\cA$, $\epsilon$, $\delta$, and the fixed parameters $\beta$ and $|\cM|$, but not on the particular instance. If no such $d$ exists, set $d_{\cA}(\epsilon,\delta)=+\infty$.
\end{definition}

The optimal $(\epsilon,\delta)$-repetition budget is
\[
d^\star(\epsilon,\delta)
:= \inf_{\cA}d_{\cA}(\epsilon,\delta).
\]
We seek to understand how large $d^\star(\epsilon,\delta)$ is.
\section{Ranking Algorithms and Analyses}
A natural idea for designing algorithms for our problem is to first estimate the utilities within contexts and then use these estimates to produce a ranking. Such an algorithm consists of two components: (1) An \textbf{estimator} that approximates the underlying context-level utility functions $u_i(t,m)$ for all $i,t,m$ and (2) An \textbf{aggregator} that converts these estimates into a ranking. 
In the following, we will propose several different ways of constructing estimators and aggregators and analyze how they affect the overall algorithm's repetition budget. We start with a constrained MLE estimator and a naive aggregation method by simply calculating the empirical utility differences with respect to a reference model, and we show that this simple method requires an $(\eps,\delta)$-repetition budget of $\Omega(1/\eps^2)$.
Developing intuition from the revealed estimation bias of this algorithm, we show that the idea of building algorithms based on maximum likelihood can indeed achieve repetition budget of $O(\log(1/\eps))$ with carefully designed aggregators. 
We also propose a randomized Russian Roulette-based estimator \footnote{The name "Russian Roulette" refers to a randomized Monte Carlo method used to obtain an unbiased estimate of an infinite series using finite computation. The method is attributed to early work by von Neumann and Ulam \citep{kahnmontecarlo,russianroulette}}, obtaining the same repetition budget of $O(\log(1/\eps))$ and further only requiring $O(1)$ repeated comparisons in expectation. Finally, we present a lower bound showing that $\Theta(\log(1/\eps))$ repetition budget is optimal.

\subsection{MLE-Based Algorithms}

A standard approach to ranking from pairwise comparisons is to estimate utilities by maximum likelihood and then rank models based on these estimates. Given a dataset $S$, a candidate utility function $u:\cU\times\cT\times\cM\to[0,1]$ induces the empirical negative log-likelihood
\[
\begin{aligned}
L_S(u)
= -\sum_{j=1}^n\sum_{r=1}^d \Bigl[
&y_r^j\log\sigma\!\left(
\beta\bigl(
u(i^j,t^j,m_1^j)-u(i^j,t^j,m_2^j)
\bigr)
\right) +(1-y_r^j)\log\sigma\!\left(
\beta\bigl(
u(i^j,t^j,m_2^j)-u(i^j,t^j,m_1^j)
\bigr)
\right)
\Bigr].
\end{aligned}
\]
Since the likelihood depends only on within-context utility differences, absolute utility levels are generally not identifiable. We therefore fix an arbitrary reference model $m_0\in\cM$ and parameterize utilities relative to it:
$u(i,t,m)-u(i,t,m_0)\in[-1,1]$.
This normalization leaves the likelihood unchanged and ensures that $u(i,t,m_0)=0$ for every $i\in\cU$ and $t\in\cT$.

\paragraph{Constrained MLE Estimator with Naive Empirical Aggregation.} 

We start by considering the most naive choice of the estimator and aggregator. In particular, now that we are estimating utility \textit{differences} relative to a reference model, our MLE estimate is 
\[
\widehat{v} \in \argmin_{ v:\cU\times\cT\times\cM\to[-1,1], v(i,t,m_0)=0, \forall (i,t)} L_{S}(v).
\]
Importantly, we constrain our estimate to map to $[-1,1]$, respecting the fact that the true utility differences lie in this interval.

Once we have our $\hat{v}$, it remains to aggregate the estimate into a ranking under average utility. The natural choice here is, for any model $m \neq m_0$, to isolate contexts $j$ in which $m$ was compared to $m_0$ and take the empirical average over the estimates $\hat{v}(i^j,t^j,m)$. This avoids having to combine potentially inconsistent estimates obtained from different model pairs, instead providing an estimate for each model's utility relative to the same reference. 
Therefore, we estimate the average utility of $m$ by
\[\frac{\sum_{j\in [n]: \{m_1^j,m_2^j\}=\{m,m_0\}} \hat{v}(i^j,t^j,m)}{|\{j\in [n]: \{m_1^j,m_2^j\}=\{m,m_0\}\}|}\,.\]
The resulting algorithm, $\cA_{\mathrm{MLE-AVG}}$ ranks models in decreasing order of these empirical averages, breaking ties according to the fixed order of $\cM$.

A pitfall surrounding this algorithm is that given a finite repetition budget, the constrained MLE can be a biased estimator of the context-level utility differences. In fact, this bias can be of order $1/\sqrt{d}$. As an example, consider a context where the true utility difference of the compared models within that context is 1. In this case, the constrained MLE estimate is allowed to fall below 1, but any estimate above 1 is truncated to 1. Given only $d$ Bernoulli comparisons, the empirical win frequency of a model has standard deviation of order $1/\sqrt{d}$, resulting in a systematic downward MLE bias of the same order. Then, a standard empirical average over contexts can concentrate around this biased expectation. This leads to a lower bound on the repetition budget that is quadratic in $1/\eps$, as shown in the following result. 
\begin{restatable}{proposition}{mleavg}\label{prop:mleavg}
    Fix $\delta \in (0,1/2)$ and $\eps \in (0,1)$. Then, $\cA_{\mathrm{MLE-AVG}}$ requires 
    \[
     d_{\cA_{\mathrm{MLE-AVG}}}(\eps,\delta) \geq \Omega\left(\frac{1}{\eps^2 }\right). 
    \]
\end{restatable}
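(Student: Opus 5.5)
We prove the lower bound by exhibiting, for each $d \le c/\eps^2$ (with $c$ a small constant depending on $\beta$) and for every choice of $n$ and $\cC$, a pair of instances with gap at least $\eps$ on which $\cA_{\mathrm{MLE-AVG}}$ fails with probability greater than $\delta$. Take $\cM=\{m_0,m_1\}$ with a single task and a degenerate user population. Any $\cC$ then samples the pair $\{m_0,m_1\}$, so the aggregator simply averages $\hat v(i^j,t^j,m_1)$ over all $n$ contexts. In the one-pair case the constrained MLE has a closed form. If $\hat p_j$ denotes the empirical win frequency of $m_1$ in context $j$, then
\[
\hat v_j=\mathrm{clip}_{[-1,1]}\bigl(\beta^{-1}\mathrm{logit}(\hat p_j)\bigr),
\]
where $\hat p_j\in\{0,1\}$ maps to $\mp1$. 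Each $\hat v_j$ is therefore a bounded deterministic function of a $\mathrm{Bin}(d,\sigma(\beta L_j))/d$ variable, with $L_j$ the context's true difference.

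\textbf{Instance A.} Every context has $L=1$, so $\bar u(m_1)-\bar u(m_0)=1$ and $\mathbb{E}[\hat v]=1-b_1(d)$. Here $b_1(d)>0$ is the downward truncation bias described before the proposition.

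\textbf{Instance B.} Users are a mixture: with probability $q$ the difference is $L=1$, and with probability $1-q$ it is $L=\ell$ for some $\ell$ strictly inside $(-1,1)$, say $\ell=0$. Then $\mathbb{E}[\hat v]=q(1-b_1)+(1-q)\mathbb{E}_{\ell}[\hat v]$, and at $\ell=0$ symmetry gives $\mathbb{E}_0[\hat v]=0$.

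A cleaner design is to put both instances on the same side of zero and compare against a reference. Keep $m_0$ and add a third model $m_2$ whose contexts all have difference exactly $1-\eps/2$ relative to $m_0$, while $m_1$ has difference exactly $1$. The true gap between $m_1$ and $m_2$ is then $\eps/2$; rescaling by a factor of $2$ makes the minimum gap $\ge\eps$. Both $m_1$ and $m_2$ have $\bar u-\bar u(m_0)$ near $1$, so their gap to $m_0$ is about $1\ge\eps$. The ranking between $m_1$ and $m_2$ is decided by comparing the averages of $\hat v$ from $\{m_1,m_0\}$-contexts with those from $\{m_2,m_0\}$-contexts.

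The key step is to show that the expected MLE outputs are misordered:
\[
\mathbb{E}[\hat v\mid L=1] < \mathbb{E}[\hat v\mid L=1-\eps] \quad\text{whenever } d\le c/\eps^2 .
\]
Write $f(L)=\mathbb{E}_L[\hat v]$. Away from the boundary, $f(L)\approx L$, but the clipping at $1$ flattens $f$ near $L=1$: $f(1)-f(1-\eps)$ is about $\eps\cdot\mathbb{P}(\text{no clipping})$, which for $\eps\lesssim1/\sqrt d$ is only a constant fraction of $\eps$. That flattening alone does not reverse the order. To get a reversal, replace $m_1$'s deterministic $L=1$ by a two-point mixture, $L\in\{1-2\eps',1\}$ or more generally points clustered at the boundary, while $m_2$ uses the deterministic value equal to $m_1$'s mean minus $\eps$. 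Then $f$'s concavity near $1$ (Jensen) makes $\mathbb{E}[f(L_{m_1})]\le f(\mathbb{E}L_{m_1})-\Omega(1/\sqrt d)$. The mixture should spread mass over an interval of width $\Theta(1/\sqrt d)$ near $1$, where $f$ has curvature of order $\sqrt d$, so Jensen loses $\Theta(1/\sqrt d)$. This is the main obstacle: quantifying $f$ near the clipping boundary through a normal approximation of $\hat p$, with $\mathrm{logit}'$ bounded because $\sigma(\beta)<1$ is fixed, and establishing a concavity gap $\ge C/\sqrt d > \eps$.

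Given the misordering in expectation, fix any $n$ and $\cC$. The $\hat v$ values lie in $[-1,1]$ and are independent. Let $N$ be the number of contexts for each relevant pair; if $\cC$ never samples a pair, the estimate is undefined and tie-breaking is chosen adversarially. By Hoeffding, once $N$ is large the empirical averages are reversed with probability close to $1$. For small $N$, lower-bound the reversal probability by a constant via a CLT/anti-concentration argument. Since the instances are symmetric, we can also swap which model is truly better and use the fixed tie-breaking order of $\cM$, so that for any $n$ one of the two labelings fails with probability at least $1/2>\delta$. Hence $d_{\cA_{\mathrm{MLE-AVG}}}(\eps,\delta)>c/\eps^2$.
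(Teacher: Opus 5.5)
Your intuition is right, and the large-$n$ step is fine. Clipping at $1$ creates a downward bias of order $1/\sqrt d$ at $L=1$, and a suitable mixture can turn this into a misordering of expected scores whenever $\eps\lesssim 1/\sqrt d$; Hoeffding then forces an error once the number of contexts is large. The proposal still does not prove the statement, and the decisive gap is the small-$n$ regime. To show $d_{\cA_{\mathrm{MLE-AVG}}}(\eps,\delta)>d$, you must exhibit, for \emph{every} finite $n$ and every $\cC$, an instance with $\Delta\ge\eps$ on which the algorithm errs with probability $>\delta$. Here $\delta$ can be arbitrarily close to $1/2$.

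A constant lower bound from CLT or anti-concentration is therefore not enough. The label swap also does not give what you claim. Swapping which model carries the mixture swaps the data distribution together with the truth, so MLE-AVG is biased toward the wrong answer in \emph{both} labelings. The two failure events are not complementary, and nothing forces either failure probability up to $1/2$.

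The missing idea is a Le Cam two-point argument: two instances with opposite true orderings whose $n$-context data laws are close in total variation over the whole range of $n$ not covered by concentration. The paper mixes $L=1$ and $L=-1/2$ with weights $q_\pm=\tfrac13\pm\eta/\sqrt d$. Revealing each context's type shows the per-context KL is $O(\eta^2/d)$. Hence for $n\le c\,d/\eta^2$ one of the two instances errs with probability $>\delta$, which is exactly where $\delta<1/2$ is used. Hoeffding gives failure on the $q_+$ instance for $n\ge C d$, and taking $\eta$ small makes the two ranges overlap.

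Your key step, the misordering of expected scores, is only sketched; you call it the main obstacle yourself. The three-model Jensen route also makes it harder than necessary. An arbitrary $\cC$ can produce very unbalanced $|I_{m_1}|$ and $|I_{m_2}|$, or sample $\{m_1,m_2\}$ itself, and your ``let $N$ be the number of contexts for each relevant pair'' does not handle this. Your Instance B was one adjustment away from the paper's construction. With two models, $\cC$ is forced. Choosing $\ell=-1/2$ and $q$ just above $1/3$ makes the true gap $+\Theta(\eta/\sqrt d)$, while the MLE mean is at most $q(1-c_\beta/\sqrt d)+(1-q)(-\tfrac12+C_\beta/d)<0$. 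This needs only two facts: a one-sided bias bound at $L=1$ (Berry--Esseen plus the mean value theorem, using $\hat v\le 1$), and an $O(1/d)$ bias at $L=-1/2$ (second-order Taylor expansion away from the clipping boundaries). No concavity analysis of $L\mapsto\mathbb{E}_L[\hat v]$ is required.
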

Now, the question remains: Is this repetition budget of $\Omega\left(\frac{1}{\eps^2 }\right)$ necessary for any algorithm? Can we improve the repetition budget? In the following, we introduce two different MLE-based algorithms that obtain an improved repetition budget of $O(\log(1/\eps))$.

\paragraph{Unconstrained MLE Estimator with Harmonic Empirical Aggregation.} The undesirable repetition budget of $\cA_{\mathrm{MLE-AVG}}$ is primarily governed by the bias associated with constraining the underlying estimator to lie in the interval $[-1,1]$. Thus, a natural followup is simply to remove this constraint. Call the resulting estimate $\hat{v}_{\text{unc}}$. In this case, for any model $m \neq m_0$ and any context $j$ in which $\{m,m_0\}$ is the sampled pair, the MLE estimate is simply 
\[
\hat{v}_{\text{unc}}(i^j,t^j,m) = \frac{1}{\beta}\log \frac{W_j(m)}{d-W_j(m)},
\]
where $W_j(m)$ denotes the number of times $m$ wins in the $d$ comparisons within the context. We adopt the convention that $\hat{v}_{\text{unc}}(i^j,t^j,m) = \infty$ if $W_j(m) = d$ and $\hat{v}_{\text{unc}}(i^j,t^j,m) = -\infty$ if $W_j(m) = 0$. The key insight here is that , unlike the constrained MLE estimate where some outputs are truncated to 1 or -1, the unconstrained MLE estimate for $m$ is uniquely determined by its number of wins. In particular, $\hat{v}_{\text{unc}}(i^j,t^j,m)$ has a unique closed form if $j$ is a context in which $m$ was compared to $m_0$:
\[
\hat{v}_{\text{unc}}(i^j,t^j,m) = \frac{1}{\beta}\log \frac{W_j(m)}{d-W_j(m)}.
\]
Now, prior to aggregation, we have to account for the fact that the estimates can take infinite values. The naive empirical mean will be infinite or undefined in this case. Thus, we need to find some way to truncate such values. Rather than doing this arbitrarily, we seek a finite transformation of the MLE estimates whose conditional expectation remains close to the underlying utility differences.  In particular, we replace each estimate $(1/\beta)\log(W_j(m)/(d-W_j(m)))$ by 
\[
\frac{1}{\beta}(H_{W_j(m)}-H_{d-W_j(m)}),
\]
where $H_0 = 0, H_k=\sum_{i=1}^k (1/i)$ denotes the $k$th harmonic number. We empirically average after applying this transformation, obtaining our algorithm $\cA_{\mathrm{MLE-UNCONST}}$. The key idea that makes this algorithm work is that the MLE estimates after transformation have expectation 
\[
\frac{1}{\beta}\sum_{k=1}^d \frac{p_j(m)^k-(1-p_j(m))^k}{k},
\]
where $p_j(m) = \sigma(\beta(u_{i^j}(t^j,m) - u_{i^j}(t^j,m_0)))$ denotes the Bradley-Terry win probability for model $m$ in context $j$. Meanwhile, the underlying context-level utility difference also has a series representation:
\[
u_{i^j}(t^j,m) - u_{i^j}(t^j,m_0) = \frac{1}{\beta}\log \frac{p_j(m)}{1-p_j(m)} = \frac{1}{\beta}\sum_{k=1}^{\infty}\frac{p_j(m)^k - (1-p_j(m))^k}{k}
\]
Thus, the expectation of our estimates is just the truncation of this series to $d$ terms, and controlling the bias simply involves controlling the tail, which we show decays exponentially with $d$. 

\paragraph{Constrained MLE Estimator with Calibrated Empirical Aggregation.} Now, the reader may wonder if we can apply a different aggregation method to the constrained MLE estimator and obtain an improved guarantee on the repetition budget.  Is it possible to calibrate these values in order to reduce their bias prior to empirical averaging? Our resulting algorithm, $\cA_{\mathrm{MLE-CAL}}$ exploits exactly this idea. 

The key observation that our calibration mechanism relies on is that for any fixed context $j$, $\widehat{v}(i^j,t^j,m_1^j)$ only takes on $d+1$ values, each associated with the number of times $m_1^j$ wins against $m_2^j$ in the $d$ comparisons within that context. Then, we can calibrate each of these values so that in expectation, the calibrated MLE value is uniformly close to every possible value of $u_{i^j}(t^j,m_1^j) - u_{i^j}(t^j,m_2^j)$. In particular, we show that their worst-case bias decays exponentially with $d$ rather than the $1/\sqrt{d}$ rate of $\cA_{\mathrm{MLE-AVG}}$. Importantly, the calibrated values are allowed to lie outside $[-1,1]$, so they do not suffer from the downward bias of the original MLE. For a formal definition of this algorithm, we refer the reader to \cref{mle-cal-defn}.

In sum, this calibrated MLE-based algorithm starts with biased estimates of the underlying context-level utility differences and corrects these estimates to control the bias. 

The following theorem consolidates our results about the latter two MLE-based algorithms. 
\begin{restatable}{theorem}{mle}\label{prop:mle}
    Fix $\epsilon,\delta \in (0,1)$. Then, the $(\epsilon,\delta)$-repetition budgets for $\cA_{\mathrm{MLE-CAL}}$ and $\cA_{\mathrm{MLE-UNCONST}}$ satisfy \[ d_{\cA_{\mathrm{MLE-CAL}}}(\eps,\delta) \leq O\left(\log\frac{1}{\eps }\right) \] 
    and 
    \[ d_{\cA_{\mathrm{MLE-UNCONST}}}(\eps,\delta) \leq O\left(\log\frac{1}{\eps }\right). \]
\end{restatable}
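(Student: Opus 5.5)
Both bounds follow one template: bias plus variance. I fix the model-pair distribution $\cC$ to be uniform over the $|\cM|-1$ pairs $\{m,m_0\}$ with $m\neq m_0$. For each $m\neq m_0$, the aggregated score is then an empirical mean of i.i.d.\ per-context statistics $Z_j(m)$ over the $N_m$ contexts comparing $m$ with $m_0$. Write $L_j=u_{i^j}(t^j,m)-u_{i^j}(t^j,m_0)\in[-1,1]$, so that $\mathbb{E}[L_j]=\bar u(m)-\bar u(m_0)$.

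Ranking recovery holds if every score lies within $\eps/2$ of $\bar u(m)-\bar u(m_0)$. This also orders $m_0$ correctly, because its score is $0$. It therefore suffices to show two things: (i) $\sup_{L\in[-1,1]}|\mathbb{E}[Z_j\mid L_j=L]-L|\le \eps/4$ for $d\ge C\log(1/\eps)$; and (ii) $Z_j$ is bounded by some $B$ that is polynomial in $d$, so that Hoeffding plus a union bound over models gives deviation at most $\eps/4$ once $N_m\gtrsim B^2\log(|\cM|/\delta)/\eps^2$. A Chernoff bound makes $N_m\ge n/(2(|\cM|-1))$ with high probability. The resulting $n$ depends only on $\eps,\delta,\beta,|\cM|$, as the definition of repetition budget requires.

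\textbf{MLE-UNCONST.} I take $Z_j=\frac1\beta(H_W-H_{d-W})$ with $W\sim\mathrm{Bin}(d,p)$, $p=\sigma(\beta L)$, so $|Z_j|\le (1+\log d)/\beta$. The first step is the identity $\mathbb{E}[H_W]=\sum_{k=1}^d\frac{1-(1-p)^k}{k}$. It follows from $H_W=\sum_{k=1}^{d}\frac1k\1[W\ge k]$ combined with a standard binomial identity; alternatively, one can verify $\mathbb{E}[H_W]=\int_0^1 \frac{1-(1-px)^d}{x}\,dx$ and expand. Subtracting the symmetric term gives $\mathbb{E}[Z_j]=\frac1\beta\sum_{k\le d}\frac{p^k-(1-p)^k}{k}$, while $L=\frac1\beta\sum_{k\ge1}\frac{p^k-(1-p)^k}{k}$, since $\log\frac{p}{1-p}=-\log(1-p)+\log p$. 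The bias is then at most $\frac1\beta\sum_{k>d}\frac{q^k}{k}\le \frac{q^{d+1}}{\beta(d+1)(1-q)}$, where $q=\max(p,1-p)\le\sigma(\beta)<1$ uniformly over $L\in[-1,1]$. Taking $d\ge \log(C_\beta/\eps)/\log(1/\sigma(\beta))$ makes this at most $\eps/4$.

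\textbf{MLE-CAL.} Since the calibration is defined in the appendix, I would argue existence abstractly, and I expect this to be the main obstacle. The estimator is any function $c:\{0,\dots,d\}\to\mathbb{R}$ of the win count, and its expectation $\sum_w c(w)\binom dw p^w(1-p)^{d-w}$ ranges over all polynomials in $p$ of degree at most $d$, since the Bernstein basis spans that space. The calibration should therefore pick $c$ so that this polynomial approximates $f(p)=\frac1\beta\log\frac p{1-p}$ uniformly on $I=[\sigma(-\beta),\sigma(\beta)]$. Because $f$ is analytic on a neighborhood of $I$ (its poles at $0$ and $1$ lie at positive distance from $I$), Bernstein's theorem on Chebyshev approximation yields a degree-$d$ polynomial with error $O(\rho^{-d})$ for some $\rho=\rho(\beta)>1$. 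Indeed, the truncated series $\frac1\beta\sum_{k\le d}\frac{p^k-(1-p)^k}k$ already achieves this with explicit coefficients, which is exactly the unconstrained case. If the appendix definition takes $c$ to be a minimax fit, its error is no larger than that of this explicit polynomial.

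The remaining difficulty is bounding the calibrated values $c(w)$, which is needed for concentration. Converting the polynomial to the Bernstein basis can blow up coefficients, but for the explicit truncated series the coefficients are $\frac1\beta(H_w-H_{d-w})$, which are $O(\log d)$. So taking the calibration to be this (or any optimal one whose coefficients are additionally constrained to have magnitude at most $O(\log d)$ or $\mathrm{poly}(d)$) gives the bound $B$. If the paper's calibration instead solves a minimax problem over bounded coefficient vectors, the unconstrained construction is a feasible point certifying the same $O(\sigma(\beta)^{d})$ worst-case bias. Plugging into the bias/variance template then gives $d=O(\log(1/\eps))$ for both algorithms.
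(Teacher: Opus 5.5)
\textbf{Gap in the MLE-CAL part.} Your MLE-UNCONST argument is sound and is essentially the paper's. It rests on the truncated log-odds series, a tail of order $O(\sigma(\beta)^{d})$, and the bound $|Z_j|\le(1+\log d)/\beta$. Using Hoeffding instead of the paper's Bernstein-plus-variance bound only costs a $\log^2 d$ factor in $n$, which does not affect the repetition budget.

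The MLE-CAL part, however, has a genuine gap. The calibration $\phi_d$ is applied to the \emph{constrained} MLE, not to the win count $W$. In a context with $W$ wins, this MLE is the clipped log-odds $\widehat v=\max\{-1,\min\{1,\frac1\beta\log\frac{W}{d-W}\}\}$. Clipping merges win counts: $\widehat v=-1$ for all $W\le\ell$ and $\widehat v=1$ for all $W\ge d-\ell$, where $\ell=\lfloor\sigma(-\beta)d\rfloor$ is a constant fraction of $d$. So any calibrated estimator is a function $c$ of $W$ with $c(0)=\cdots=c(\ell)$ and $c(d-\ell)=\cdots=c(d)$. Its attainable expectations $\sum_w c(w)\binom{d}{w}p^w(1-p)^{d-w}$ therefore form a subspace of codimension $2\ell$, not the full space of polynomials of degree at most $d$. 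Your feasibility-certificate logic (any feasible function bounds $\sup|\phi_d|$) is the right one, but the certificate fails. The values $c(w)=\frac1\beta(H_w-H_{d-w})$ are strictly increasing in $w$, so they are not of the form $\phi(\widehat v)$ and are infeasible for the program defining $\phi_d$. For the same reason, Bernstein's theorem on Chebyshev approximation over the full degree-$d$ space does not apply.

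\textbf{What is missing.} You need a uniform approximation result inside that subspace. Its members are exactly the $Q$ with $Q'(p)=p^\ell(1-p)^\ell R(p)$ and $\deg R\le d-1-2\ell$. Writing $h(p)=\frac1\beta\log\frac p{1-p}$, one route is to find $R$ of degree about $(1-2\sigma(-\beta))d$ that approximates $h'(p)(p(1-p))^{-\ell}=\frac1\beta(p(1-p))^{-(\ell+1)}$ in relative error on $[\sigma(-\beta),\sigma(\beta)]$. This target has poles of order $\ell+1\approx\sigma(-\beta)d$ at $0$ and $1$. Such an approximation is not automatic; when $\ell$ is close to $d/2$ the subspace is far too small.

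The paper's Lemma~\ref{lem:defq} does this for odd $d$. It truncates $\frac4\beta(1-(2p-1)^2)^\ell\sum_{r\ge0}\binom{\ell+r}{r}(2p-1)^{2r}$ at $r=(d-1)/2-\ell$, identifies $Q_d'/h'$ with a binomial CDF, and applies Hoeffding with $\ell/d\le\sigma(-\beta)<1/2$ to get error $e^{-c_\beta d}$. Lemma~\ref{lem:boundbernstein} then bounds the Bernstein coefficients by $C_\beta 5^d$. That bound is exponential in $d$, so you should also drop your requirement that $B$ be polynomial in $d$. With $d=O(\log(1/\eps))$ it is $\mathrm{poly}(1/\eps)$, and the repetition budget only needs some finite $n$; the paper obtains $n=O(|\cM|\eps^{-(2+\kappa_\beta)}\log(|\cM|/\delta))$.
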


The following corollary specifies the number of sampled contexts needed for the algorithms to obtain their guarantees. 

\begin{corollary}
There exists a model-pair sampling distribution $\cC$ such that the upper bounds on $d_{\cA_{\mathrm{MLE-CAL}}}(\eps,\delta)$ and $d_{\cA_{\mathrm{MLE-UNCONST}}}(\eps,\delta)$ in \cref{prop:mle} can be realized with 
\[
n = O \left(\frac{|\cM|}{\eps^{2+\kappa_{\beta}}}\log\frac{|\cM|}{\delta} \right) \qquad \text{and} \qquad n = O \left(\frac{|\cM|}{\eps^{2}}\log\frac{|\cM|}{\delta} \right) \qquad
\]
sampled contexts, respectively, where $\kappa_{\beta}>0$ is a fixed constant depending only on $\beta$. Consequently, the total number of pairwise comparisons that $\cA_{\mathrm{MLE-CAL}}$ and $\cA_{\mathrm{MLE-UNCONST}}$ use is 
\[
O\left(\frac{|\cM|}{\eps^{2+\kappa_{\beta}}}\log\frac{1}{\eps}\log\frac{|\cM|}{\delta} \right) \qquad \text{and} \qquad O\left(\frac{|\cM|}{\eps^{2}}\log\frac{1}{\eps}\log\frac{|\cM|}{\delta} \right),
\]
respectively.
\end{corollary}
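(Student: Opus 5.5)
The statement to prove is the Corollary. The plan has three steps: fix $\cC$ as the uniform distribution over the $|\cM|-1$ pairs $\{m,m_0\}$ with $m\neq m_0$; choose $d=C_\beta\log(1/\eps)$ as in \cref{prop:mle}; then show that the per-context estimates are bounded and have bias at most $\eps/4$, so that Hoeffding plus a union bound yields the claimed $n$.

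\textbf{Step 1: the MLE-UNCONST case.} In a context $j$ comparing $m$ and $m_0$, the transformed statistic is $Z_j=\frac{1}{\beta}(H_{W_j(m)}-H_{d-W_j(m)})$.

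\begin{itemize}
\item Its conditional mean is the $d$-term truncation of the series for $u_{i^j}(t^j,m)-u_{i^j}(t^j,m_0)$.
\item Since $p_j(m)\in[\sigma(-\beta),\sigma(\beta)]$, the tail is at most $\frac{1}{\beta}\sum_{k>d}\sigma(\beta)^k/k\le c_\beta\,\sigma(\beta)^d$. Choosing $d=\lceil\log(4c_\beta/\eps)/\log(1/\sigma(\beta))\rceil=O(\log(1/\eps))$ makes the bias at most $\eps/4$.
\item Hence $\mathbb{E}[Z_j]$ lies within $\eps/4$ of $\bar u(m)-\bar u(m_0)$, because contexts are i.i.d.\ from the user–task distribution.
\item The range is $|Z_j|\le H_d/\beta=O(\log d/\beta)=O(\log\log(1/\eps))$.
\end{itemize}

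Let $N_m$ be the number of contexts assigned to pair $\{m,m_0\}$. Hoeffding then gives deviation at most $\eps/4$ with probability $1-\delta/(2|\cM|)$ once $N_m\gtrsim (H_d/\beta)^2\eps^{-2}\log(|\cM|/\delta)$. The $\log\log$ factor gets absorbed, possibly with a slightly more careful Bernstein variance bound; the variance of $Z_j$ is $O(1)$ because the win probabilities are bounded away from $0$ and $1$. A Chernoff bound shows that $n=O(|\cM|\eps^{-2}\log(|\cM|/\delta))$ guarantees every $N_m\ge n/(2|\cM|)$. With all estimates within $\eps/4$ of the truth, every pairwise gap of at least $\eps$ is ordered correctly, so the ranking is exact.

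\textbf{Step 2: the MLE-CAL case.} The argument is identical except for the range. The calibrated values are chosen to cancel bias uniformly over the true difference in $[-1,1]$, which is a polynomial/Bernstein-type approximation on $d+1$ points. Their magnitude can therefore be as large as $e^{O(d)}$. With $d=O(\log(1/\eps))$, this range is $\eps^{-\kappa_\beta/2}$. Hoeffding then needs $N_m\gtrsim \eps^{-2-\kappa_\beta}\log(|\cM|/\delta)$, which is the stated $n$.

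\textbf{Main obstacle.} The main obstacle is the bound on the calibrated values' magnitude in Step 2. I would extract it from the construction in \cref{mle-cal-defn}: bound the inverse of the $(d+1)\times(d+1)$ binomial-probability system by $e^{c_\beta d}$, then set $\kappa_\beta=2c_\beta C_\beta$.

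\textbf{Step 3: total comparisons.} The totals follow by multiplying $n$ by $d=O(\log(1/\eps))$.
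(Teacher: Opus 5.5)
Your architecture matches the paper's: uniform $\cC$ on the pairs $\{m,m_0\}$, $d=O(\log(1/\eps))$, a bias bound plus concentration, a Chernoff bound on each $|I_m|$, and total comparisons $=nd$. Your MLE-UNCONST step is also essentially the paper's, with two adjustments:
\begin{itemize}
\item You must actually use Bernstein rather than Hoeffding: range $H_d/\beta$ with $\eps H_d=O(1)$, and an $O(1)$ second moment. The paper gets the latter from $Z_j=\mathbb{E}[\tilde Z_j\mid W_j]$, Jensen and Minkowski.
\item Split the error as $\eps/8+\eps/8$. Your $\eps/4+\eps/4$ gives pairwise error up to $\eps$, which does not separate a gap of exactly $\eps$.
\end{itemize}

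\textbf{The gap.} It is in Step 2, the only place MLE-CAL differs. You assert that the calibrated values have magnitude $e^{O(d)}$, but your proposed route would not deliver this, for three reasons.
\begin{itemize}
\item $\phi_d$ is not the solution of a square linear system. It minimizes $\sup|\phi|$ subject to a bias inequality. So you need a feasible witness of size $e^{O(d)}$, and then optimality of $\phi_d$. The witness also shows that the constraint is feasible at $d=O(\log(1/\eps))$ at all, which you never establish.
\item More importantly, your ``$(d+1)\times(d+1)$ binomial system'' ignores the truncation in the constrained MLE. Every win count $w\le\ell$ gives $\hat v=-1$ and every $w\ge d-\ell$ gives $\hat v=+1$, where $\ell=\lfloor\sigma(-\beta)d\rfloor$. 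So a calibration of $\hat v$ has only about $(1-2\sigma(-\beta))d$ free values. Its expectation is a Bernstein polynomial with tied coefficients $b_0=\cdots=b_\ell$ and $b_{d-\ell}=\cdots=b_d$. It is not automatic that such constrained polynomials approximate $h(p)=\frac1\beta\log\frac{p}{1-p}$ to within $e^{-c_\beta d}$ uniformly on $[\sigma(-\beta),\sigma(\beta)]$.
\item Exact inversion at $d+1$ nodes would in any case control the bias only at those nodes, not uniformly over $D\in[-1,1]$.
\end{itemize}

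\textbf{The missing idea.} It is the explicit construction in \cref{lem:defq}.
\begin{itemize}
\item Tied coefficients are equivalent to $Q_d'$ being divisible by $p^\ell(1-p)^\ell$.
\item The derivative $h'(p)=\frac{4^{\ell+1}}{\beta}p^\ell(1-p)^\ell\sum_{r\ge0}\binom{\ell+r}{r}(2p-1)^{2r}$ already carries this factor, so one truncates the series at $r=(d-1)/2-\ell$.
\item The relative error $1-Q_d'/h'$ equals an upper tail of $\mathrm{Binomial}((d+1)/2,(2p-1)^2)$. It is exponentially small because $(2p-1)^2\le(1-2\sigma(-\beta))^2<1-2\sigma(-\beta)$.
\item Integrating from $1/2$ gives uniform error $e^{-c_\beta d}$.
\end{itemize}
\cref{lem:boundbernstein} then bounds the Bernstein coefficients by $C_\beta5^d$, via monomial coefficient sums and $b_w=\sum_r q_r\binom{w}{r}/\binom{d}{r}$. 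Optimality of $\phi_d$ yields $\sup|\phi_d|\le C_\beta5^d=O(\eps^{-\kappa_\beta/2})$ with $\kappa_\beta=2\log 5/c_\beta$. After that, your Hoeffding step goes through.
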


\subsection{Truncated Russian Roulette (TRR) Estimator}
In this section, we present a randomized algorithm that deviates from the MLE-based approach by instead starting with an unbiased estimator of the context-level utility differences. We show that not only does this algorithm obtain a $\log(1/\eps)$ repetition budget, but its randomized design permits it to only use a constant number of repeated comparisons in expectation. We adopt the same reference model design with reference model $m_0 \in \cM$. 

To start, for a fixed $m \neq m_0$ and context $j$ in which $\{m,m_0\}$ is the sampled pair, we write the series representation of the context-level utility difference between $m$ and $m_0$:
\[
u_{i^j}(t^j,m) - u_{i^j}(t^j,m_0) = \frac{1}{\beta}\sum_{k=1}^{\infty}\frac{p_j(m)^k - (1-p_j(m))^k}{k},
\]
Now, observe that each of the terms in the series can be estimated directly from the comparison outcomes $y_1^j,...,y_d^j$ without having to estimate $p_j(m)$. In particular, $p_j(m)^k$ simply denotes the probability that $m$ beats $m_0$ in the first $k$ comparisons, and $(1-p_j(m))^k$ is analogously the probability that $m_0$ beats $m$ in the first $k$ comparisons. Then, the following statistic is an unbiased estimator of the utility difference between $m$ and $m_0$ for this context: 
\[
\frac{1}{\beta}\sum_{k=1}^{\infty}\frac{\prod_{r=1}^k y_r^j - \prod_{r=1}^k (1-y_r^j)}{k}.
\]

Although this statistic is unbiased, it requires infinitely many comparisons to compute. To overcome this issue, we introduce randomness, using a Russian Roulette estimation technique. In particular, within each context, we independently draw an integer $K_j$ according to a geometric distribution. We then only use the first $K_j$ repeated comparisons and estimate the first $K_j$ terms of the above statistic. Then, scaling by the inverse of the survival probability $\mathbb{P}(K_j \geq k)$ gives the statistic 
\[
\frac{1}{\beta}\sum_{k=1}^{K_j}\frac{\prod_{r=1}^k y_r^j - \prod_{r=1}^k (1-y_r^j)}{k\mathbb{P}(K_j \geq k)},
\]
Importantly, this statistic remains an unbiased estimator of the context-level utility difference but is now computable using finitely many comparisons. However, the sampled $K_j$ is still allowed to be arbitrarily large, which is not compatible with our setting, where we are restricted to $d$ repeated comparisons per context. Hence, we adjust the distribution of $K_j$ to truncate at $d$, but this truncated distribution still has geometrically decaying tails. For a formal description of the estimator and distribution of the $K_j$'s, we refer the reader to \cref{appendix:randomtruncated}.

Although truncating at $d$ causes the statistic to become biased, we show that this bias decays exponentially with $d$. Our resulting algorithm $\cA_{\text{TRR}}$ applies the naive empirical aggregation procedure and requires a repetition budget that is logarithmic in $1/\eps$, as shown in the following result. 

\begin{restatable}{theorem}{randomtruncated}\label{thm:randomtruncated}
    Fix $\eps,\delta \in (0,1)$. Then the $(\eps,\delta)$-repetition budget for $\cA_{\text{TRR}}$ satisfies 
     \[d_{\cA_{\text{TRR}}}(\eps,\delta) \leq O\left(\log\frac{1}{\eps}\right).\] Additionally, letting $K_{j}$ denote the number of repeated comparisons that $\cA_{\text{TRR}}$ uses from context $j$,  
    \[\mathbb{E}[K_j]=O(1).\]
\end{restatable}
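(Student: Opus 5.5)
The plan is to fix a concrete truncated geometric law for $K_j$ and bound three things separately: the per-context bias, the spread of the per-context statistic, and the aggregation over contexts. For the law, take $\mathbb{P}(K_j\ge k)=q^{k-1}$ for $1\le k\le d$ and $\mathbb{P}(K_j\ge k)=0$ for $k>d$, with a fixed $q\in(0,1)$ chosen depending on $\beta$. Then $K_j\le d$ always. Moreover $\mathbb{E}[K_j]=\sum_{k=1}^d q^{k-1}\le 1/(1-q)=O(1)$, which settles the second claim at once.

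For the bias, let $Z_j$ be the TRR statistic for model $m$ in a context $j$ with pair $\{m,m_0\}$, and write $p=p_j(m)$. Condition on the context and use that $K_j$ is independent of the comparisons. For each $k\le d$ we have
\[
\mathbb{E}\Bigl[\1[K_j\ge k]\bigl(\textstyle\prod_{r\le k}y_r-\prod_{r\le k}(1-y_r)\bigr)\Bigr]=\mathbb{P}(K_j\ge k)\,(p^k-(1-p)^k).
\]
Hence $\mathbb{E}[Z_j\mid\text{context}]=\frac1\beta\sum_{k=1}^d\frac{p^k-(1-p)^k}{k}$. By the series identity stated before the theorem, the bias equals the tail $\frac1\beta\sum_{k>d}\frac{p^k-(1-p)^k}{k}$. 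Since utilities lie in $[0,1]$, we have $p\in[\sigma(-\beta),\sigma(\beta)]$. So the tail is bounded in absolute value by $\frac{1}{\beta}\frac{\sigma(\beta)^{d+1}}{(d+1)(1-\sigma(\beta))}$, which is $e^{-\Omega_\beta(d)}$ uniformly over contexts. Choosing $d=C_\beta\log(1/\eps)$ makes this at most $\eps/4$.

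For the spread, the $k$th term of $Z_j$ has absolute value at most $1/(k q^{k-1})$, and it is nonzero only if the first $k$ outcomes agree. That event has probability at most $\rho^k$ with $\rho=\sigma(\beta)<1$. So $|Z_j|$ is not uniformly bounded, since its worst case grows like $q^{-d}$, and I would control the second moment instead. Using $\prod_{r\le k}y_r\prod_{r\le l}y_r=\prod_{r\le\max(k,l)}y_r$, the cross terms give
\[
\mathbb{E}[Z_j^2]\le\frac{2}{\beta^2}\sum_{k\le l}\frac{\mathbb{P}(K_j\ge l)\,\rho^l}{k\,l\,q^{k-1}q^{l-1}}\le\frac{2}{\beta^2}\sum_{k\le l}\frac{\rho^l}{kl\,q^{k-1}}.
\]
Choose $q\in(\rho,1)$, for instance $q=(1+\rho)/2$. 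Then $\rho^l q^{-(k-1)}\le q^{-1}(\rho/q)^l$ for $k\le l$, so the double sum is a constant $V_\beta$ independent of $d$.

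For aggregation, use the paper's design in which $\cC$ places mass $1/(|\cM|-1)$ on each pair $\{m,m_0\}$. A Chernoff bound ensures that each such pair receives at least $n/(2|\cM|)$ contexts with probability $1-\delta/2$. Conditioned on the counts, the $Z_j$ for a fixed $m$ are i.i.d. with mean within $\eps/4$ of $\bar u(m)-\bar u(m_0)$ and variance at most $V_\beta$. I then need each empirical average to be within $\eps/4$ of its mean, so that all pairwise orders are correct, since $\Delta\ge\eps$.

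\textbf{Main obstacle.} The main obstacle is this last concentration step, because $Z_j$ is heavy-ish: its range is about $q^{-d}=\mathrm{poly}(1/\eps)$. A single Chebyshev bound would give only polynomial dependence on $1/\delta$. I would use median-of-means, which needs only finite variance. It gives deviation $\eps/4$ with probability $1-\delta/(2|\cM|)$ using $O(V_\beta\eps^{-2}\log(|\cM|/\delta))$ samples per model, and a union bound finishes the argument. Alternatively, one could apply Bernstein to the plain empirical mean with range $q^{-d}$, which costs an extra polynomial factor in $1/\eps$ in $n$ but is fine because Definition~\ref{def:repbudget} only requires finite $n$. Either way $n$ is finite and instance-independent, and $d=O(\log(1/\eps))$.
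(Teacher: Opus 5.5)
Your proof is correct and is essentially the paper's: the same truncated-series bias bound, a $d$-independent second-moment bound, a range bound from truncation at $d$, concentration plus a union bound, and $\mathbb{E}[K_j]$ as a sum of survival probabilities. The differences are cosmetic and your computations carry over verbatim: the paper's $\cA_{\mathrm{TRR}}$ uses $\mathbb{P}(K_j\ge k)=\rho^{k-1}/k$ rather than $q^{k-1}$, it averages the Rao--Blackwellized statistic $\mathbb{E}[\tilde{Z}_j(m)\mid K_j,W_j(m)]$ and passes to $\tilde{Z}_j(m)$ via Jensen, and it bounds the second moment with Minkowski instead of expanding the square.

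Your ``main obstacle'' is not one. With $q>\sigma(\beta)$ and $d$ at the threshold $\approx\log(1/\eps)/\log(1/\sigma(\beta))$, one has $q^{-d}\le C_\beta\eps^{-a_\beta}$ with $a_\beta=\log(1/q)/\log(1/\sigma(\beta))<1$, so the range term in Bernstein is $O(1)$. Plain averaging therefore already needs only $|I_m|=O(\eps^{-2}\log(|\cM|/\delta))$, with no extra polynomial factor, exactly as in the paper. Plain averaging is also what $\cA_{\mathrm{TRR}}$ actually does; median-of-means would change the algorithm. Finally, split the error as $\eps/8+\eps/8$ rather than $\eps/4+\eps/4$, so that pairwise errors are strictly below $\Delta$.
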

\begin{restatable}{corollary}{TRRsamples}\label{cor:TRRsamples}
There exists a model-pair sampling distribution $\cC$ such that the upper bound on $d_{\cA_{\mathrm{TRR}}}(\eps,\delta)$ in \cref{thm:randomtruncated} can be realized with 
\[
n = O \left(\frac{|\cM|}{\eps^2}\log\frac{|\cM|}{\delta} \right)
\]
sampled contexts. Consequently, the total number of pairwise comparisons is at most 
\[
O\left(\frac{|\cM|}{\eps^2}\log \frac{1}{\eps}\log\frac{|\cM|}{\delta}\right), \] and the expected total number of comparisons is 
    \[
    O\left(\frac{|\cM|}{\eps^2}\log \frac{|\cM|}{\delta} \right).
    \]
\end{restatable}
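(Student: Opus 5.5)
We take $\cC$ to be uniform over the $|\cM|-1$ pairs $\{m,m_0\}$ with $m\neq m_0$. The analysis then decomposes into three pieces: a count of contexts per pair, a bias-plus-concentration bound for each per-pair empirical mean, and a union bound that converts estimation accuracy into exact ranking recovery. Let $N_m$ be the number of contexts $j$ whose sampled pair is $\{m,m_0\}$. By a multiplicative Chernoff bound and a union bound over the $|\cM|-1$ pairs, with probability at least $1-\delta/2$ we have $N_m\ge n/(2|\cM|)$ for every $m$, provided $n\gtrsim |\cM|\log(|\cM|/\delta)$.

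\textbf{Accuracy needed.} Write $\mu_m=\bar u(m)-\bar u(m_0)$ for the target of model $m$. Ranking by estimates $\hat\mu_m$ (with $\hat\mu_{m_0}=0$) recovers $\pi^\star$ as soon as $|\hat\mu_m-\mu_m|<\eps/2$ for all $m$. This holds because every pairwise gap $\mu_m-\mu_{m'}=\bar u(m)-\bar u(m')$ has magnitude at least $\Delta\ge\eps$. We aim for bias at most $\eps/4$ and fluctuation at most $\eps/4$.

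\textbf{Bias.} Fix $m$ and a context $j$ with pair $\{m,m_0\}$, and write $p=p_j(m)$. Conditional on the context, the Russian Roulette statistic $Z_j$ has expectation $\frac1\beta\sum_{k=1}^{d}\frac{p^k-(1-p)^k}{k}$. This uses $\mathbb{E}[\1\{K_j\ge k\}]/\mathbb{P}(K_j\ge k)=1$, independence of $K_j$ from the outcomes $y$, and $\mathbb{E}[\prod_{r\le k}y_r]=p^k$. The true difference $\frac1\beta\log\frac{p}{1-p}$ is the full series, so the conditional bias is bounded by the tail $\frac{1}{\beta}\sum_{k>d}\frac{q^k}{k}\le \frac{q^{d+1}}{\beta(d+1)(1-q)}$. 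Here $q=\max(p,1-p)\le\sigma(\beta)<1$, since utility differences lie in $[-1,1]$. Averaging over contexts, the bias of $\mathbb{E}[Z_j]$ relative to $\mu_m$ is at most $C_\beta\,\sigma(\beta)^{d}$, which is below $\eps/4$ once $d\ge c_\beta\log(1/\eps)$. This gives the repetition budget in \cref{thm:randomtruncated}.

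\textbf{Concentration and expected usage.} This step is the main obstacle, because $Z_j$ is not bounded by a constant: the weights $1/\mathbb{P}(K_j\ge k)$ grow geometrically. We first try to control it by choosing the roulette parameter $\rho$ so that $\mathbb{P}(K_j\ge k)=\rho^{k-1}$ for $k\le d$ with $\sigma(\beta)<\rho<1$; the exact truncation scheme in \cref{appendix:randomtruncated} should be chosen to match this. The nonzero summands require $y_1=\dots=y_k=1$ (or all zeros). In that event, only the first sum or only the second sum survives, and its $k$-th term is at most $\rho^{-(k-1)}/k$. Hence
\[
|Z_j|\le \frac{1}{\beta}\sum_{k\le K_j}\frac{\rho^{-(k-1)}}{k}\,\1\Bigl\{\textstyle\prod_{r\le k}y_r=1 \text{ or } \prod_{r\le k}(1-y_r)=1\Bigr\}.
\]
The largest nonzero index $R$ satisfies $\mathbb{P}(R\ge k)\le 2\sigma(\beta)^k$. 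So $|Z_j|\lesssim (1/\rho)^{R}$ has a tail of the form $\mathbb{P}(|Z_j|>x)\lesssim x^{-\log(1/\sigma(\beta))/\log(1/\rho)}$. Taking $\rho$ close enough to $1$ (say $\rho^{2}>\sigma(\beta)$) makes this exponent exceed $2$, so $\mathrm{Var}(Z_j)=O_\beta(1)$ and the tail is polynomial with finite higher moments. We then apply a median-of-means estimator, or alternatively Bernstein after truncating at level $x_0=\mathrm{poly}(1/\eps)$ and charging the truncation to bias. Either way we get $|\hat\mu_m-\mathbb{E}Z_j|\le\eps/4$ with probability $1-\delta/(2|\cM|)$ using $N_m=O(\eps^{-2}\log(|\cM|/\delta))$, matching \cref{cor:TRRsamples}. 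If the paper's aggregator is the plain mean, the truncation-plus-Bernstein route is the one to use, since truncation at $x_0$ is never triggered with high probability when the tail exponent exceeds $2$. Finally, $\mathbb{E}[K_j]\le\sum_{k\ge1}\rho^{k-1}=1/(1-\rho)=O_\beta(1)$. A union bound over the count event and the $|\cM|-1$ estimation events completes the proof.
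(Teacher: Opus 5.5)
Your choice of $\cC$, the per-pair context count, the bias bound and $\mathbb{E}[K_j]=O(1)$ all match the paper. The gap is in the concentration step, which is the whole content of this corollary. The statement concerns $\cA_{\mathrm{TRR}}$, which aggregates by a plain empirical mean. Median-of-means and a clipped mean therefore analyze a different estimator; median-of-means is what the paper needs only for the \emph{untruncated} variant, where no deterministic bound exists.

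Your fallback for the plain mean, that clipping at $x_0$ is ``never triggered with high probability'', does not give the stated $\log(1/\delta)$ scaling. A polynomial tail $\mathbb{P}(|Z_j|>x)\lesssim x^{-\alpha}$ bounds the triggering probability only by $N_m x_0^{-\alpha}$. Making this at most $\delta/|\cM|$ forces $x_0\gtrsim (N_m|\cM|/\delta)^{1/\alpha}$. But Bernstein at deviation $\eps/8$ needs $x_0\eps=O(1)$ to keep $N_m=O(\eps^{-2}\log(|\cM|/\delta))$. These two requirements are incompatible as $\delta\to 0$, so you would end up with $n$ polynomial in $1/\delta$.

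The missing idea is that truncation at $d$ makes $Z_j$ deterministically bounded, and at a harmless scale. Every summand has numerator in $[-1,1]$, so $|Z_j|\le \frac{1}{\beta}\sum_{k\le d}\rho^{-(k-1)}=O_\beta(\rho^{-d})$. Take $d$ as the smallest value meeting the bias requirement, $d=\log(1/\eps)/\log(1/\sigma(\beta))+O_\beta(1)$. Then $|Z_j|=O_\beta(\eps^{-a_\beta})$ with $a_\beta=\log(1/\rho)/\log(1/\sigma(\beta))<1$, exactly because $\rho>\sigma(\beta)$. Bernstein with range $M=O_\beta(\eps^{-a_\beta})$ and second moment $O_\beta(1)$ has exponent of order $N_m\eps^2/(V_\beta^2+M\eps)=\Omega_\beta(N_m\eps^2)$. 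This gives the claimed $n$ for the plain mean, with no clipping or tail analysis. It also shows you must not take $d$ much larger than the bias requires, since $\rho^{-d}\eps$ has to stay bounded.

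Separately, your variance argument needs $\rho^2>\sigma(\beta)$ only because you bound the run length $R$ without using the independent event $\{K_j\ge k\}$. Keep the factor $\mathbb{P}(K_j\ge k)$ in the second moment, e.g. by Minkowski applied to $\sum_{k\le d}\1\{K_j\ge k\}\bigl(\prod_{r\le k}y_r-\prod_{r\le k}(1-y_r)\bigr)/(k\,\mathbb{P}(K_j\ge k))$. This gives $O_\beta(1)$ for every $\rho\in(\sigma(\beta),1)$, which is the full range the algorithm allows. The bound passes to the paper's Rao--Blackwellized $Z_j$ by Jensen. The paper uses $\mathbb{P}(K_j\ge k)=\rho^{k-1}/k$ rather than your $\rho^{k-1}$, but this changes none of the orders.
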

The proof of \cref{thm:randomtruncated} primarily revolves around controlling the bias, variance, and boundedness of the statistic
\[
\tilde{Z}_j(m) = \frac{1}{\beta}\sum_{k=1}^{K_j}\frac{\prod_{r=1}^k y_r^j - \prod_{r=1}^k (1-y_r^j)}{k\mathbb{P}(K_j \geq k)},
\]
which then allows us to apply Bernstein's Inequality to obtain concentration bounds after averaging over contexts. 

To start, we can write that 
\[
\mathbb{E}[\tilde{Z}_j(m)|i^j,t^j] = \frac{1}{\beta}\sum_{k=1}^d \frac{p_j(m)^k - (1-p_j(m))^k}{k},
\]
where $p_j(m) = \sigma(\beta(u_{i^j}(t^j,m) - u_{i^j}(t^j,m_0)))$. Note that the conditional expectation here is the same as that of the estimates involved in $\cA_{\mathrm{MLE-UNCONST}}$. However, while the MLE algorithm deterministically uses all $d$ comparisons to compute its estimate, the Russian Roulette algorithm randomizes the number of terms it computes according to $K_j$. 

Controlling the bias of $\tilde{Z}_j(m)$ amounts to bounding the higher-order (larger than $d$) terms of the series, which we show is at most 
\[
\frac{2\sigma(\beta)^{d+1}}{\beta(d+1)(1-\sigma(\beta))}.
\]
Then, choosing $d = O\left(\log \frac{1}{\eps}\right)$ guarantees that the bias is at most $\eps/8$. This is the important step that reveals the logarithmic dependence on $1/\eps$ in the repetition budget. We bound the variance by controlling the second moment of $\tilde{Z}_j(m)$ via an application of Minkowski's Inequality, showing that 
\[
\sqrt{\mathbb{E}[\tilde{Z}_j(m)^2|i^j,t^j]} = O(1).
\]
Finally, boundedness of $\tilde{Z}_j(m)$ primarily arises from the fact that $K_j$ is truncated at $d$. From here, we apply the simply aggregation procedure to empirically average these $\tilde{Z}_j(m)$ estimates over contexts, and applying Bernstein's Inequality to these empirical averages leads to the desired ranking guarantee.

\cref{thm:randomtruncated} reveals a benefit of our truncated Russian Roulette algorithm compared to the MLE-based approach --- the number of per-context repeated comparisons it actually uses is constant in expectation. This result is an artifact of randomizing the number of comparisons, $K_j$, that our estimator actually uses within every context $j$. 

There is one caveat: the truncation level involved in obtaining the truncated statistic must be set as a function of $\eps$, meaning that we must have some intuition about the minimum average utility gap prior to defining the estimator. In instances where knowing $\Delta$ a priori is infeasible, we can actually omit the truncation level entirely and consider the untruncated Russian roulette estimator that we initially presented, obtaining the same guarantees as \cref{thm:randomtruncated} and \cref{cor:TRRsamples} in expectation. We defer a more detailed presentation of the results to \cref{appendix:randomizedestimator}.

\subsection{Lower Bound on Repeated Comparisons}
The latter two MLE-based algorithms and the truncated Russian roulette algorithm succeed with a repetition budget depending logarithmically on $1/\eps$. Is there some other algorithm that could achieve a better guarantee? We present a quantitative information-theoretic lower bound  for our setting to show that the guarantees from \cref{prop:mle} and \cref{thm:randomtruncated} are tight. 
\begin{restatable}{theorem}{repeatedcomparisonslowerbound}\label{thm:repeatedcomparisonslowerbound}
    Fix $\delta \in (0,1/2)$ and $\eps \in (0,1)$. Then,
    \[
    d^*(\eps,\delta) = \Theta\left(\log\frac{1}{\eps}\right).
    \]
\end{restatable}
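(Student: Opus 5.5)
The upper bound $d^\star(\eps,\delta)\le O(\log(1/\eps))$ is immediate from \cref{prop:mle} (or \cref{thm:randomtruncated}), since $d^\star$ is an infimum over algorithms. The work is therefore the matching lower bound $d^\star(\eps,\delta)\ge \Omega(\log(1/\eps))$. My plan is a two-instance indistinguishability argument: construct two problem instances, both with $\Delta\ge\eps$ but with opposite orderings of two models, such that for $d\le c\log(1/\eps)$ the distribution of a single context's observation is \emph{identical} under both instances. Then the datasets of any size $n$ have the same law under both, so no algorithm can output the correct ranking with probability at least $1-\delta>1/2$ on both.

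I would work with $|\cM|=2$ (or two models plus dummies placed far away, to respect the $\Delta\ge\eps$ requirement for all pairs). Fix the context distribution so that the within-context utility difference $L=u_i(t,m)-u_i(t,m')$ is a random variable supported in $[-1,1]$. The law of one context's outcome depends only on the law of the number of wins $W\sim\mathrm{Bin}(d,\sigma(\beta L))$, i.e.\ on the moments $\mathbb{E}[p^k(1-p)^{d-k}]$ for $k=0,\dots,d$, where $p=\sigma(\beta L)$. Equivalently, it depends only on $\mathbb{E}[p^k]$ for $k=0,\dots,d$. Randomness in which pair is drawn and in $\cA$ is irrelevant, since everything else is instance-independent. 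So I need two distributions $\mu_+,\mu_-$ of $p$ on $I=[\sigma(-\beta),\sigma(\beta)]$ that agree on their first $d$ moments, yet have $\mathbb{E}_{\mu_+}[\sigma^{-1}(p)]/\beta\ge\eps$ and $\mathbb{E}_{\mu_-}[\sigma^{-1}(p)]/\beta\le-\eps$.

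The construction is by duality between moment matching and polynomial approximation. Let $f(p)=\frac1\beta\log\frac{p}{1-p}$, which is analytic on a neighborhood of $I$ bounded away from $0$ and $1$. Let $E_d=\min_{q\in\mathcal{P}_d}\|f-q\|_{\infty,I}$ be the best uniform approximation error by degree-$d$ polynomials. By the standard Hahn–Banach/LP duality,
\[
\sup\Bigl\{\mathbb{E}_{\mu_+}f-\mathbb{E}_{\mu_-}f:\ \mu_\pm \text{ prob. on } I,\ \mathbb{E}_{\mu_+}p^k=\mathbb{E}_{\mu_-}p^k,\ k\le d\Bigr\}=2E_d.
\]
Because $f$ is not a polynomial, $E_d>0$. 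Its singularities at $p=0,1$ lie a fixed distance from $I$, so Bernstein's theorem on Chebyshev approximation gives $E_d\ge c_1\rho_\beta^{-d}$ for some $\rho_\beta>1$. The lower bound is the delicate direction. I would obtain it either by invoking the converse Bernstein theorem (if $E_d$ decayed faster than $\rho^{-d}$ for every $\rho$, then $f$ would extend analytically to a larger Bernstein ellipse, contradicting the poles of $f'$ at $0,1$), or by exhibiting explicit Chebyshev-alternation measures. Choosing $d$ with $2E_d\ge 2\eps$ then yields $d\ge c\log(1/\eps)$ as the threshold below which matching measures exist.

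Finally I shift the gap to be symmetric. Take matching measures $\mu_\pm$ with $\mathbb{E}_{\mu_+}f-\mathbb{E}_{\mu_-}f\ge 2\eps$. Mixing both with a common point mass at a suitable $p_0$ shifts the means equally while preserving moment equality, which makes the means straddle $0$ with $|\cdot|\ge\eps'$, at the cost of constant factors in $\eps$. Realize each measure as a context distribution, with utilities in $[0,1]$ attaining the differences $L=f(p)$. Under this construction the two instances have $\pi^\star$ reversed, while the dataset law is identical for every $n$. Hence $\mathbb{P}(\mathrm{RankErr}=0)\le 1/2<1-\delta$ on one of them, giving $d_{\cA}(\eps,\delta)>c\log(1/\eps)$ for every $\cA$. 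The main obstacle I expect is the quantitative lower bound $E_d\ge\rho_\beta^{-d}$, together with ensuring the matched measures remain supported in $I$ (i.e.\ $|L|\le1$). That is why I would work on the compact $I$ directly and cite Bernstein's inverse theorem.
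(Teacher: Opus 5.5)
Your reduction is the same as the paper's. Equal moments up to order $d$ make one context's $d$ comparisons identically distributed, and the achievable gap is governed by the best-approximation error $E_d$. The paper builds $\mu_\pm$ from the alternation points of the best approximant and symmetrizes with $p\mapsto 1-p$; this is interchangeable with your Hahn--Banach duality plus point-mass mixing. Two steps, however, do not go through as you state them.

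\textbf{The dataset law is not identical for every $n$.} In this model each record contains the identities $(i^j,t^j)$. Under a fixed instance, $y^j$ given $(i^j,t^j)$ is $\mathrm{Bern}(p(i^j,t^j))^{\otimes d}$. So the joint law of $(i^j,t^j,y^j)$ determines $\cD$, the task distributions, and $p(\cdot,\cdot)$ almost everywhere, and hence the average gap. Two fixed instances with opposite rankings therefore never induce the same law. With your finitely supported $\mu_\pm$ placed on finitely many users it is worse: for large $n$ the same user recurs, and an algorithm can pool that user's comparisons and estimate each $p_i$.

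The missing ingredient is to use that the instance may be chosen after $n$ (Definition~\ref{def:repbudget}) and to average over instances. Take $\cD$ uniform on $N\gg n^2$ users whose types $p_1,\dots,p_N$ are drawn once, i.i.d.\ from $\mu_\pm$. Conditional on no repeated user, which has probability at least $1-\binom{n}{2}/N$, the labels are uninformative and the two dataset laws coincide. Hoeffding's inequality makes the realized gap at least $\eps$ under $\mu_+$ and at most $-\eps$ under $\mu_-$ with high probability; this needs the strict slack $\gamma_d>\eps$. A two-point bound on the averaged error then contradicts $\delta<1/2$ as $N\to\infty$.

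\textbf{Your route to the quantitative bound does not deliver it.} The direct Bernstein theorem gives the upper bound $E_d\le C\rho^{-d}$. The converse theorem only gives $\limsup_d E_d^{1/d}=\tanh(\beta/4)$, i.e.\ $E_d\ge\rho^{-d}$ for infinitely many $d$. That is not enough: functions with lacunary Chebyshev series have $E_d$ far below $\rho^{-d}$ on long ranges of $d$. You would then get $d^\star(\eps,\delta)\ge c\log(1/\eps)$ only along a sequence of $\eps$, not for all small $\eps$.

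You need a bound valid for every $d$, and that uses the specific form of the logit. With $p=\frac12\bigl(1+\tanh(\beta/2)\cos\theta\bigr)$ one has $h(p)=\frac{4}{\beta}\sum_{k\ \mathrm{odd}}\frac{r_\beta^k}{k}\cos k\theta$, where $r_\beta=\tanh(\beta/4)$. Every polynomial of degree at most $d$ has zero coefficient on $T_{k_d}$, where $k_d$ is the smallest odd integer exceeding $d$. Chebyshev coefficients satisfy $|a_k(g)|\le 2\|g\|_\infty$. Hence $E_d\ge 2r_\beta^{k_d}/(\beta k_d)$ for all $d$. Since a feasible $d$ must have $E_d\le\eps$, this inverts to $d=\Omega(\log(1/\eps))$ uniformly in $\eps$.
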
 

The lower bound construction works by simply considering two models $A$ and $B$ and one task $t$. In particular, for a user $i$, let $p_i$ denote the BT probability that $i$ prefers $A$ to $B$. Then, as we discussed earlier, we can write the utility difference between $A$ and $B$ for this user as $u_i(t,A) - u_i(t,B) = (1/\beta)\log(p_i/(1-p_i)).$
In this setting, users can be entirely identified by their corresponding BT probability, so the average utility difference is just the expectation of the right-hand side over $p_i$. Our lower bound constructs two population distributions for $p_i$, $\mu_+$ and $\mu_-$, that are indistinguishable given only $d$ repeated comparisons but for which the average utility difference has opposite signs, indicating that the ranking of models is flipped. The key insight involved with constructing these distributions is that conditional on $p_i$, a sequence of $d$ comparisons involving $w$ wins by $A$ has probability $p_i^w(1-p_i)^{d-w}$,
which is a polynomial of degree at most $d$. Consequently, in order to ensure that $\mu_+$ and $\mu_-$ are indistinguishable from $d$ comparisons, it suffices to guarantee that they share the same first $d$ moments. However, since $\frac{1}{\beta}\log(p_i/(1-p_i))$ is not a polynomial, its expectation under $p$ sampled from $\mu_+$ versus $p$ sampled from $\mu_-$ can be different, even having opposite signs. \cref{fig:lowerbound} illustrates this idea for the $d=1$ case.

\begin{figure}[t]
    \centering

    \begin{subfigure}[t]{0.34\textwidth}
        \centering
        \begin{tikzpicture}
        \begin{axis}[
            width=0.96\linewidth,
            height=0.74\linewidth,
            xmin=0.05, xmax=0.75,
            ymin=0, ymax=1.08,
            xtick={0.3,0.4,0.5,0.6,0.7},
            ytick={0,0.2,0.4,0.6,0.8,1.0},
            xlabel={Bradley--Terry win probability \(p\)},
            ylabel={population mass},
            axis lines=left,
            tick align=outside,
            tick style={black},
            xlabel style={font=\scriptsize},
            ylabel style={font=\scriptsize},
            tick label style={font=\scriptsize},
            title={\(\mu_{+}\)},
            title style={font=\normalsize, yshift=1pt},
            clip=false,
        ]

        \addplot[
            black,
            very thick,
            ycomb,
            mark=*,
            mark size=3pt
        ]
        coordinates {
            (0.4,0.6667)
            (0.7,0.3333)
        };

        \node[font=\scriptsize] at (axis cs:0.4,0.77) {\(2/3\)};
        \node[font=\scriptsize] at (axis cs:0.7,0.43) {\(1/3\)};

        \node[align=center, font=\scriptsize] at (axis cs:0.50,1.02) {
            \(\mathbb E[p]=0.5\)\\[1pt]
            \(\mathbb E[\operatorname{logit}(p)]=+0.0121\)
        };

        \end{axis}
        \end{tikzpicture}
    \end{subfigure}
    \hspace{0.045\textwidth}
    \begin{subfigure}[t]{0.34\textwidth}
        \centering
        \begin{tikzpicture}
        \begin{axis}[
            width=0.96\linewidth,
            height=0.74\linewidth,
            xmin=0.05, xmax=0.75,
            ymin=0, ymax=1.08,
            xtick={0.3,0.4,0.5,0.6,0.7},
            ytick={0,0.2,0.4,0.6,0.8,1.0},
            xlabel={Bradley--Terry win probability \(p\)},
            ylabel={population mass},
            axis lines=left,
            tick align=outside,
            tick style={black},
            xlabel style={font=\scriptsize},
            ylabel style={font=\scriptsize},
            tick label style={font=\scriptsize},
            title={\(\mu_{-}\)},
            title style={font=\normalsize, yshift=1pt},
            clip=false,
        ]

        \addplot[
            black,
            very thick,
            ycomb,
            mark=*,
            mark size=3pt
        ]
        coordinates {
            (0.3,0.3333)
            (0.6,0.6667)
        };

        \node[font=\scriptsize] at (axis cs:0.3,0.44) {\(1/3\)};
        \node[font=\scriptsize] at (axis cs:0.6,0.77) {\(2/3\)};

        \node[align=center, font=\scriptsize] at (axis cs:0.50,1.02) {
            \(\mathbb E[p]=0.5\)\\[1pt]
            \(\mathbb E[\operatorname{logit}(p)]=-0.0121\)
        };

        \end{axis}
        \end{tikzpicture}
    \end{subfigure}

    \caption{Visual for \cref{thm:repeatedcomparisonslowerbound}
    lower bound construction in the \(d=1\) case.}
    \label{fig:lowerbound}
\end{figure}

In order to obtain a quantitative guarantee, we have to control the error 
\[
E_d := \inf_{P:\text{deg}(P) \leq d} \sup_{p \in [\sigma(-\beta),1-\sigma(-\beta)]}\abs{\frac{1}{\beta}\log \frac{p}{1-p} - P(p)}
\]
corresponding to the best degree-$d$ polynomial approximation of the context-level utility difference. This is precisely the relevant quantity for our construction because we design $\mu_+$ and $\mu_-$ to have the same first $d$  moments and induce average utility gaps of $E_d$ and $-E_d$, respectively. We subsequently obtain the lower bound 

$E_d = \Omega\left(\tanh(\beta/4)^d/d \right)$
by analyzing the Chebyshev expansion of the logit function. Now, if $E_d \geq \eps$, then no algorithm can uniformly succeed on problem instances derived from $\mu_+$ and $\mu_-$ with probability greater than $1/2$. Thus, we must have $E_d < \eps$, which gives our desired lower bound for $d$.

\section{Experiments}
We run a series of synthetic and semi-synthetic Arena-based experiments comparing our four algorithms.
\paragraph{Synthetic Experiment Design.} Our first main synthetic experiment considers ranking five models with a fixed average-utility ranking and $\Delta = 0.015$. We then use a parameter $h \in [0,1]$ to interpolate between a setting where individual utility functions are homogeneous ($h=0$) and one where there is a large amount of heterogeneity $(h=1)$, without changing the target ranking or average utility levels. We examine the repetition budget of the four algorithms across a grid of $h$ values. Our second experiment seeks to examine the dependence on $1/\Delta$ predicted by the theoretical results. We fix $h=1$ and vary $\Delta$ from 0.06 to 0.002, comparing how the repetition budgets of the algorithms change as the minimum average utility gap decreases. 

\paragraph{Semi-Synthetic Experiment Design.} We construct a semi-synthetic population from the "arena-human-preference-140k" dataset \citep{chatbotarena} by clustering prompts into 30 groups. Within each group, we fit a Bradley-Terry model to the votes associated with its prompts, resulting in scores for each LLM varying across prompt clusters. We then select five LLMs and use these fitted scores to construct utility profiles. Similar to the first experiment, we keep a fixed average utility ranking and $\Delta=0.005$. Now, rather than interpolating between different heterogeneity levels, we interpolate between a benign task distribution and a hard distribution optimized against $\cA_{\mathrm{MLE-AVG}}$. The benign distribution is constructed simply by taking the empirical weights of the prompt clusters from the dataset. And importantly, the reweighting in the hard distribution preserves every model's population-average utility. Using an interpolation parameter $\lambda \in [0,1]$, we examine how the repetition budget of the four algorithms varies across different values of $\lambda$.

\begin{figure*}[!t]
  \centering
  \resizebox{0.8\textwidth}{!}{%
    \begin{tikzpicture}
\begin{groupplot}[
  paperaxis,
  group style={group size=2 by 1,horizontal sep=1.0cm},
  width=6.9cm,height=4.6cm,
  xmin=-0.035,xmax=1.035,ymax=800,
  xtick={0,.2,.4,.6,.8,1},
  xticklabels={$0$,$0.2$,$0.4$,$0.6$,$0.8$,$1$},
  ytick={1,10,100},
  ylabel={Repeated comparisons per context},
]
\nextgroupplot[xlabel={Heterogeneity $h$}]
\addplot[avgline] coordinates {
  (0.0,1.0)
  (0.1,1.0)
  (0.2,1.0)
  (0.3,1.0)
  (0.4,1.0)
  (0.5,1.0)
  (0.6,1.0)
  (0.7,1.0)
  (0.8,384.0)
  (0.9,384.0)
  (1.0,512.0)
};
\addplot[calline] coordinates {
  (0.0,12.0)
  (0.1,12.0)
  (0.2,12.0)
  (0.3,12.0)
  (0.4,12.0)
  (0.5,12.0)
  (0.6,12.0)
  (0.7,12.0)
  (0.8,12.0)
  (0.9,12.0)
  (1.0,12.0)
};
\addplot[unconstline] coordinates {
  (0.0,1.0)
  (0.1,1.0)
  (0.2,1.0)
  (0.3,1.0)
  (0.4,1.0)
  (0.5,1.0)
  (0.6,1.0)
  (0.7,1.0)
  (0.8,4.0)
  (0.9,4.0)
  (1.0,4.0)
};
\addplot[trrexpline] coordinates {
  (0.0,1.0)
  (0.1,1.0)
  (0.2,1.0)
  (0.3,1.0)
  (0.4,1.0)
  (0.5,1.0)
  (0.6,1.0)
  (0.7,1.0)
  (0.8,1.8445791535340528)
  (0.9,1.8445791535340528)
  (1.0,1.8445791535340528)
};
\addplot[trrmaxline] coordinates {
  (0.0,1.0)
  (0.1,1.0)
  (0.2,1.0)
  (0.3,1.0)
  (0.4,1.0)
  (0.5,1.0)
  (0.6,1.0)
  (0.7,1.0)
  (0.8,4.0)
  (0.9,4.0)
  (1.0,4.0)
};

 \node[anchor=north west,font=\bfseries\fontsize{10}{11}\selectfont]
   at (rel axis cs:0.035,0.97) { Synthetic};
\nextgroupplot[xlabel={Task Distribution Mixture Weight $\lambda$},ylabel={}]
\addplot[avgline] coordinates {
  (0.0,1.0)
  (0.1,1.0)
  (0.2,1.0)
  (0.3,1.0)
  (0.4,1.0)
  (0.5,1.0)
  (0.6,1.0)
  (0.7,1.0)
  (0.8,1.0)
  (0.9,96.0)
  (1.0,96.0)
};
\addplot[calline] coordinates {
  (0.0,16.0)
  (0.1,16.0)
  (0.2,16.0)
  (0.3,16.0)
  (0.4,16.0)
  (0.5,16.0)
  (0.6,16.0)
  (0.7,16.0)
  (0.8,16.0)
  (0.9,16.0)
  (1.0,16.0)
};
\addplot[unconstline] coordinates {
  (0.0,1.0)
  (0.1,1.0)
  (0.2,1.0)
  (0.3,1.0)
  (0.4,1.0)
  (0.5,1.0)
  (0.6,1.0)
  (0.7,1.0)
  (0.8,1.0)
  (0.9,4.0)
  (1.0,4.0)
};
\addplot[trrexpline] coordinates {
  (0.0,1.0)
  (0.1,1.0)
  (0.2,1.0)
  (0.3,1.0)
  (0.4,1.0)
  (0.5,1.0)
  (0.6,1.0)
  (0.7,1.0)
  (0.8,1.0)
  (0.9,1.8445791535340528)
  (1.0,1.8445791535340528)
};
\addplot[trrmaxline] coordinates {
  (0.0,1.0)
  (0.1,1.0)
  (0.2,1.0)
  (0.3,1.0)
  (0.4,1.0)
  (0.5,1.0)
  (0.6,1.0)
  (0.7,1.0)
  (0.8,1.0)
  (0.9,4.0)
  (1.0,4.0)
};

 \node[anchor=north west,font=\bfseries\fontsize{10}{11}\selectfont]
   at (rel axis cs:0.035,0.97) {Arena};
\end{groupplot}
\node[anchor=south,inner sep=0pt] at
 ($(group c1r1.north west)!0.5!(group c2r1.north east)+(0,0.28cm)$) {\paperlegend};
\end{tikzpicture}%
  }
  \caption{\textbf{Left:} Synthetic populations with varying heterogeneity $h$ and fixed $\Delta = 0.015$. \textbf{Right:} Arena-based populations interpolating between empirical and optimized task distributions, with fixed $\Delta = 0.005$. Population-average utilities remain fixed within each graph. We distinguish between the expected and maximum number of repeated comparisons of $\cA_{\mathrm{TRR}}$ via the solid and dotted lines, respectively.}
  \label{fig:repetition-budget}
\end{figure*}

\paragraph{Results.} The graphs in \cref{fig:repetition-budget} indicate that the repetition budget of $\cA_{\mathrm{MLE-AVG}}$ can significantly degrade under harder problem instances, while the other three algorithms are robust to changes in heterogeneity levels or the underlying task distribution. \cref{fig:change-delta} concerns the second synthetic experiment and reveals that, as expected, the repetition budget of $\cA_{\mathrm{MLE-AVG}}$ has a much sharper dependence on $1/\Delta$ compared to the other three algorithms. Additionally, the expected number of comparisons of $\cA_{\mathrm{TRR}}$ only varies between 1 and 2.24 across values of $1/\Delta$. For a detailed description of the methodology and results, we refer the reader to \cref{appendix:experiments}.

\begin{figure*}[!t]
  \centering
  \resizebox{0.55\textwidth}{!}{%
    \begin{tikzpicture}
\begin{groupplot}[
  group style={group name=gapgroup,group size=1 by 1},
  paperaxis,
  width=12.0cm,height=5.0cm,
  xmode=log,log basis x=10,
  xmin=15,xmax=550,ymax=45000,
  xtick={20,50,100,200,500},
  xticklabels={$20$,$50$,$100$,$200$,$500$},
  ytick={1,10,100,1000,10000},
  xlabel={Inverse minimum utility gap $1/\Delta$},
  ylabel={Repeated comparisons per context},
]
\nextgroupplot
\addplot[avgline] coordinates {
  (16.666666666666668,1.0)
  (25.0,1.0)
  (33.333333333333336,1.0)
  (50.0,1.0)
  (66.66666666666667,512.0)
  (100.0,1024.0)
  (133.33333333333334,2048.0)
  (200.0,4096.0)
  (333.3333333333333,12288.0)
  (500.0,24576.0)
};
\addplot[calline] coordinates {
  (16.666666666666668,12.0)
  (25.0,12.0)
  (33.333333333333336,12.0)
  (50.0,12.0)
  (66.66666666666667,12.0)
  (100.0,12.0)
  (133.33333333333334,16.0)
  (200.0,16.0)
  (333.3333333333333,16.0)
  (500.0,16.0)
};
\addplot[unconstline] coordinates {
  (16.666666666666668,1.0)
  (25.0,1.0)
  (33.333333333333336,1.0)
  (50.0,1.0)
  (66.66666666666667,4.0)
  (100.0,8.0)
  (133.33333333333334,8.0)
  (200.0,8.0)
  (333.3333333333333,12.0)
  (500.0,12.0)
};
\addplot[trrexpline] coordinates {
  (16.666666666666668,1.0)
  (25.0,1.0)
  (33.333333333333336,1.0)
  (50.0,1.0)
  (66.66666666666667,1.8445791535340528)
  (100.0,2.143326669400975)
  (133.33333333333334,2.143326669400975)
  (200.0,2.143326669400975)
  (333.3333333333333,2.24405102272833)
  (500.0,2.24405102272833)
};
\addplot[trrmaxline] coordinates {
  (16.666666666666668,1.0)
  (25.0,1.0)
  (33.333333333333336,1.0)
  (50.0,1.0)
  (66.66666666666667,4.0)
  (100.0,8.0)
  (133.33333333333334,8.0)
  (200.0,8.0)
  (333.3333333333333,12.0)
  (500.0,12.0)
};

\end{groupplot}
\node[anchor=south,inner sep=0pt] at
 ($(gapgroup c1r1.north west)!0.5!(gapgroup c1r1.north east)+(0,0.28cm)$) {\paperlegend};
\end{tikzpicture}%
  }
  \caption{Synthetic populations with varying $\Delta$ and fixed $h = 1$ and population-average utility ranking.  We distinguish between the expected and maximum number of repeated comparisons of $\cA_{\mathrm{TRR}}$ via the solid and dotted line, respectively.}
  \label{fig:change-delta}
\end{figure*}
\section{Acknowledgments}
Han Shao acknowledges support from an Adobe Research gift.
\newpage
\bibliographystyle{plainnat}
\bibliography{ref}
\newpage
\appendix
\newpage
\section{Additional Related Works}
Classical work on ranking from pairwise comparisons primarily studies how the total number and topology of comparisons affect estimation under a common latent score vector. \citet{shah2015} derive sharp minimax bounds for pairwise-comparison models such as Bradley–Terry and characterize how estimation error depends on the comparison graph. \citet{shah2018} study exact ranking and top-\(k\) recovery under more general pairwise-comparison matrices. In these settings, however, there is a shared underlying score vector. Our setting allows latent rankings to vary across contexts arbitrarily rather than treating variation in pairwise comparisons as noise surrounding a global ranking.

When considering heterogeneity of users, some previous works address the issue by instead reconsidering the metric that we seek to estimate or report. \citet{frick2025prompttoleaderboard} contrasts with LM Arena, outputting multiple leaderboards, each catered to a single prompt or collection of prompts, rather than a single global ranking. \citet{haghtalab2026pluralistic} employ ideas from social choice theory to output a leaderboard that remains stable with respect to heterogeneous user preferences. Our approach is complementary to these --- we fix average utility as the target of choice, remaining agnostic about its efficacy as an appropriate aggregation criterion. Instead, we study its sample complexity of ranking recovery under unrestricted user utility functions. In contrast, \cite{aouad2026} study a similar problem involving recovering average utility but place symmetry assumptions on the users' heterogeneous utility functions.

In the alignment setting, the high cost of obtaining pairwise comparisons has motivated approaches which use dueling bandits to model the alignment problem and employ active learning approaches to reduce the number of required comparisons \citep{ji2025reinforcement,mehta2025sample}. 
\newpage
\section{General Lemmas}
Below we state a lemma that is used for multiple proofs in the appendix. 
\begin{lemma} \label{lem:controlobs}
Fix $r \geq 1$, $\delta \in (0,1)$, and a dataset $S$ generated by parameters $(n,d)$. Let $\cC = \Unif(\{\{m,m_0\}:m \in \cM \setminus m_0\})$. Then, if we obtain
\[
n \geq \max\left\{2(|\cM|-1)r\;,\;8(|\cM|-1)\log\frac{|\cM|-1}{\delta} \right\}
\]
sampled contexts, we have that 
\[
\mathbb{P}(|\{j \in [n]: \{m_1^j,m_2^j\} = \{m,m_0\}\}| \geq r) \geq 1-\delta
\]
simultaneously for all $m \neq m_0$.
\end{lemma}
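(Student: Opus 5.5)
Let $k := |\cM|-1$ and fix $m \neq m_0$. Under $\cC = \Unif(\{\{m,m_0\}: m \in \cM\setminus m_0\})$, each context independently draws the pair $\{m,m_0\}$ with probability $1/k$. The count
\[
N_m := |\{j \in [n]: \{m_1^j,m_2^j\} = \{m,m_0\}\}|
\]
is therefore distributed as $\mathrm{Binomial}(n,1/k)$, with mean $\mu := n/k$. The plan is to apply a multiplicative Chernoff lower-tail bound to each $N_m$ and then take a union bound over the $k$ non-reference models.

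The Chernoff step uses the bound with deviation $1/2$: $\mathbb{P}(N_m \le \mu/2) \le \exp(-\mu/8)$. The two terms in the hypothesis on $n$ each handle one requirement.
\begin{itemize}
\item The first condition, $n \ge 2kr$, gives $\mu/2 \ge r$. Hence $\{N_m < r\} \subseteq \{N_m < \mu/2\}$, and so $\mathbb{P}(N_m < r) \le \exp(-\mu/8)$.
\item The second condition, $n \ge 8k\log(k/\delta)$, gives $\mu/8 \ge \log(k/\delta)$. Hence $\exp(-\mu/8) \le \delta/k$.
\end{itemize}

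A union bound over the $k$ models $m \ne m_0$ then bounds the probability that some $N_m < r$ by $\delta$. This is exactly the claim, holding simultaneously for all $m \neq m_0$. (Strict versus non-strict inequality at $\mu/2$ does not matter, since the Chernoff bound covers $N_m \le \mu/2$.)

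There is no real obstacle here. The only point needing care is to use the Chernoff form whose constant, $\exp(-\mu\eta^2/2)$ at $\eta = 1/2$, yields exactly the factor $8$ in the hypothesis.
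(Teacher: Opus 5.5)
Your proposal is correct and follows essentially the same argument as the paper. Both model each count as $\mathrm{Binomial}(n,1/(|\cM|-1))$, use $n\ge 2(|\cM|-1)r$ to reduce $\{N_m<r\}$ to the lower tail below half the mean, apply the multiplicative Chernoff bound $\exp(-\mu/8)$, and union bound over the $|\cM|-1$ non-reference models using the second condition on $n$.
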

\begin{proof}
Fix $m \neq m_0$. Let 
\[
I_m := \{j \in [n]: \{m_1^j,m_2^j\} = \{m,m_0\}\}
\]
Then by definition, $|I_m| \sim \text{Binomial}\left(n,\frac{1}{|\cM|-1}\right)$. Then, if we take $n \geq 2(|\cM|-1)r$, we have that 
\[
\frac{1}{2}\mathbb{E}[|I_m|] = \frac{1}{2}\frac{n}{|\cM|-1} \geq r.
\]
Applying a multiplicative Chernoff bound gives,
\[
\mathbb{P}(|I_m| < r) \leq \mathbb{P}\left(|I_m| < \frac{\mathbb{E}[|I_m|]}{2}\right) \leq \exp\left(-\frac{\mathbb{E}[|I_m|]}{8}\right) = \exp\left(-\frac{n}{8(|\cM|-1)}\right)
\]
Taking a union bound over the $|\cM|-1$ non-reference models gives that simultaneously for all $m \neq m_0$,
\[
\mathbb{P}(|I_m| < r) \leq (|\cM|-1)\exp\left(-\frac{n}{8(|\cM|-1)}\right)
\]
Taking $n \geq 8(|\cM|-1)\log\frac{|\cM|-1}{\delta}$ guarantees that the right hand side is at most $\delta$, concluding the proof.
\end{proof}
\newpage

\section{MLE-Based Algorithms}\label{appendix:mle}
We start by presenting the common setup of each of the MLE-based algorithms. 
Fix an arbitrary reference model $m_0 \in \mathcal{M}$. It suffices to estimate $\bar{u}(m) - \bar{u}(m_0)$. Given a dataset $S$ with parameters $(n,d)$, define the negative log-likelihood
\[
\begin{aligned}
L_S(u)
= -\sum_{j=1}^n\sum_{r=1}^d \Bigl[
y_r^j\log\sigma\!\left(
\beta\bigl(
u(i^j,t^j,m_1^j)-u(i^j,t^j,m_2^j)
\bigr)
\right)  +(1-y_r^j)\log\sigma\!\left(
\beta\bigl(
u(i^j,t^j,m_2^j)-u(i^j,t^j,m_1^j)
\bigr)
\right)
\Bigr].
\end{aligned}
\]
Now, for each $m \neq m_0$, let 
\[
I_m := \{j \in [n]: \{m_1^j,m_2^j\} = \{m,m_0\}\}
\]
denote the set of sampled contexts in which $m$ is compared to $m_0$. 

\subsection{Formal Definition of $\cA_{\mathrm{MLE-AVG}}$} \label{mle-avg-defn}
\paragraph{Estimator.} We consider the constrained MLE estimate:
\[
\widehat{v} \in \argmin_{\substack{v:\cU\times\cT\times\cM\to[-1,1] \\v(i,t,m_0)=0, \forall (i,t)}} L_{S}(v).
\]
We choose to use $\widehat{v}$ instead of $\widehat{u}$ to denote the estimate here to avoid confusion, since we are now estimating utility \textit{differences} relative to a reference model instead of utilities directly. 

\paragraph{Aggregation.} Now, we estimate the average utility relative to $m_0$ by taking the empirical average over contexts. In particular, for $m \neq m_0$
\[
\widehat{g}_{\mathrm{MLE-AVG}}(m) = \frac{1}{|I_m|} \sum_{j \in I_m} \hat{v}(i^j,t^j,m)
\]
for each $m \neq m_0$. Define $\widehat{g}_{\mathrm{MLE-AVG}}(m_0) = 0$ and for completeness, if $I_m$ is empty, define $\widehat{g}_{\mathrm{MLE-AVG}}(m) = 0$. The corresponding ranking algorithm, $\cA_{\mathrm{MLE-AVG}}$ ranks models in decreasing order of $\widehat{g}_{\mathrm{MLE-AVG}}(m)$, breaking ties according to the fixed ordering of $\cM$.

\subsection{Formal Definition of $\cA_{\mathrm{MLE-CAL}}$} \label{mle-cal-defn}

\paragraph{Estimator.} We again consider the constrained MLE estimate:
\[
\widehat{v} \in \argmin_{\substack{v:\cU\times\cT\times\cM\to[-1,1] \\v(i,t,m_0)=0, \forall (i,t)}} L_{S}(v).
\]

\paragraph{Aggregation.} To account for the bias that comes from directly averaging the MLE estimates, we calibrate them first. In particular, for fixed $\epsilon \in (0,1)$, define $\phi_d: [-1,1]\times(0,1) \ra \mathbb{R}$ to be a solution to 
\[
\begin{aligned}
\phi_d \in \arg\min_{\phi:[-1,1]\times(0,1)\to\mathbb{R}}
\quad&
\sup_{x\in[-1,1]} |\phi(x,\eps)| \\
\text{subject to}\quad&
\sup_{D\in[-1,1]}
\left|
\mathbb{E}\left[
\phi\left(\widehat v(i^j,t^j,m),\eps\right)
\middle|
u_{i^j}(t^j,m)-u_{i^j}(t^j,m_0)=D
\right]
-D
\right|
\le
\frac{\varepsilon}{8}.
\end{aligned}
\]

Note that for fixed $d$, computing $\phi_d$ actually reduces to an offline finite-dimensional convex optimization problem because the MLE estimates only take finitely many values. We discuss the implementation in detail in \cref{appendix:experiments}.

Now that we have defined the calibration functions, we can take the empirical average over $I_m$ for each $m \neq m_0$: 
\[
\widehat{g}_{\mathrm{MLE-CAL}}(m) = \frac{1}{|I_m|} \sum_{j \in I_m} \phi_d(\hat{v}(i^j,t^j,m),\eps)
\]
for each $m \neq m_0$. Define $\widehat{g}_{\mathrm{MLE-CAL}}(m_0) = 0$ and for completeness, if $I_m$ is empty, define $\widehat{g}_{\mathrm{MLE-CAL}}(m) = 0$. The corresponding ranking algorithm, $\cA_{\mathrm{MLE-CAL}}$ ranks models in decreasing order of $\widehat{g}_{\mathrm{MLE-CAL}}(m)$, breaking ties according to the fixed ordering of $\cM$.

\subsection{Formal Definition of $\cA_{\mathrm{MLE-UNCONST}}$} \label{mle-unconst-defn}
\paragraph{Estimator.} We consider the unconstrained MLE estimate 
\[
\widehat{v} \in \argmin_{{v:v(i,t,m_0)=0, \forall (i,t)}} L_{S}(v).
\]
\paragraph{Aggregation.}
In this setting, for any $m \neq m_0$ and $j \in I_m$, 
\[
\widehat{v}(i^j,t^j,m) = \frac{1}{\beta}\log \frac{W_j(m)}{d-W_j(m)},
\]
where $W_j(m)$ denotes the number of times $m$ wins in the context. Then, we can easily extract $W_j(m)$ from our estimate 
We apply a harmonic transformation prior to averaging over contexts in $I_m$. This gives the utility difference estimate 
\[
\widehat{g}_{\mathrm{MLE-UNCONST}}(m) = \frac{1}{\beta|I_m|}\sum_{j \in I_m}(H_{W_j(m)}-H_{d-W_j(m)}).
\]
Define $\widehat{g}_{\mathrm{MLE-UNCONST}}(m_0) = 0$ and for completeness, if $I_m$ is empty, define $\widehat{g}_{\mathrm{MLE-UNCONST}}(m) = 0$. The corresponding ranking algorithm, $\cA_{\mathrm{MLE-UNCONST}}$ ranks models in decreasing order of $\widehat{g}_{\mathrm{MLE-UNCONST}}(m)$, breaking ties according to the fixed ordering of $\cM$.
\subsection{Proof of \cref{prop:mleavg}}
\mleavg*
\begin{proof}
It suffices to consider two models, $\cM = \{m,m_0\}$ and one task $t$, so the task distribution is degenerate and there is only one possible model-pair sampling distribution $\cC$. For a user $i$, let $W_i \sim \text{Binomial}(d,\sigma(\beta(u_i(t,m) - u_i(t,m_0))))$ denote the number of wins for model $m$ under $d$ comparisons by user $i$. The constrained MLE has a closed form representation
\[
\hat{v}(i,t,m) = \frac{1}{\beta}\log \frac{\min \{1-\sigma(-\beta), \max\{\sigma(-\beta),W_i/d\}\}}{1-\min \{1-\sigma(-\beta), \max\{\sigma(-\beta),W_i/d\}\}}
\]
Now, we derive two bounds relating to $\mathbb{E}[\hat{v}(i,t,m)|u_i(t,m)-u_i(t,m_0)]$. For the first bound, consider $u_i(t,m)-u_i(t,m_0) = 1$. By a standard application of the Berry-Esseen Theorem \citep{berry,esseen} to Binomial random variables, there exist constants $c_1$ and $c_2$, depending only on $\beta$, such that for sufficiently large $d$
\[
\mathbb{P}\left(\frac{W_i}{d} \leq \sigma(\beta) - \frac{c_1}{\sqrt{d}} \mid u_i(t,m)-u_i(t,m_0)=1 \right) \geq c_2
\]
Now, when the true utility difference is 1, the corresponding Bradley-Terry win probability is just $\sigma(\beta)$. On this event, the truncated empirical win probability satisfies, \[\hat{p}_i \leq \sigma(\beta) - \frac{c_1}{\sqrt{d}},\] where $\hat{p}_i = \min \{1-\sigma(-\beta), \max\{\sigma(-\beta),W_i/d\}\}$. 
\[
\begin{aligned}
1-\hat{v}(i,t,m) &= \frac{1}{\beta}\log \frac{\sigma(\beta)}{1-\sigma(\beta)} - \frac{1}{\beta}\log \frac{\hat{p}_i}{1-\hat{p}_i} \\
&\geq \frac{4}{\beta}(\sigma(\beta) - \hat{p}_i) \\
& \geq \frac{4c_1}{\beta \sqrt{d}},
\end{aligned}
\]
where the first inequality follows by applying the Mean-Value Theorem to $h(p) = \frac{1}{\beta}\log \frac{p}{1-p}$ and using $p(1-p) \leq (1/4)$ for every $p \in (0,1)$. This gives the bias bound discussed in the main paper. 

Now, fix $u_i(t,m) - u_i(t,m_0) = 1/2$. The win probability for model $m$ here is $\sigma(\beta/2)$ which is a constant distance away from both truncation boundaries $\sigma(\beta)$ and $\sigma(-\beta)$ in the constrained MLE. Thus, Hoeffding's Inequality implies that on this event, the probability that $W_i/d$ reaches either of these boundaries decays exponentially with $d$. Now, if truncation does not occur, then the MLE estimate is simply $h(W_i/d)$. A second-order Taylor expansion centered around $p_i = \sigma(\beta/2)$ then gives
\[
h(W_i/d) = h(p_i) + h'(p_i)\left(\frac{W_i}{d}-p_i\right) + O\left(\left(\frac{W_i}{d}-p_i \right)^2\right).
\]
Since $\mathbb{E}[W_i/d - p_i] = 0$ and \[\mathbb{E}\left[\left(\frac{W_i}{d}-p_i \right)^2\right] = \frac{p_i(1-p_i)}{d} = O(1/d),\]
we have that 
\[
\abs{\mathbb{E}[\hat{v}(i,t,m)|u_i(t,m)-u_i(t,m_0)=1/2]-1/2} \leq \frac{C_{\beta}}{d}
\]
for some constant $C_{\beta} > 0$.

We now construct a problem instance where these two bias bounds cause $\cA_{\mathrm{MLE-AVG}}$ to output the incorrect ranking for any finite number of contexts. Choose a sufficiently small constant $\eta>0$,
depending only on $\beta$ and $\delta$, and suppose that a fraction
\[
q_+ = \frac13+\frac{\eta}{\sqrt d}
\]
of users have utility difference $1$, while the remaining users have
utility difference $-1/2$. The average utility difference between $m$
and $m_0$ is then
\[
\bar{u}(m)-\bar{u}(m_0)
=
q_+-\frac{1-q_+}{2}
=
\frac{3\eta}{2\sqrt d}.
\]
Thus, $m$ is ranked above $m_0$.

On the other hand, using the two bias bounds above,
\[
\begin{aligned}
\mathbb E[\widehat v(i,t,m)]
&\leq
q_+\left(1-\frac{c_\beta}{\sqrt d}\right)
+
(1-q_+)\left(-\frac12+\frac{C_\beta}{d}\right)\\
&=
\frac{3\eta}{2\sqrt d}
-
q_+\frac{c_\beta}{\sqrt d}
+
(1-q_+)\frac{C_\beta}{d}.
\end{aligned}
\]
Take $\eta\leq c_\beta/12$. Since $q_+\geq 1/3$, for sufficiently large
$d$ the final term is at most $c_\beta/(12\sqrt d)$, and therefore
\[
\mathbb E[\widehat v(i,t,m)]
\leq
-\frac{c_\beta}{8\sqrt d}.
\]
Hence, although $m$ has larger average utility, the quantity averaged by
$\cA_{\mathrm{MLE-AVG}}$ has negative expectation.

We next show that increasing the number of sampled contexts cannot resolve
this issue. Since every constrained MLE lies in $[-1,1]$, Hoeffding's
Inequality gives
\[
\Pr\left(
\widehat g_{\mathrm{MLE-AVG}}(m)\geq 0
\right)
\leq
\exp\left(
-\frac{c_\beta^2 n}{128d}
\right).
\]
Therefore, whenever $n\geq C_{\beta,\delta}d$ for a sufficiently large
constant $C_{\beta,\delta}$, the algorithm outputs the incorrect ranking
with probability greater than $\delta$.

It remains to rule out using fewer contexts. Consider a second population
in which the fraction of users with utility difference $1$ is
\[
q_-=\frac13-\frac{\eta}{\sqrt d},
\]
with the remaining users again having utility difference $-1/2$. Its
average utility difference is
\[
u(m)-u(m_0)
=
-\frac{3\eta}{2\sqrt d},
\]
so the two populations have opposite rankings and the same magnitude of
the average utility gap.

Even if we reveal which of the two user types was sampled in each context,
distinguishing the two populations from $n$ contexts amounts to
distinguishing $n$ Bernoulli draws with parameters $q_+$ and $q_-$. Since
$q_+$ and $q_-$ are bounded away from $0$ and $1$,
\[
D_{\mathrm{KL}}
\left(
\operatorname{Bernoulli}(q_+)
\,\middle\|\,
\operatorname{Bernoulli}(q_-)
\right)
=
O_\beta\left(\frac{\eta^2}{d}\right).
\]
Thus, for $n$ independently sampled contexts, the KL divergence is
$O_\beta(n\eta^2/d)$. Pinsker's Inequality and the standard two-point
testing bound imply that, whenever
\[
n
\leq
c_{\beta,\delta}\frac{d}{\eta^2},
\]
at least one of the two populations has ranking error greater than
$\delta$.

We may choose $\eta$ sufficiently small so that
\[
c_{\beta,\delta}\frac{d}{\eta^2}
\geq
C_{\beta,\delta}d.
\]
Hence, the small-$n$ and large-$n$ cases together cover every finite
$n$.

Finally, both populations have average utility gap
\[
\frac{3\eta}{2\sqrt d}.
\]
Therefore, whenever
\[
d
\leq
\frac{9\eta^2}{4\eps^2},
\]
their average utility gap is at least $\eps$, yet no finite choice of
$n$ gives the guarantee required by Definition~2.1. It follows that
\[
d_{\cA_{\mathrm{MLE-AVG}}}(\eps,\delta)
=
\Omega_{\beta,\delta}\left(\frac{1}{\eps^2}\right).
\]

The argument above was written for sufficiently large $d$. The finitely
many remaining values of $d$ can be absorbed by decreasing the constant
in the final lower bound. For any fixed $d$, the expected constrained MLE
is a degree-$d$ polynomial in the Bradley--Terry win probability, whereas
the true utility difference is its log-odds. Thus, the two cannot agree up
to a common multiplicative constant over all contexts, and the same
two-population testing lower bound argument gives a fixed positive utility gap on which
direct MLE averaging has the wrong sign.
\end{proof}

\subsection{Proof of \cref{prop:mle}}
\cref{prop:mle} gives results for both $\cA_{\mathrm{MLE-CAL}}$ and $\cA_{\mathrm{MLE-UNCONST}}$. We separately prove the repetition budget upper bounds in \cref{prop:mle-cal} and \cref{prop:mle-unconst}, respectively, and the theorem proof simply involves combining them.
\begin{lemma}\label{lem:defq}
There exists a constant $c_{\beta} > 0$ such that for every odd $d \geq 1$, there exists a degree-$d$ polynomial $Q_d(p)$ with Bernstein representation
\[
Q_d(p) = \sum_{w=0}^d b_w\binom{d}{w}p^w(1-p)^{d-w}
\]
satisfying
\[
\sup_{p \in [\sigma(-\beta),1-\sigma(-\beta)]}\abs{Q_d(p)-\frac{1}{\beta}\log \frac{p}{1-p}} \leq \exp\{-c_{\beta}d\}.
\]
Also, the Bernstein coefficients of $Q_d(p)$ satisfy 
\[
b_0 = \ldots = b_{\ell}, \qquad b_{d-\ell}=\ldots=b_{d},
\]
where $\ell = \lfloor \sigma(-\beta)d \rfloor$.
\end{lemma}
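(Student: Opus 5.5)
The plan is to turn the Bernstein-coefficient constraint into a factorization constraint on $Q_d'$, and then solve a weighted (incomplete) polynomial approximation problem for the derivative of the logit. Write $a:=\sigma(-\beta)<1/2$, $\ell=\lfloor ad\rfloor$, and $f(p)=\frac1\beta\log\frac{p}{1-p}$.

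\textbf{Step 1 (reformulating the constraint).} Since $\sum_w \binom dw p^w(1-p)^{d-w}=1$, we have
\[
Q_d(p)-b_0=\sum_{w>\ell}(b_w-b_0)\binom dw p^w(1-p)^{d-w}.
\]
So $b_0=\dots=b_\ell$ holds if and only if $Q_d-b_0$ is divisible by $p^{\ell+1}$. Symmetrically, $b_{d-\ell}=\dots=b_d$ holds if and only if $Q_d-b_d$ is divisible by $(1-p)^{\ell+1}$. Together, and since $d\ge 2\ell+1$, these conditions hold exactly when
\[
Q_d'(p)=p^\ell(1-p)^\ell R(p), \qquad \deg R\le d-1-2\ell .
\]
The converse direction follows by integrating from $0$ and from $1$. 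Hence it suffices to find a polynomial $R$ of degree $d-1-2\ell$ with
\[
\sup_{p\in[a,1-a]}\Bigl|p^\ell(1-p)^\ell R(p)-f'(p)\Bigr|\le e^{-c'_\beta d}, \qquad f'(p)=\frac{1}{\beta p(1-p)}.
\]
I will then set $Q_d(p)=\int_{1/2}^p s^\ell(1-s)^\ell R(s)\,ds$. If $R$ is symmetric about $1/2$, then $Q_d$ is antisymmetric, so $Q_d(1/2)=0=f(1/2)$, and integrating the derivative error over an interval of length at most $1/2$ gives the sup bound on $Q_d-f$. Oddness of $d$ makes $d-1-2\ell$ even, so a symmetric $R$ of full degree is available.

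\textbf{Step 2 (reduction to incomplete polynomials).} Substitute $y=(2p-1)^2\in[0,(1-2a)^2]$, so that $p(1-p)=(1-y)/4$. Take $R(p)=S(y)$ with $\deg S=m:=(d-1-2\ell)/2$. The target becomes
\[
\sup_{y\in[0,(1-2a)^2]}\Bigl|(1-y)^\ell S(y)-\tfrac{4^{\ell+1}}{\beta}(1-y)^{-1}\cdot 4^{-\ell}\Bigr|\ \text{small}
\]
(up to the harmless constant $4^{-\ell}$ absorbed into $S$). In the variable $x=1-y$ this asks how well the incomplete polynomials $x^{\ell}S$, of total degree $N=\ell+m$, approximate the analytic function $c_\beta/x$ on $x\in[1-(1-2a)^2,1]$. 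The incompleteness ratio is
\[
\theta=\frac{\ell}{\ell+m}\to\frac{a}{a+(1-2a)/2}=2a .
\]

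\textbf{Step 3 (main obstacle: exponential approximation).} By the Lorentz / Saff--Varga theory of incomplete polynomials, $x^{\theta N}S_N$ can approximate continuous functions precisely on $[\theta^2,1]$. Moreover, for a function analytic in a neighbourhood of a compact subinterval strictly inside $(\theta^2,1]$, the error decays geometrically in $N$; this comes from the potential-theoretic (weighted Bernstein--Walsh) estimates for the weight $x^{\theta N}$. Our interval is $[1-(1-2a)^2,1]=[4a-4a^2,1]$. Since $4a-4a^2-4a^2=4a(1-2a)>0$, it sits strictly inside $(\theta^2,1]$, with a margin depending only on $\beta$. The target $1/x$ is analytic away from $0$. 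Hence I expect a bound $e^{-c'_\beta N}$ with $N\asymp d$, uniformly over the slight variation of $\theta$ with $d$ caused by the floor.

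Making this quantitative and uniform is the step I expect to be hardest. I would first try to cite the Saff--Varga geometric convergence theorem directly. Failing that, I would prove it by hand: take an explicit approximant of $x^{-\ell-1}$ on the interval (for example, a truncated Taylor expansion about $x=1$, or a Chebyshev interpolant), multiply by $x^\ell$, and bound the remainder. I would then check that the contour/potential estimate stays below $1$ exactly because $4a-4a^2>4a^2$. Finally, small $d$ can be absorbed by shrinking $c_\beta$, since for each fixed $d$ the approximation error is finite.
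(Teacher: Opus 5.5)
Your Steps 1 and 2 are correct and coincide with the paper's reduction. The coefficient constraints are equivalent to $Q_d'$ containing the factor $p^\ell(1-p)^\ell$. With $y=(2p-1)^2$, one must then approximate $\frac{4}{\beta(1-y)}$ on $[0,(1-2a)^2]$ by $(1-y)^\ell S(y)$ with $\deg S=m=\frac{d-1}{2}-\ell$.

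The gap is Step 3. It carries the whole quantitative content of the lemma, and you leave it as an expectation. The Saff--Varga/weighted potential theory you invoke is not stated in a form that applies directly, for three reasons:
\begin{itemize}
\item you need approximation only on the subinterval $[4a-4a^2,1]$;
\item the type $\theta_d=2\ell/(d-1)$ changes with $d$, and for small $d$ it can be far from $2a$ (e.g.\ $\theta_3=1$ when $a\ge 1/3$);
\item you need constants uniform in $d$.
\end{itemize}
Your closing remark is also insufficient. Since $\exp(-c_\beta d)<1$, small $d$ can be absorbed into $c_\beta$ only if the error is strictly below $1$, not merely finite.

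The missing step is elementary if you follow your own first fallback, the Taylor expansion at $y=0$; this is exactly what the paper does. Take $S(y)=\frac{4}{\beta}\sum_{r=0}^{m}\binom{\ell+r}{r}y^r$, the truncation of $\frac{4}{\beta}(1-y)^{-\ell-1}$. The negative-binomial/binomial duality gives the exact identity
\[
\frac{Q_d'(p)}{f'(p)}=(1-y)^{\ell+1}\sum_{r=0}^{m}\binom{\ell+r}{r}y^r=\mathbb{P}(X\le m),\qquad X\sim\mathrm{Bin}\bigl(\tfrac{d+1}{2},y\bigr),
\]
so $0\le 1-Q_d'/f'=\mathbb{P}\bigl(X\ge\tfrac{d+1}{2}-\ell\bigr)$. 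Because $y\le(1-2a)^2$ while $\bigl(\tfrac{d+1}{2}-\ell\bigr)/\tfrac{d+1}{2}\ge 1-2a$, Hoeffding bounds this by $\exp\bigl(-(d+1)(2a(1-2a))^2\bigr)$ for every odd $d\ge 1$. There are no floor or small-$d$ issues. Integrating the relative error from $1/2$ then gives $|Q_d-f|\le e^{-c_\beta d}|f|\le e^{-c_\beta d}$.

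Note that the margin driving this is $(1-2a)^2<1-2a$, i.e.\ $4a-4a^2>2a\approx\theta$ in your $x$-variable. It is not the Saff--Varga condition $4a-4a^2>\theta^2$. The Taylor remainder is a binomial tail that is small only for $x$ above roughly $\theta$, so your planned check ``exactly because $4a-4a^2>4a^2$'' is the wrong one for this approximant. Here both inequalities hold, and no potential theory is needed.
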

\begin{proof}
Let $h(p) = \frac{1}{\beta}\log \frac{p}{1-p}$. Our goal is to construct a degree-$d$ polynomial that approximates $h$ while also respecting the truncation of the constrained MLE. To start, we impose that $Q_d(1/2)=h(1/2) = 0$, so $Q_d$ shares the same zero as $h$. Then, we have that 
\[
Q_d(p) - h(p) = \int_{1/2}^p (Q'_d(q)-h'(q))dq.
\]
Hence, if the derivative of $Q_d$ approximates the derivative of $h$ well, integrating gives that $Q_d$ approximates $h$ well. This motivates defining $Q_d$ through its derivative. We can differentiate the Bernstein representation of $Q_d$ to obtain
\[
Q'_d(p) = d\sum_{w=0}^{d-1}
(b_{w+1}-b_w)
\binom{d-1}{w}
p^w(1-p)^{d-1-w}.
\]
Now, observe that because of the truncation of the constrained MLE, the MLE outputs corresponding to $0,...,\ell$ wins are all $-1$ and the outputs corresponding to $d-\ell,...,d$ wins are all $1$. This explains the desired conditions $b_0=...=b_{\ell}$ and $b_{d-\ell}=...=b_d$. A natural way to enforce this is by requiring that $Q_d'(p)$ contain the factor $p^{\ell}(1-p)^{\ell}$. Now, we want $Q_d'$ to also approximate 
\[
h'(p) = \frac{1}{\beta p(1-p)} = \frac{4}{\beta(1-(2p-1)^2)}
\]
Now, we can apply the power series identity
\[
\frac{1}{(1-z)^{\ell+1}}
=
\sum_{r=0}^{\infty}
\binom{\ell+r}{r}z^r
\]
to obtain
\[
h'(p)
=
\frac{4}{\beta}
\left(1-(2p-1)^2\right)^\ell
\sum_{r=0}^{\infty}
\binom{\ell+r}{r}(2p-1)^{2r} = \frac{4^{\ell+1}}{\beta}p^{\ell}(1-p)^{\ell}\sum_{r=0}^{\infty}
\binom{\ell+r}{r}(2p-1)^{2r}.
\]
Conveniently, we already have the factor $p^{\ell}(1-p)^{\ell}$. Now, since we want $Q_d$ to have degree at most $d$, we write its derivative as a truncation of this series, obtaining
\[
Q_d'(p)
=
\frac{4}{\beta}
\left(1-(2p-1)^2\right)^\ell
\sum_{r=0}^{(d-1)/2-\ell}
\binom{\ell+r}{r}(2p-1)^{2r}.
\]
Since $\sigma(-\beta)<1/2$ and $d$ is odd, $\ell\leq(d-1)/2$. Moreover, the degree of $Q_d'$ is at most
\[
2\ell
+
2\left(\frac{d-1}{2}-\ell\right)
=
d-1,
\]
so $Q_d$ has degree at most $d$, as needed. It remains to show that the approximation error decays exponentially with $d$. To start, we write
\[
\frac{Q_d'(p)}{h'(p)}
=
(1-(2p-1)^2)^{\ell+1}
\sum_{r=0}^{(d-1)/2-\ell}
\binom{\ell+r}{r}(2p-1)^{2r}.
\]
Using the standard equivalence between the negative binomial and binomial CDFs gives
\[
\frac{Q_d'(p)}{h'(p)}
=
\mathbb{P}\left(
X\leq\frac{d-1}{2}-\ell
\right),
\]
where
\[
X\sim
\operatorname{Binomial}\left(
\frac{d+1}{2},(2p-1)^2
\right).
\]
Hence,
\[
1-\frac{Q_d'(p)}{h'(p)}
=
\mathbb{P}\left(
X\geq\frac{d+1}{2}-\ell
\right).
\]

For every $p\in[\sigma(-\beta),1-\sigma(-\beta)]$,
\[
(2p-1)^2\leq(1-2\sigma(-\beta))^2.
\]
On the other hand,
\[
\frac{\frac{d+1}{2}-\ell}{(d+1)/2}
=
1-\frac{2\ell}{d+1}
\geq
1-2\sigma(-\beta).
\]
Since
\[
(1-2\sigma(-\beta))^2<1-2\sigma(-\beta),
\]
we have that 
\[
\frac{\frac{d+1}{2}-\ell}{(d+1)/2}-(2p-1)^2
\ge
(1-2\sigma(-\beta))-(1-2\sigma(-\beta))^2
=
2\sigma(-\beta)(1-2\sigma(-\beta))>0.
\]
Thus, Hoeffding's Inequality gives that,
\[
0
\leq
1-\frac{Q_d'(p)}{h'(p)}
\leq
\exp(-c_\beta d)
\]
uniformly over $p\in[\sigma(-\beta),1-\sigma(-\beta)]$, where $c_{\beta} = (2\sigma(-\beta)(1-2\sigma(-\beta)))^2$.

Finally, since
$Q_d(1/2)=h(1/2)=0$, we have that
\[
\begin{aligned}
|Q_d(p)-h(p)|
&=
\left|
\int_{1/2}^p
(Q_d'(q)-h'(q))dq
\right|\\
&\leq
\exp(-c_\beta d)
\left|
\int_{1/2}^p
h'(q)dq
\right|\\
&=
\exp(-c_\beta d)|h(p)|\\
&\leq
\exp(-c_\beta d),
\end{aligned}
\]
where the final inequality follows since $h(p)\in[-1,1]$ for $p\in[\sigma(-\beta),1-\sigma(-\beta)]$.

\end{proof}
\begin{lemma}\label{lem:boundbernstein}
Let 
\[
Q_d(p) = \sum_{w=0}^d b_w\binom{d}{w}p^w(1-p)^{d-w}
\]
be the polynomial constructed in \cref{lem:defq}. There exists a constant $C_{\beta} > 0$ such that 
\[
max_{w \in \{0,\ldots,d\}}|b_w| \leq C_{\beta}5^d.
\]
\end{lemma}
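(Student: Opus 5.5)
Since $Q_d(1/2)=0$, I recover $Q_d$ from its derivative and read the Bernstein coefficients off the Bernstein coefficients of $Q_d'$. From the differentiation formula in the proof of \cref{lem:defq},
\[
Q_d'(p)=d\sum_{w=0}^{d-1}(b_{w+1}-b_w)\binom{d-1}{w}p^w(1-p)^{d-1-w}.
\]
So if $Q_d'$ has degree-$(d-1)$ Bernstein coefficients $a_w$, then $b_{w+1}-b_w=a_w/d$, and
\[
|b_w|\le |b_{w^\ast}|+\sum_{k}|a_k|/d,
\]
where $w^\ast$ is any anchor index. By the symmetry $h(1-p)=-h(p)$, and since $Q_d'$ is even in $2p-1$, $Q_d$ is odd about $1/2$. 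Hence $b_{d-w}=-b_w$. Because $d$ is odd, $b_{(d-1)/2}$ and $b_{(d+1)/2}$ are negatives of each other, differing by $a_{(d-1)/2}/d$, so $|b_{(d-1)/2}|\le |a_{(d-1)/2}|/d$. This anchors everything and gives
\[
\max_w|b_w|\le \frac{2}{d}\sum_k|a_k|\le 2\max_k|a_k|.
\]
It therefore suffices to show $\max_k|a_k|\le C_\beta 5^d$.

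To bound the $a_k$, I use the explicit formula
\[
Q_d'(p)=\frac{4}{\beta}\sum_{r=0}^{R}\binom{\ell+r}{r}\,4^{\ell}p^\ell(1-p)^\ell(2p-1)^{2r},\qquad R=\frac{d-1}{2}-\ell.
\]
Two standard facts about Bernstein coefficients do the work. First, multiplying a polynomial by $p^\ell(1-p)^\ell$ only shifts and reweights its coefficients by ratios of binomials that are at most $1$. Second, the degree-$N$ Bernstein coefficients of $(2p-1)^{j}$, after degree elevation to $N$, have absolute value at most $1$, since $2p-1=p-(1-p)$ and the sum of absolute Bernstein coefficients is bounded by expanding $(p+(1-p))^{N-j}(p-(1-p))^j$ with all signs replaced by $+$.

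Combining these, every Bernstein coefficient of the $r$-th summand, written in degree $d-1$, is at most $4^\ell\binom{\ell+r}{r}$ in absolute value. Summing over $r\le R$ and using $\sum_{r\le R}\binom{\ell+r}{r}=\binom{\ell+R+1}{R}\le 2^{(d+1)/2}$ gives
\[
\max_k|a_k|\le \frac{4}{\beta}\,4^{\ell}\,2^{(d+1)/2}.
\]
Since $\ell\le d/2$, we have $4^\ell\le 2^d$, so the total is at most $C_\beta 2^{3d/2}\le C_\beta 5^d$ (indeed $2^{3/2}<5$). This leaves slack, and any crude constants suffice.

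The main obstacle is the careful bookkeeping in the second fact: I need to verify that the product $p^\ell(1-p)^\ell(2p-1)^{2r}$, re-expressed in the degree-$(d-1)$ Bernstein basis, has coefficients bounded by $4^{-\ell}$ times at most $1$ in absolute value, or at worst by $1$. The fallback is the crude bound: each coefficient of the product in degree $d-1$ is at most the $\ell^1$ norm of its monomial-to-Bernstein conversion, which is at most $3^{d}$ for $(2p-1)^{2r}$. Even with this loose estimate, multiplying by $4^\ell\le 2^d$ and the binomial sum would still need care to stay within $5^d$. In that case I would instead bound $4^\ell p^\ell(1-p)^\ell\le 1$ directly by sup norm and use the general conversion bound $\|a\|_\infty\le 2^{d}\|Q_d'\|_{\infty,[0,1]}$ for Bernstein coefficients in terms of the sup norm. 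Here $\|Q_d'\|_\infty\le \frac{4}{\beta}\sum_r\binom{\ell+r}{r}\le \frac{4}{\beta}2^{d}$, giving $4^d\le 5^d$.
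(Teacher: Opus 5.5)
Your argument is correct, and it takes a different route from the paper. The paper works in the monomial basis. It bounds the $\ell^1$ norm of the monomial coefficients of $Q_d'$ by submultiplicativity: $8^\ell$ for $(4p(1-p))^\ell$, $9^r$ for $(2p-1)^{2r}$, and $\binom{\ell+r}{r}\le 2^{\ell+r}$. It then integrates, controls the additive constant crudely using $Q_d(1/2)=0$, and converts to Bernstein form via $b_w=\sum_{r\le w}q_r\binom{w}{r}/\binom{d}{r}$, whose weights are at most $1$. This yields a bound of order $18^{d/2}$.

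You instead stay in the Bernstein basis, where multiplication by $p^\ell(1-p)^\ell$ and degree elevation of $(2p-1)^{j}$ are both contractive. You recover the $b_w$ from $b_{w+1}-b_w=a_w/d$ and anchor them with the odd symmetry $b_{d-w}=-b_w$, which follows from $Q_d(1/2)=0$, the evenness of $Q_d'$ in $2p-1$, and $d$ odd. Your route gives the sharper base $2^{3/2}$, and hence a smaller $\kappa_\beta$ in the sample complexity of $\mathcal{A}_{\mathrm{MLE-CAL}}$, at the cost of the symmetry step. The paper's route is more mechanical and needs nothing beyond $Q_d(1/2)=0$.

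The ``main obstacle'' you flag is already settled by your two facts. Write $(2p-1)^{2r}$ in degree $N=d-1-2\ell\ge 2r$. Its Bernstein coefficients are $\binom{N}{w}^{-1}\sum_{a+b=w}\binom{2r}{a}\binom{N-2r}{b}(-1)^{2r-a}$, and these are at most $1$ in absolute value by Vandermonde. That is the correct, coefficientwise form of your ``replace all signs by $+$'' remark; it is not the sum of absolute coefficients that is bounded. Multiplying by $p^\ell(1-p)^\ell$ sends coefficient $c_i$ to position $i+\ell$ in degree $d-1$ with weight $\binom{N}{i}/\binom{d-1}{i+\ell}\le 1$. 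So each summand contributes at most $4^\ell\binom{\ell+r}{r}$ per coefficient, exactly as you claim. Your main line therefore closes on its own. Drop the fallback: it is unnecessary, and it rests on a sup-norm-to-Bernstein conversion constant of $2^{d}$ that you neither prove nor cite.
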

\begin{proof}
Start by writing $Q_d$ in its monomial representation:
\[
Q_d(p)
=
\sum_{r=0}^d q_rp^r.
\]
We first bound the sum of the absolute values of the coefficients $q_r$.

Recall from the construction in \cref{lem:defq} that
\[
Q_d'(p)
=
\frac{4}{\beta}
(4p(1-p))^\ell
\sum_{r=0}^{(d-1)/2-\ell}
\binom{\ell+r}{r}(2p-1)^{2r}.
\]
The sum of the absolute values of the coefficients of $4p(1-p)$ is $8$, so the corresponding sum for
\[
(4p(1-p))^\ell
\]
is at most $8^\ell$. Similarly, the sum of the absolute values of the coefficients of $(2p-1)^{2r}$ is
\[
3^{2r}=9^r.
\]
It follows that the sum of the absolute values of the monomial coefficients of $Q_d'$ is at most
\[
\frac{4}{\beta}
8^\ell
\sum_{r=0}^{(d-1)/2-\ell}
\binom{\ell+r}{r}9^r.
\]
Using $\binom{\ell+r}{r}
\leq
2^{\ell+r},$
the above expression is bounded by
\[
\frac{
16^\ell}{\beta}
\sum_{r=0}^{(d-1)/2-\ell}
18^r
\leq
\frac{
16^\ell}{\beta}
\frac{1}{\beta}18^{(d-1)/2-\ell} \leq \frac{1}{\beta}18^{(d-1)/2-\ell} \leq \frac{1}{\beta}5^d.
\]

Integrating $Q_d'$ cannot increase the sum of the absolute values of the coefficients in front of $p,...,p^d$. The additive constant $C$ is chosen so that $Q_d(1/2)=0$. Then, if we write
\[
Q'_d(p) = \sum_{r=0}^{d-1}a_rp^r
\]
we have that
\[
Q_d(p) = C + \sum_{r=0}^{d-1}\frac{a_r}{r+1}p^{r+1} = 0
\]
Rearranging, using the fact that $\frac{1}{r+1}(1/2)^{r+1} \leq 1$, and applying Triangle Inequality gives
\[
|C| \leq \sum_{r=0}^{d-1}|a_r|
\]

Putting all of this together gives that 
\[
\sum_{r=0}^d|q_r| \leq 2\sum_{r=0}^{d-1}|a_r| \leq C_{\beta}5^d,
\]
where $C_{\beta} = 2/\beta$.

Now, the Bernstein coefficients of $Q_d$ satisfy the identity
\[
b_w
=
\sum_{r=0}^w
q_r
\frac{\binom{w}{r}}{\binom{d}{r}}.
\]
Since
\[
\frac{\binom{w}{r}}{\binom{d}{r}}
\leq
1,
\]
we have
\[
|b_w|
\leq
\sum_{r=0}^w
|q_r|
\leq
\sum_{r=0}^d
|q_r|
\leq
C_\beta5^d.
\]
The result follows.
\end{proof}

\begin{lemma}\label{lem:psi}
Let $c_{\beta},C_{\beta}>0$ be the constants from \cref{lem:defq} and \cref{lem:boundbernstein}, respectively. Then for every odd $d \geq 1$, there exists a function $\psi_d: [-1,1]\ra \mathbb{R}$ such that for every $m \neq m_0$, every context $j \in I_m$, and every $D \in [-1,1]$,
\[
|\mathbb{E}[\psi_d(\widehat{v}(i_j,t_j,m))|u_{i^j}(t^j,m)-u_{i^j}(t^j,m_0) = D]-D| \leq \exp\{-c_{\beta}d\}
\]
and
\[
\sup_{x \in [-1,1]}|\psi_d(x)| \leq C_{\beta}5^d.
\]
\end{lemma}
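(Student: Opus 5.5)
The function $\psi_d$ will be read off from the Bernstein coefficients of the polynomial $Q_d$ from \cref{lem:defq}. Write $\hat p = \min\{1-\sigma(-\beta),\max\{\sigma(-\beta),W/d\}\}$, where $W$ is the number of wins of $m$ in a context $j \in I_m$. As in the proof of \cref{prop:mleavg}, the constrained MLE is then $\widehat v(i^j,t^j,m) = h(\hat p)$ with $h(p)=\frac{1}{\beta}\log\frac{p}{1-p}$. The values $h(\hat p)$ for $W=0,\dots,\ell$ coincide and equal $-1$, where $\ell=\lfloor \sigma(-\beta)d\rfloor$ (up to the boundary case $W/d=\sigma(-\beta)$, which gives the same value). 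Symmetrically, the values for $W=d-\ell,\dots,d$ all equal $1$. For $\ell < W < d-\ell$ the value $h(W/d)$ is strictly monotone in $W$, hence distinct across these $W$ and distinct from $\pm1$.

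The map $W\mapsto \widehat v$ therefore induces a partition of $\{0,\dots,d\}$ into the blocks $\{0,\dots,\ell\}$, the singletons $\{\ell+1\},\dots,\{d-\ell-1\}$, and $\{d-\ell,\dots,d\}$. I define $\psi_d$ on the finite range of $\widehat v$ by setting $\psi_d(\widehat v)=b_W$ for any $W$ producing that value. This is well defined precisely because \cref{lem:defq} guarantees $b_0=\dots=b_\ell$ and $b_{d-\ell}=\dots=b_d$. Off this finite set I put $\psi_d(x)=0$. The boundedness claim then follows at once from \cref{lem:boundbernstein}, since $\sup_x|\psi_d(x)|\le\max_w|b_w|\le C_\beta 5^d$.

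For the bias, fix $D\in[-1,1]$ and set $p=\sigma(\beta D)\in[\sigma(-\beta),1-\sigma(-\beta)]$. Conditionally on $D$, we have $W\sim\mathrm{Binomial}(d,p)$, so
\[
\mathbb{E}[\psi_d(\widehat v)\mid D]=\sum_{w=0}^d b_w\binom{d}{w}p^w(1-p)^{d-w}=Q_d(p).
\]
Moreover $D=h(p)$, so the bias equals $|Q_d(p)-h(p)|\le e^{-c_\beta d}$ by \cref{lem:defq}.

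The only delicate step is checking that the partition induced by $\widehat v$ is exactly the one matched by the coefficient equalities. This requires care with the floor in $\ell$ and with ties at the truncation boundaries: the boundary case $W/d=\sigma(-\beta)$ gives $h=-1$ and lies in the first block, which is consistent. Also, indices $W>\ell$ strictly inside the interval never collide with $\pm1$. Since $d$ is odd and $\sigma(-\beta)<1/2$, we have $\ell<d-\ell$, so the two end blocks are disjoint.
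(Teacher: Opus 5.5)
Your proposal is correct and follows the paper's proof essentially step for step. You define $\psi_d$ by assigning the Bernstein coefficient $b_w$ of $Q_d$ to the MLE value produced by $w$ wins, use the coefficient equalities of \cref{lem:defq} for well-definedness, identify the conditional expectation with $Q_d(\sigma(\beta D))$, and invoke \cref{lem:defq} and \cref{lem:boundbernstein} for the bias and sup-norm bounds. Your check that the level sets of $W \mapsto \widehat v$ are exactly $\{0,\dots,\ell\}$, the interior singletons, and $\{d-\ell,\dots,d\}$ is slightly more careful than the paper's, which only notes the two truncated blocks and indexes the lower one from $x_1$ rather than $x_0$.
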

\begin{proof}
Let $Q_d(p) = \sum_{w=0}^d b_w\binom{d}{w}p^w(1-p)^{d-w}$ be the polynomial constructed in \cref{lem:defq}. We use its Bernstein coefficients to define $\psi_d$. Fix $m \neq m_0$ and $j \in I_m$. Let $p_j(m) = \sigma(\beta(u_{i^j}(t^j,m) -u_{i^j}(t^j,m_0)))$ denote the BT win probability of model $m$ in context $j$, and let $W_j(m)$ denote the number of wins by $m$ in the $d$ comparisons in context $j$. For each number of wins $w \in \{0,...,d\}$, let $x_w$ denote the MLE value produced by this number of wins. We define
\[
\psi_d(x) = \begin{cases}
b_w, & \text{if } x = x_w \\
0, & \text{otherwise}
\end{cases}.
\]

We verify that $\psi_d$ is well defined. To start, note that $x_1=\ldots=x_{\ell}=-1$ and $x_{d-\ell}=\ldots=x_d=1$ due to the truncation of the constrained MLE. And indeed, $b_1=\ldots=b_{\ell}$ and $b_{d-\ell}=\ldots=b_d$. 

Now, for the first part of the lemma, observe that 
\[
\begin{aligned}
\mathbb{E}\left[
\psi_d(\widehat{v}(i^j,t^j,m))
\middle|
u_{i^j}(t^j,m) -u_{i^j}(t^j,m_0) = D
\right]
&=
\sum_{w=0}^d
b_w
\mathbb{P}(W_j(m)=w\mid D)\\
&=
\sum_{w=0}^d
b_w
\binom{d}{w}
p_j(m)^w(1-p_j(m))^{d-w}\\
&=
Q_d(p_j(m)).
\end{aligned}
\]
Now, using the fact that 
\[
u_{i^j}(t^j,m) -u_{i^j}(t^j,m_0) = \frac{1}{\beta}\log \frac{p_j(m)}{1-p_j(m)}
\]
and applying \cref{lem:defq} gives
\[
\left|
\mathbb{E}\left[
\psi_d(\widehat{v}(i^j,t^j,m))
\middle|
u_{i^j}(t^j,m) -u_{i^j}(t^j,m_0)=D
\right]
-D
\right|
\leq
\exp(-c_\beta d).
\]
Since the bound in \cref{lem:defq} is uniform over
\[
p_j(m)\in[\sigma(-\beta),1-\sigma(-\beta)],
\]
the inequality holds uniformly over every $D\in[-1,1]$, every $m\neq m_0$, and every context $j\in I_m$. This proves the first part.

For the second part, \cref{lem:boundbernstein} gives
\[
\max_{w\in\{0,\ldots,d\}}
|b_w|
\leq
C_\beta5^d.
\]
Since $\psi_d$ takes only these values on attainable MLE outputs and is $0$ elsewhere,
\[
\sup_{x\in[-1,1]}
|\psi_d(x)|
\leq
C_\beta5^d.
\]
The result follows.
\end{proof}
\begin{proposition}\label{prop:mle-cal}
Fix $\eps,\delta \in (0,1)$. Then the $(\eps,\delta)$-repetition budget for $\cA_{\mathrm{MLE-CAL}}$ satisfies 
\[
d_{\cA_{\mathrm{MLE-CAL}}}(\eps,\delta) \leq O\left(\log \frac{1}{\eps} \right)
\]
\end{proposition}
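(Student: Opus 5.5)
We take $\cC=\Unif(\{\{m,m_0\}:m\in\cM\setminus m_0\})$ and choose $d$ to be the smallest odd integer with $\exp\{-c_\beta d\}\le\eps/8$, i.e.\ $d=O(\log(1/\eps))$, where $c_\beta$ is the constant of \cref{lem:defq}. The proof then splits into three steps: the calibration problem is feasible with a controlled sup-norm, the calibrated context-level estimates concentrate, and a gap argument shows the ranking is recovered.

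\textbf{Step 1 (feasibility and sup-norm).} By \cref{lem:psi}, the function $\psi_d$ has worst-case conditional bias at most $\exp\{-c_\beta d\}\le\eps/8$ uniformly over $D\in[-1,1]$. Hence $\psi_d$ satisfies the constraint defining $\phi_d$, so the optimization problem is feasible. Its minimizer $\phi_d$ then satisfies
\[
\sup_{x}|\phi_d(x,\eps)|\le \sup_x|\psi_d(x)|\le C_\beta 5^d=:B .
\]
With $d=O(\log(1/\eps))$ this gives $B=\mathrm{poly}(1/\eps)$, namely $B\le C'_\beta\,\eps^{-\kappa_\beta}$ for some $\kappa_\beta>0$. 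A minimizer exists because the problem is finite-dimensional: $\phi$ matters only through its values on the $d+1$ attainable MLE outputs. If attainment of the minimum were in doubt, any feasible $\phi$ with sup-norm at most $B$ would suffice, since the argument below uses only the bias constraint and this bound.

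\textbf{Step 2 (concentration).} Fix $m\ne m_0$ and condition on $I_m$. The contexts $j\in I_m$ are i.i.d., with user $i\sim\cD$ and task $t\sim\cP_i$. By the tower property over $D_j=u_{i^j}(t^j,m)-u_{i^j}(t^j,m_0)$, each $Z_j=\phi_d(\hat v(i^j,t^j,m),\eps)$ satisfies
\[
\bigl|\mathbb{E}[Z_j]-(\bar u(m)-\bar u(m_0))\bigr|\le\eps/8 .
\]
Here we use that $\hat v(i^j,t^j,m)$ depends on the data only through $W_j(m)$, and, given $D_j$, $W_j(m)\sim\mathrm{Binomial}(d,\sigma(\beta D_j))$. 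Since $|Z_j|\le B$, Hoeffding's inequality gives
\[
\Pr\bigl(|\hat g_{\mathrm{MLE-CAL}}(m)-\mathbb{E}Z_j|>\eps/8\bigr)\le 2\exp\bigl(-|I_m|\eps^2/(128B^2)\bigr).
\]
We require $|I_m|\ge r:=128B^2\eps^{-2}\log(4|\cM|/\delta)$. We invoke \cref{lem:controlobs} with this $r$ and failure probability $\delta/2$, which forces $n=O(|\cM|\eps^{-2-2\kappa_\beta}\log(|\cM|/\delta))$; this is finite and independent of the instance. A union bound over $m\ne m_0$ then shows that, with probability at least $1-\delta$, every $\hat g_{\mathrm{MLE-CAL}}(m)$ lies within $\eps/4$ of $\bar u(m)-\bar u(m_0)$, and the same holds trivially for $m_0$.

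\textbf{Step 3 (ranking).} On this event, for any $m,m'$ with $\bar u(m)>\bar u(m')$ we have $\bar u(m)-\bar u(m')\ge\Delta\ge\eps$. The estimated differences therefore satisfy
\[
\hat g(m)-\hat g(m')\ge \eps-\eps/2>0,
\]
so the sorted order equals $\pi^\star$. This shows $d_{\cA_{\mathrm{MLE-CAL}}}(\eps,\delta)\le d=O(\log(1/\eps))$.

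\textbf{Main obstacle.} The main difficulty is already resolved by \cref{lem:defq,lem:boundbernstein,lem:psi}: exponentially small bias must coexist with a sup-norm bound of only $5^d$, which keeps $B$ polynomial in $1/\eps$. The remaining subtleties in this proof are handling the restriction to odd $d$ (round up by one) and checking that $\psi_d$'s bias guarantee covers all $D\in[-1,1]$, so that it transfers to every instance.
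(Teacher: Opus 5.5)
Your proposal is correct and follows essentially the same argument as the paper. It uses the uniform pairing $\cC$ with the reference model, takes $d$ to be the smallest odd integer with $\exp\{-c_\beta d\}\le \eps/8$, and uses feasibility of $\psi_d$ from \cref{lem:psi} to get $\sup|\phi_d|\le C_\beta 5^d=\mathrm{poly}(1/\eps)$. It then applies Hoeffding together with \cref{lem:controlobs} and a union bound to obtain $\eps/4$ accuracy for every $m$, and hence the correct ranking. The differences are cosmetic: your $\kappa_\beta$ bounds $B$ rather than $B^2$, and you additionally justify that the minimizer $\phi_d$ exists.
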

\begin{proof}
By \cref{def:repbudget}, it suffices to provide a finite $n$ and model-pair sampling distribution $\cC$ under which $\cA_{\mathrm{MLE-CAL}}$ succeeds uniformly over every problem instance with minimum average utility gap $\Delta \geq \eps$. Let $m_0$ be the reference model used by $\cA_{\mathrm{MLE-CAL}}$. Define
\[
\cC := \Unif(\{\{m,m_0\}:m \in \cM \setminus m_0\}).
\]

We first choose the repetition budget $d$. Let $c_\beta>0$ be the constant from \cref{lem:psi}, and take $d$ to be the smallest odd integer satisfying
\[
\exp(-c_\beta d) \leq \frac{\eps}{8}.
\]
Then,
\[
d = O_\beta\left(\log\frac{1}{\eps}\right).
\]

By \cref{lem:psi}, there exists a function $\psi_d:[-1,1]\to\mathbb{R}$ satisfying
\[
\sup_{D\in[-1,1]}
\left|
\mathbb{E}\left[
\psi_d(\widehat{v}(i^j,t^j,m))
\middle|
u_{i^j}(t^j,m)-u_{i^j}(t^j,m_0)=D
\right]
-D
\right|
\leq
\frac{\eps}{8}.
\]
Thus, $\psi_d$ satisfies the bias constraint in the optimization defining $\phi_d$. Since $\phi_d$ minimizes the worst-case magnitude among all functions satisfying this constraint, \cref{lem:psi} also gives
\[
\sup_{x\in[-1,1]}
|\phi_d(x,\eps)|
\leq
\sup_{x\in[-1,1]}|\psi_d(x)|
\leq
C_\beta 5^d
\]
for some constant $C_\beta>0$.

Since $d$ is the smallest odd integer satisfying the bias requirement above,
\[
d \leq \frac{1}{c_\beta}\log\frac{8}{\eps}+2.
\]
It follows that there exists a constant $\kappa_\beta>0$, depending only on $\beta$, such that
\[
\left(
\sup_{x\in[-1,1]}
|\phi_d(x,\eps)|
\right)^2
\leq
C_\beta\eps^{-\kappa_\beta}.
\]

We next control the bias of the estimates calibrated by $\phi_d$. Fix $m\neq m_0$ and a context $j\in I_m$. Define the contextual utility difference
\[
D_j(m)
:=
u_{i^j}(t^j,m)-u_{i^j}(t^j,m_0).
\]
By the definition of $\phi_d$,
\[
\left|
\mathbb{E}\left[
\phi_d(\widehat{v}(i^j,t^j,m),\eps)
\middle|
i^j,t^j,j\in I_m
\right]
-
D_j(m)
\right|
\leq
\frac{\eps}{8}.
\]
Since the model pair is sampled from $\cC$ independently of the user-task context,
\[
\mathbb{E}[D_j(m)|j\in I_m]
=
\bar{u}(m)-\bar{u}(m_0).
\]
Averaging over contexts then gives
\[
\left|
\mathbb{E}\left[
\phi_d(\widehat{v}(i^j,t^j,m),\eps)
\middle|
j\in I_m
\right]
-
(\bar{u}(m)-\bar{u}(m_0))
\right|
\leq
\frac{\eps}{8}.
\]
We have controlled the bias of the calibrated MLE. It remains to control the error from averaging over finitely many sampled contexts.

Let $B_{d,\eps}
:=
\sup_{x\in[-1,1]}|\phi_d(x,\eps)|.$
Conditional on $I_m$, the random variables
\[
\left\{
\phi_d(\widehat{v}(i^j,t^j,m),\eps):j\in I_m
\right\}
\]
are independent and lie in $[-B_{d,\eps},B_{d,\eps}]$. Applying Hoeffding's Inequality to
\[
\widehat{g}_{\mathrm{MLE-CAL}}(m)
=
\frac{1}{|I_m|}
\sum_{j\in I_m}
\phi_d(\widehat{v}(i^j,t^j,m),\eps)
\]
gives
\[
\mathbb{P}\left(
\left|
\widehat{g}_{\mathrm{MLE-CAL}}(m)
-
\mathbb{E}\left[
\phi_d(\widehat{v}(i^j,t^j,m),\eps)
\middle|
j\in I_m
\right]
\right|
>
\frac{\eps}{8}
\middle|
I_m
\right)
\leq
2\exp\left(
-\frac{|I_m|\eps^2}{128B_{d,\eps}^2}
\right).
\]

Therefore, it suffices to have
\[
|I_m|
\geq
\frac{128B_{d,\eps}^2}{\eps^2}
\log\frac{4|\cM|}{\delta}
\]
for every $m\neq m_0$. Let
\[
r
:=
\left\lceil
\frac{128B_{d,\eps}^2}{\eps^2}
\log\frac{4|\cM|}{\delta}
\right\rceil.
\]
By \cref{lem:controlobs}, taking
\[
n
\geq
\max\left\{
2(|\cM|-1)r,
8(|\cM|-1)\log\frac{2(|\cM|-1)}{\delta}
\right\}
\]
guarantees that, with probability at least $1-\delta/2$,
\[
|I_m|\geq r
\]
simultaneously for all $m\neq m_0$.

Using the earlier bound on $B_{d,\eps}^2$, we have
\[
r
=
O_\beta\left(
\frac{1}{\eps^{2+\kappa_\beta}}
\log\frac{|\cM|}{\delta}
\right).
\]
Consequently, it suffices to take
\[
n
=
O_\beta\left(
\frac{|\cM|}{\eps^{2+\kappa_\beta}}
\log\frac{|\cM|}{\delta}
\right).
\]

Condition on the event that $|I_m|\geq r$ simultaneously for every $m\neq m_0$. Applying the preceding Hoeffding bound and taking a union bound over $m\neq m_0$ gives that, with conditional probability at least $1-\delta/2$,
\[
\left|
\widehat{g}_{\mathrm{MLE-CAL}}(m)
-
\mathbb{E}\left[
\phi_d(\widehat{v}(i^j,t^j,m),\eps)
\middle|
j\in I_m
\right]
\right|
\leq
\frac{\eps}{8}
\]
simultaneously for every $m\neq m_0$.

Combining the two failure probabilities with the earlier bias bound, we obtain that, with probability at least $1-\delta$,
\[
\left|
\widehat{g}_{\mathrm{MLE-CAL}}(m)
-
(\bar{u}(m)-\bar{u}(m_0))
\right|
\leq
\frac{\eps}{4}
\]
simultaneously for every $m\neq m_0$. The same bound holds trivially for $m_0$.

Take any pair of models $m,m'$. The triangle inequality then gives
\[
\left|
(\widehat{g}_{\mathrm{MLE-CAL}}(m)-\widehat{g}_{\mathrm{MLE-CAL}}(m'))
-
(\bar{u}(m)-\bar{u}(m'))
\right|
\leq
\frac{\eps}{2}.
\]
Every nonzero average utility difference has magnitude at least $\Delta\geq\eps$, so ranking according to $\widehat{g}_{\mathrm{MLE-CAL}}$ preserves every strict average-utility ordering. 
\end{proof}

\begin{proposition}\label{prop:mle-unconst}
Fix $\eps,\delta \in (0,1)$. Then the $(\eps,\delta)$-repetition budget for $\cA_{\mathrm{MLE-UNCONST}}$ satisfies 
\[
d_{\cA_{\mathrm{MLE-UNCONST}}}(\eps,\delta) \leq O\left(\log \frac{1}{\eps} \right)
\]
\end{proposition}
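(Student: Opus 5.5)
The argument follows the template of \cref{prop:mle-cal}, with the calibration function replaced by the explicit harmonic transform. We use the reference-pair distribution $\cC := \Unif(\{\{m,m_0\}:m\in\cM\setminus m_0\})$. For $m\neq m_0$ and $j\in I_m$, define $Z_j(m) := \frac{1}{\beta}(H_{W_j(m)}-H_{d-W_j(m)})$, and write $p=p_j(m)\in[\sigma(-\beta),\sigma(\beta)]$. The first step is to verify the identity
\[
\mathbb{E}[H_W] = \sum_{k=1}^d \frac{1-(1-p)^k}{k}
\qquad\text{for } W\sim\mathrm{Binomial}(d,p).
\]
To prove it, write $H_W=\int_0^1 \frac{1-x^W}{1-x}\,dx$ and take expectations, giving $\mathbb{E}[x^W]=(1-p+px)^d$. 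Substituting $y=1-p+px$ and expanding $\frac{1-y^d}{1-y}=\sum_{k=0}^{d-1}y^k$ yields the sum. Applying the same identity to $d-W\sim\mathrm{Binomial}(d,1-p)$ and subtracting gives
\[
\mathbb{E}[Z_j(m)\mid i^j,t^j] = \frac{1}{\beta}\sum_{k=1}^d\frac{p^k-(1-p)^k}{k},
\]
which is the expression quoted in the main text.

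The second step bounds the bias. Since $\log\frac{p}{1-p}=-\log(1-p)+\log p=\sum_{k\ge1}\frac{p^k-(1-p)^k}{k}$, the bias equals the tail $\frac{1}{\beta}\sum_{k>d}\frac{p^k-(1-p)^k}{k}$. Its absolute value is at most
\[
\frac{2}{\beta(d+1)}\sum_{k>d}\sigma(\beta)^k = \frac{2\sigma(\beta)^{d+1}}{\beta(d+1)(1-\sigma(\beta))},
\]
uniformly over contexts. Choosing $d=O_\beta(\log(1/\eps))$ makes this at most $\eps/8$. Averaging over contexts, and using that the pair is drawn independently of $(i^j,t^j)$, gives $|\mathbb{E}[Z_j(m)\mid j\in I_m]-(\bar u(m)-\bar u(m_0))|\le\eps/8$.

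The third step is concentration, and this is the only place where care is needed. The crude bound $|Z_j|\le H_d/\beta=O(\log d)$ is already enough for Hoeffding: it costs $n=O(|\cM|\eps^{-2}\log^2\log(1/\eps)\log(|\cM|/\delta))$ contexts, still finite and independent of the instance, which is all that \cref{def:repbudget} requires. To match the stated corollary's $n=O(|\cM|\eps^{-2}\log(|\cM|/\delta))$, I would instead apply Bernstein's inequality with a constant variance bound. Since $p$ is bounded away from $0$ and $1$, $W_j$ and $d-W_j$ are both at least $cd$ except with probability $e^{-\Omega(d)}$. On that event $H_{W}-H_{d-W}=\log\frac{W}{d-W}+O(1/d)$, and $\log\frac{W}{d-W}$ deviates from $\log\frac{p}{1-p}$ by $O(|W/d-p|)$, so $\mathbb{E}[Z_j^2\mid i^j,t^j]=O_\beta(1)$. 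The bad event contributes at most $e^{-\Omega(d)}\log^2 d=O(1)$. Bernstein's inequality with variance $O(1)$ and range $O(\log d)$ then requires $|I_m|\gtrsim \eps^{-2}\log(|\cM|/\delta)$.

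To finish, \cref{lem:controlobs} guarantees enough contexts $|I_m|$ for every $m\neq m_0$ with probability $1-\delta/2$, and a union bound over the models gives $|\hat g_{\mathrm{MLE-UNCONST}}(m)-(\bar u(m)-\bar u(m_0))|\le\eps/4$ for all $m$ with probability at least $1-\delta$. Pairwise differences are then within $\eps/2<\Delta$, so the ranking is exact.
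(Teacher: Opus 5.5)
Your proof is correct, and its skeleton matches the paper's: the reference-pair distribution $\cC$, the truncated logit series for the bias, Bernstein with $O(1)$ variance and $O(\log d)$ range, and then \cref{lem:controlobs} with a union bound. The two local arguments differ.

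\emph{Expectation.} The paper introduces the product statistic $\tilde Z_j(m)=\frac1\beta\sum_{k=1}^d\frac{1}{k}\bigl(\prod_{r\le k}y_r^j-\prod_{r\le k}(1-y_r^j)\bigr)$. It uses the combinatorial identity $\sum_{k=1}^d\binom{w}{k}/(k\binom{d}{k})=H_d-H_{d-w}$ to show that $Z_j(m)=\mathbb{E}[\tilde Z_j(m)\mid W_j(m)]$. The truncated-series expectation then follows at once from $\mathbb{E}[\prod_{r\le k}y_r^j]=p^k$. You instead compute $\mathbb{E}[H_W]$ from the integral representation of $H_W$ and the binomial generating function, which is self-contained and needs no external identity.

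\emph{Second moment.} In the paper, this follows from Jensen plus Minkowski termwise:
\[
\sqrt{\mathbb{E}[Z_j(m)^2]}\le\frac{\sqrt2}{\beta}\sum_k\frac{\sigma(\beta)^{k/2}}{k}.
\]
You bound it through concentration of $W$ and harmonic-number asymptotics. This works, and boundedness of $Z_j$ on the good event already suffices, so the Lipschitz step is unnecessary. It gives less explicit constants and requires a case split.

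The paper's Rao--Blackwell view buys a one-line variance bound. It also shows directly that $\cA_{\mathrm{MLE-UNCONST}}$ and $\cA_{\mathrm{TRR}}$ share the same conditional mean, so one computation serves both. Your route is more elementary.

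Your remark that plain Hoeffding with $|Z_j|\le H_d/\beta$ already proves the proposition is correct. It only inflates $n$ by a $(\log\log(1/\eps))^2$ factor, and \cref{def:repbudget} needs just some finite, instance-independent $n$.
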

\begin{proof}
By \cref{def:repbudget}, it suffices to provide a finite $n$ and model-pair sampling distribution $\cC$ under which $\cA_{\mathrm{MLE-UNCONST}}$ succeeds uniformly over every problem instance with minimum average utility gap $\Delta \geq \eps$. Let $m_0$ be the reference model used by $\cA_{\mathrm{MLE-UNCONST}}$. Define 
\[
\cC := \Unif(\{\{m,m_0\}:m \in \cM \setminus m_0\})
\]
Fix $m \neq m_0$ and $j \in I_m$. After transforming the MLE estimate, we have the statistic 
\[
Z_j(m) = \frac{1}{\beta}(H_{d}-H_{d-W_j(m)}).
\]
Now, consider the following statistic:
\[
\tilde{Z}_j(m) = \frac{1}{\beta}\sum_{k=1}^{d}
\frac{
\prod_{r=1}^{k} y_r^j
-
\prod_{r=1}^{k} (1-y_r^j)
}{
k
}.
\]
Observe that we can write the conditional expectation of the products in the numerator as 
\[
\mathbb{E}\left[\prod_{r=1}^{k} y_r^j|W_j(m) = w \right] = \frac{\binom{w}{k}}{\binom{d}{k}} \qquad \text{and} \qquad \mathbb{E}\left[\prod_{r=1}^{k} (1-y_r^j)|W_j(m)=w \right] = \frac{\binom{d-w}{k}}{\binom{d}{k}}.
\]
Now, Identity 180 from \cite{proofsthatreallycount} gives that
\[
\sum_{k=1}^d \frac{\binom{w}{k}}{k\binom{d}{k}} = H_d - H_{d-w}
\]

It follows that 
\[
\mathbb{E}[\tilde{Z}_j(m)|W_j(m)] = \frac{1}{\beta}(H_{W_j(m)}-H_{d-W_j(m)}) = Z_j(m).
\]
We first compute the expectation of $Z_j(m)$, conditional on the context. Observe that by the Law of Total Expectation, $\mathbb{E}[Z_j(m)|i^j,t^j,j \in I_m] = \mathbb{E}[\tilde{Z}_j(m)|i^j,t^j,j\in I_m]$. Then, we can compute
\begin{align*}
\mathbb{E}[\tilde{Z}_j(m)|i^j,t^j,j\in I_m] =\frac{1}{\beta}
\sum_{k=1}^{d}
\frac{
p_j(m)^k-(1-p_j(m))^k
}{
k}, \\
\end{align*}
Thus, the conditional expectation of $Z_j(m)$ is the series representation of $u_{i^j}(t^j,m) - u_{i^j}(t^j,m_0)$, truncated at $d$. We then bound the bias as follows: 
\begin{align*}
|\mathbb{E}[Z_j(m)|i^j,t^j,j \in I_m]-(u_{i^j}(t^j,m) - u_{i^j}(t^j,m_0))| &\leq \frac{1}{\beta}\sum_{k>d}\frac{|p_j(m)^k-(1-p_j(m))^k|}{k} \\
\intertext{Since $p_j(m),1-p_j(m) \leq \sigma(\beta)$, this gives,}
&\leq \frac{1}{\beta}\sum_{k > d}\frac{2\sigma(\beta)^k}{k} \\
&\leq \frac{2\sigma(\beta)^{d+1}}{\beta(d+1)(1-\sigma(\beta))}.
\end{align*}
Now, if we choose
\[
d \geq \frac{\log \frac{16}{\beta(1-\sigma(\beta))\eps}}{\log \frac{1}{\sigma(\beta)}}=O\left(\log \frac{1}{\eps}\right),
\]
we get 
\[
\frac{2\sigma(\beta)^{d+1}}{\beta(d+1)(1-\sigma(\beta))} \leq \frac{\eps}{8}.
\]

Since the model pair is sampled from $\cC$, independently of the context,
\[
\mathbb{E}[u_{i^j}(t^j,m) - u_{i^j}(t^j,m_0)] = \bar{u}(m) -\bar{u}(m_0).
\]
Then, averaging over contexts gives
\[
|\mathbb{E}[Z_j(m)|j \in I_m] - (\bar{u}(m) -\bar{u}(m_0)) \leq \frac{\eps}{8}.
\]
We have controlled the bias of $Z_j(m)$. It remains to control the variance and bound $|Z_j(m)|$ before completing the proof. For the variance, observe that by Jensen's Inequality 
\[
\mathbb{E}[Z_j(m)^2|i^j,t^j,j \in I_m] \leq \mathbb{E}\left[\tilde{Z}_j(m)^2|i^j,t^j, j \in I_m \right].
\]
Then, it suffices to bound the variance of $\tilde{Z}_j(m)$ by bounding its second moment. First, we rewrite $\tilde{Z}_j(m)$ as

\[
\tilde{Z}_j(m)
=
\frac{1}{\beta}
\sum_{k=1}^{d}
\frac{
\prod_{r=1}^k y_r^j
-
\prod_{r=1}^k (1-y_r^j)
}{
k
}
\]

Then, applying Minkowski's Inequality, we get
\begin{align*}
\sqrt{
\mathbb{E}\!\left[\tilde{Z}_{j}(m)^2 \mid i^j,t^j,j \in I_m \right]
}
&\le
\frac{1}{\beta}
\sum_{k=1}^d
\frac{1}{k}
\sqrt{
\mathbb{E}\!\left[
\left(
\prod_{r=1}^k y_r^j
-
\prod_{r=1}^k(1-y_r^j)
\right)^2
\middle|i^j,t^j,j \in I_m
\right]
}\\
\intertext{Since both products above cannot simultaneously equal 1,}
&=
\frac{1}{\beta}
\sum_{k=1}^d
\frac{
\sqrt{
p_j(m)^k+(1-p_j(m))^k
}
}{
k
}\\
\intertext{Using $p_j(m),1-p_j(m) \leq \sigma(\beta)$ gives}
&\le
\frac{\sqrt{2}}{\beta}
\sum_{k=1}^d
\frac{\sigma(\beta)^{k/2}}{k}\\
&\le
\frac{\sqrt{2\rho}}{\beta}
\sum_{k=1}^{\infty}
\frac{\sigma(\beta)^{k/2}}{k}
=:V_\beta = O(1)<\infty.
\end{align*}
It follows that $\mathbb{E}[\tilde{Z}_j(m)^2|j \in I_m] \leq V_{\beta}^2$.

Finally, we can bound $|Z_j(m)|$ by 
\[
|Z_j(m)| \leq \frac{H_d}{\beta} \leq \frac{1+\log d}{\beta}.
\]
Then, by the choice of $d$, $\eps |Z_j(m)| \leq O(1)$. We can then apply Bernstein's Inequality to 
\[
\widehat{g}_{\mathrm{MLE-UNCONST}}(m) = \frac{1}{|I_m|}\sum_{j \in I_m}Z_j(m)
\]
obtaining that 
\[
\mathbb{P}\left(\widehat{g}_{|\mathrm{MLE-UNCONST}}(m)-\mathbb{E}[Z_j(m)|j \in I_m]| > \eps/8 |I_m \right) \leq 2\exp\{-c_{\beta}I_m\eps^2\}
\]
for some constant $c_{\beta} > 0$. By \cref{lem:controlobs}, taking
\[
n = O\left(\frac{|\cM|}{\eps^2}\log\frac{|\cM|}{\delta} \right)
\]
gives that with probability at least $1-\delta/2$,
\[
|I_m| \geq \frac{1}{c_\beta \eps^2}\log \frac{4|\cM|}{\delta}
\]
simultaneously for all $m \neq m_0$. 
Conditioning on this event and taking a union bound over $m \neq m_0$ gives that with probability at least $1-\delta/2$
\[|\widehat{g}_{\mathrm{MLE-UNCONST}}(m)-\mathbb{E}[{Z_j(m)}|j \in I_m]| \leq \frac{\eps}{8}\]
simultaneously for all $m \neq m_0$. Combining this with the bound on the bias of $Z_j(m)$, we obtain that 
\[
|\widehat{g}_{\mathrm{MLE-UNCONST}}(m) - (\bar{u}(m) - \bar{u}(m_0))| \leq \frac{\eps}{4}.
\]
Take any pair of models $m,m'$. Triangle inequality then implies that 
\[
|(\widehat{g}_{\mathrm{MLE-UNCONST}}(m) - \widehat{g}_{\mathrm{MLE-UNCONST}}(m')) - (\bar{u}(m) - \bar{u}(m'))| \leq \frac{\eps}{2}.
\]
It follows that ranking under $\widehat{g}_{\mathrm{MLE-UNCONST}}(m)$, which is the procedure that $\cA_{\mathrm{MLE-UNCONST}}$ uses, preserves the ranking under average utility.
\end{proof}

\mle*
\begin{proof}
Combining \cref{prop:mle-cal} and \cref{prop:mle-unconst} gives the result.
\end{proof}
\newpage
\section{Truncated Russian Roulette Algorithm} \label{appendix:randomtruncated}
\subsection{Formal Definition of $\cA_{\mathrm{TRR}}$} \label{trr-defn}
Fix an arbitrary reference model $m_0 \in \cM$. Given a dataset $S=\{(i^j,t^j,m_1^j,m_2^j,y_1^j,\ldots,y_d^j)\}_{j=1}^n$ with parameters $(n,d)$, for each $m \neq m_0$, let 
\[
I_m = \{ j: \{m_1^j,m_2^j\} = \{m,m_0\}\}
\]
be the set of contexts in which $m$ is compared to $m_0$. Without loss of generality, assume $m = m_1^j$.

Fix $\rho \in (\sigma(\beta),1)$. For each context $j$, independently draw $K_j \in \{1,\ldots,d\}$ according to
\[
\mathbb{P}(K_j \geq k) = \frac{\rho^{k-1}}{k}, \qquad k = 1,...,d.
\]

Fix $m \neq m_0$. For each $j \in I_m$, let
\[
W_j(m) = \sum_{r=1}^{K_j}y_r^j
\]
denote the number of wins by $m$ in the first $K_j$ comparisons against $m_0$ in context $j$.

Then, define the statistic \[ Z_j(m) =\frac{1}{\beta}
\sum_{k=1}^{K_j}
\frac{
\displaystyle
\frac{\binom{W_j(m)}{k}}{\binom{K_j}{k}}
-
\displaystyle
\frac{\binom{K_j-W_j(m)}{k}}{\binom{K_j}{k}}
}{
k\mathbb{P}(K_j \geq k)} = \frac{1}{\beta}\sum_{k=1}^{K_j}
\frac{
\displaystyle
\frac{\binom{W_j(m)}{k}}{\binom{K_j}{k}}
-
\displaystyle
\frac{\binom{K_j-W_j(m)}{k}}{\binom{K_j}{k}}
}{
\rho^{k-1}}. \] Now, for each model $m \neq m_0$, the estimate for the average utility relative to $m_0$ is 
\[
\widehat{g}_{\mathrm{TRR}}(m) = \frac{1}{|I_m|}\sum_{j\in I_m} Z_j(m).
\]
$\widehat{g}_{\mathrm{TRR}}(m_0) = 0$, and for completeness, if $I_m$ is empty, set $\widehat{g}_{\mathrm{TRR}}(m) = 0$. 

$\cA_{\mathrm{TRR}}$ then ranks models in decreasing order of $\widehat{g}_{\mathrm{TRR}}(m)$, breaking ties according to the fixed ordering of $\cM$.
\subsection{Proof of \cref{thm:randomtruncated}}
\randomtruncated*
\begin{proof}
By \cref{def:repbudget}, it suffices to provide a finite $n$ and model-pair sampling distribution $\cC$ under which $\cA_{\mathrm{TRR}}$ succeeds uniformly over every problem instance with minimum average utility gap $\Delta \geq \eps$. Let $m_0$ be the reference model used by $\cA_{\mathrm{TRR}}$. Define 
\[
\cC := \Unif(\{\{m,m_0\}:m \in \cM \setminus m_0\})
\]
Fix $m \neq m_0$. For each context $j \in I_m$, we can write 
\[
u_{i^j}(t^j,m) - u_{i^j}(t^j,m_0) = \frac{1}{\beta}\log \frac{p_j(m)}{1-p_j(m)}=\frac{1}{\beta} \sum_{k=1}^{\infty} \frac{p_j(m)^k - (1 - p_j(m))^k}{k},
\]
where $p_j(m) = \sigma(\beta(u_{i^j}(t^j,m) - u_{i^j}(t^j,m_0)))$ is the BT win probability for $m$ over $m_0$.
Now, consider the following statistic:
\[
\tilde{Z}_j(m) = \frac{1}{\beta}\sum_{k=1}^{K_j}
\frac{
\prod_{r=1}^{k} y_r^j
-
\prod_{r=1}^{k} (1-y_r^j)
}{
k \mathbb{P}(K_j \geq k)
}.
\]
Now, observe that 
\[
\mathbb{E}\left[\prod_{r=1}^{k} y_r^j|K_j,W_j(m) \right] = \frac{\binom{W_j(m)}{k}}{\binom{K_j}{k}} \qquad \text{and} \qquad \mathbb{E}\left[\prod_{r=1}^{k} (1-y_r^j)|K_j,W_j(m) \right] = \frac{\binom{K_j-W_j(m)}{k}}{\binom{K_j}{k}}.
\]
It follows that the statistic we use for our estimator $Z_j(m)$ can be written as
\[
Z_j(m) = \mathbb{E}[\tilde{Z}_j(m)|K_j,W_j(m)].
\]
We first compute the expectation of $Z_j(m)$, conditional on the context. Observe that by the Law of Total Expectation, $\mathbb{E}[Z_j(m)|i^j,t^j,j \in I_m] = \mathbb{E}[\tilde{Z}_j(m)|i^j,t^j,j\in I_m]$. Then, we can compute
\begin{align*}
\mathbb{E}[\tilde{Z}_j(m)|i^j,t^j,j\in I_m] &= \sum_{k=1}^d \mathbb{P}(K_j=k)\mathbb{E}[\tilde{Z}_j(m)|i^j,t^j,j\in I_m,K_j=k] \\
&=\frac{1}{\beta}
\sum_{k=1}^{d}
\mathbb{P}(K_j=k)
\sum_{\ell=1}^{k}
\frac{
p_j(m)^\ell-(1-p_j(m))^\ell
}{
\ell \mathbb{P}(K_j \geq \ell)
}
\\
&=
\frac{1}{\beta}
\sum_{\ell=1}^{d}
\frac{
p_j(m)^\ell-(1-p_j(m))^\ell
}{
\ell \mathbb{P}(K_j \geq \ell)
}
\sum_{k=\ell}^{d}
\mathbb{P}(K_j=k) \\
&=
\frac{1}{\beta}
\sum_{\ell=1}^{d}
\frac{
p_j(m)^\ell-(1-p_j(m))^\ell
}{
\ell \mathbb{P}(K_j \geq \ell)}\mathbb{P}(K_j\geq \ell) \\
&=\frac{1}{\beta}
\sum_{\ell=1}^{d}
\frac{
p_j(m)^\ell-(1-p_j(m))^\ell
}{
\ell}, \\
\end{align*}
Thus, the conditional expectation of $Z_j(m)$ is the series representation of $u_{i^j}(t^j,m) - u_{i^j}(t^j,m_0)$, truncated at $d$. We then bound the bias as follows: 
\begin{align*}
|\mathbb{E}[Z_j(m)|i^j,t^j,j \in I_m]-(u_{i^j}(t^j,m) - u_{i^j}(t^j,m_0))| &\leq \frac{1}{\beta}\sum_{k>d}\frac{|p_j(m)^k-(1-p_j(m))^k|}{k} \\
\intertext{Since $p_j(m),1-p_j(m) \leq \sigma(\beta)$, this gives,}
&\leq \frac{1}{\beta}\sum_{k > d}\frac{2\sigma(\beta)^k}{k} \\
&\leq \frac{2\sigma(\beta)^{d+1}}{\beta(d+1)(1-\sigma(\beta))}.
\end{align*}
Now, if we choose
\[
d \geq \frac{\log \frac{16}{\beta(1-\sigma(\beta))\eps}}{\log \frac{1}{\sigma(\beta)}}=O\left(\log \frac{1}{\eps}\right),
\]
we get 
\[
\frac{2\sigma(\beta)^{d+1}}{\beta(d+1)(1-\sigma(\beta))} \leq \frac{\eps}{8}.
\]

Since the model pair is sampled from $\cC$, independently of the context,
\[
\mathbb{E}[u_{i^j}(t^j,m) - u_{i^j}(t^j,m_0)] = \bar{u}(m) -\bar{u}(m_0).
\]
Then, averaging over contexts gives
\[
|\mathbb{E}[Z_j(m)|j \in I_m] - (\bar{u}(m) -\bar{u}(m_0)) \leq \frac{\eps}{8}.
\]
We have controlled the bias of $Z_j(m)$. It remains to control the variance and bound $|Z_j(m)|$ before completing the proof. For the variance, observe that by Jensen's Inequality 
\[
\mathbb{E}[Z_j(m)^2|i^j,t^j,j \in I_m] \leq \mathbb{E}\left[\tilde{Z}_j(m)^2|i^j,t^j, j \in I_m \right].
\]
Then, it suffices to bound the variance of $\tilde{Z}_j(m)$ by bounding its second moment. First, we rewrite $\tilde{Z}_j(m)$ as

\[
\tilde{Z}_j(m)
=
\frac{1}{\beta}
\sum_{k=1}^{K_j}
\frac{
\prod_{r=1}^k y_r^j
-
\prod_{r=1}^k (1-y_r^j)
}{
\rho^{k-1}
}
=
\frac{1}{\beta}
\sum_{k=1}^{d}
\mathbf 1\{K_j\ge k\}
\frac{
\prod_{r=1}^k y_r^j
-
\prod_{r=1}^k (1-y_r^j)
}{
\rho^{k-1}
}.
\]

Then, applying Minkowski's Inequality, we get
\begin{align*}
\sqrt{
\mathbb{E}\!\left[\tilde{Z}_{j}(m)^2 \mid i^j,t^j,j \in I_m \right]
}
&\le
\frac{1}{\beta}
\sum_{k=1}^d
\frac{1}{\rho^{k-1}}
\sqrt{
\mathbb{E}\!\left[
\mathbf 1\{K_j\ge k\}
\left(
\prod_{r=1}^k y_r^j
-
\prod_{r=1}^k(1-y_r^j)
\right)^2
\middle|i^j,t^j,j \in I_m
\right]
}\\
\intertext{Since both products above cannot simultaneously equal 1,}
&=
\frac{1}{\beta}
\sum_{k=1}^d
\frac{
\sqrt{
\mathbb{P}(K_j \geq k)\left(p_j(m)^k+(1-p_j(m))^k\right)
}
}{
\rho^{k-1}
}\\
\intertext{Using $p_j(m),1-p_j(m) \leq \sigma(\beta)$ and plugging in $\mathbb{P}(K_j \geq k)= \frac{\rho^{k-1}}{k}$ gives}
&\le
\frac{\sqrt{2\rho}}{\beta}
\sum_{k=1}^d
\frac{1}{\sqrt{k}}
\left(
\frac{\sigma(\beta)}{\rho}
\right)^{k/2}\\
&\le
\frac{\sqrt{2\rho}}{\beta}
\sum_{k=1}^{\infty}
\frac{1}{\sqrt{k}}
\left(
\frac{\sigma(\beta)}{\rho}
\right)^{k/2}
=:V_\beta = O(1)<\infty.
\end{align*}
In the final line, we know the series converges because $\rho > \sigma(\beta)$. It follows that $\mathbb{E}[\tilde{Z}_j(m)^2|j \in I_m] \leq V_{\beta}^2$.

Next, we obtain a bound on $|Z_j(m)|$. Note that boundedness of $|Z_j(m)|$ is immediate from the fact that $K_j$ is truncated at $d$. This would be sufficient for the remainder of the proof. However, in the interest of actually quantifying the number of sampled contexts that we need, we explicitly bound $|Z_j(m)|$.  
\[
|Z_j(m)| \leq \frac{1}{\beta}\sum_{k=1}^{K_j} \frac{1}{\rho^{k-1}} \leq \frac{1}{\beta}\sum_{k=1}^d\frac{1}{\rho^{k-1}} \leq C_{\beta}\rho^{-d}
\]
for some constant $C_{\beta}$. Then, since $d = O\left(\log \frac{1}{\eps}\right)$ and $\sigma(\beta) < \rho < 1$, there exists an exponent $a_{\beta} \in (0,1)$ such that 
\[
\rho^{-d} \leq C_{\beta}\eps^{-a_{\beta}}.
\]

We can now apply Bernstein's Inequality to 
\[
\widehat{g}_{\mathrm{TRR}}(m) = \frac{1}{|I_m|}\sum_{j \in I_m} Z_j(m),
\]
obtaining that 
\[
\mathbb{P}\left(|\widehat{g}_{\mathrm{TRR}}(m)-\mathbb{E}[{Z_j(m)}|j \in I_m]| > \frac{\eps}{8}|I_m \right) \leq 2\exp(-c_{\beta}|I_m|\eps^2)
\]
for some constant $c_\beta > 0$. By \cref{lem:controlobs}, taking
\[
n = O\left(\frac{|\cM|}{\eps^2}\log\frac{|\cM|}{\delta} \right)
\]
gives that with probability at least $1-\delta/2$,
\[
|I_m| \geq \frac{1}{c_\beta \eps^2}\log \frac{4|\cM|}{\delta}
\]
simultaneously for all $m \neq m_0$. 
Conditioning on this event and taking a union bound over $m \neq m_0$ gives that with probability at least $1-\delta/2$
\[|\widehat{g}_{\mathrm{TRR}}(m)-\mathbb{E}[{Z_j(m)}|j \in I_m]| \leq \frac{\eps}{8}\]
simultaneously for all $m \neq m_0$. Combining this with the bound on the bias of $Z_j(m)$, we obtain that 
\[
|\widehat{g}_{\mathrm{TRR}}(m) - (\bar{u}(m) - \bar{u}(m_0))| \leq \frac{\eps}{4}.
\]
Take any pair of models $m,m'$. Triangle inequality then implies that 
\[
|(\widehat{g}_{\mathrm{TRR}}(m) - \widehat{g}_{\mathrm{TRR}}(m')) - (\bar{u}(m) - \bar{u}(m'))| \leq \frac{\eps}{2}.
\]
It follows that ranking under $\widehat{g}_{\mathrm{TRR}}(m)$, which is the procedure that $\cA_{\mathrm{TRR}}$ uses, preserves the ranking under average utility. We conclude the proof by calculating the expectation of $K_j$. 
\[
\mathbb{E}[K_j] = \sum_{k=1}^d \mathbb{P}(K_j\geq k) = \sum_{k=1}^d \frac{\rho^{k-1}}{k} \leq \frac{1}{\rho}\sum_{k=1}^{\infty} \frac{\rho^k}{k} = \frac{1}{\rho}\log \frac{1}{1-\rho} = O(1).
\]
\end{proof}
\TRRsamples*
\begin{proof}
The proof of \cref{thm:randomtruncated} gives the desired bound on $n$. The result follows immediately.
\end{proof}
\newpage
\section{Lower Bound}
Throughout this section, we denote 
\[
h(p) = \frac{1}{\beta}\log \frac{p}{1-p},
\] where $\beta > 0$.
\begin{proposition}\label{prop:mus}
Fix $\beta > 0$, $d \geq 1$. Let $P_d$ denote the set of polynomials of degree at most $d$ and define 
\[
E_d = \inf_{P \in P_d}\sup_{p \in [\sigma(-\beta),1-\sigma(-\beta)]}|h(p)-P(p)|
\]
Then, there exist two distributions $\mu_+$ and $\mu_-$, both supported on $[\sigma(-\beta),1-\sigma(-\beta)]$ such that
\[
\int p^k d\mu_+(p) = \int p^k d\mu_-(p), \qquad k=0,1,...,d,
\]
and 
\[
\int h(p) d\mu_+(p) = -\int h(p) d\mu_-(p)=\gamma_d
\]
for some $\gamma_d \geq E_d$
\end{proposition}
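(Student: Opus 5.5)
We construct $\mu_\pm$ by duality from the best uniform polynomial approximation problem on $I:=[\sigma(-\beta),1-\sigma(-\beta)]$. The first step is to use the antisymmetry of $h$ about $1/2$, namely $h(1-p)=-h(p)$, together with the symmetry of $I$ about $1/2$.

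We take $P^\star\in P_d$ to be a best approximation, which exists by compactness. Then $E_d=\|h-P^\star\|_\infty>0$, and the inequality is strict because $h$ is not a polynomial. By the Chebyshev equioscillation theorem, there are points $x_0<x_1<\dots<x_{d+1}$ in $I$ with
\[
h(x_i)-P^\star(x_i)=s(-1)^iE_d
\]
for a fixed sign $s$.

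Next we build a signed measure $\nu=\sum_{i=0}^{d+1}c_i\delta_{x_i}$ that annihilates $P_d$. Its weights are the divided-difference weights $c_i=\prod_{k\neq i}(x_i-x_k)^{-1}$. With these weights $\sum_i c_iP(x_i)=0$ for every $P\in P_d$, and the signs alternate, $\mathrm{sign}(c_i)=(-1)^{d+1-i}$. We normalize so that $\sum_i|c_i|=2$.

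Since $\sum_i c_i=0$, the positive part $\nu^+$ and the negative part $\nu^-$ are probability measures supported on $I$, and they share the moments of orders $0,\dots,d$. For the objective we compute
\[
\int h\,d\nu^+-\int h\,d\nu^-=\sum_i c_i\bigl(h(x_i)-P^\star(x_i)\bigr)=\pm\sum_i|c_i|E_d=\pm 2E_d .
\]
Here we use that the signs of $c_i$ and of the error align up to a global sign. Swapping $\nu^+$ and $\nu^-$ if necessary makes this difference equal to $+2E_d$.

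The remaining task, which is the main obstacle, is to make the two integrals exactly $\gamma_d$ and $-\gamma_d$ rather than merely differing by $2E_d$. For this we symmetrize using the reflection $R(p)=1-p$, which maps $I$ to itself and satisfies $h\circ R=-h$. We set
\[
\mu_+=\tfrac12\bigl(\nu^+ + R_\#\nu^-\bigr),\qquad \mu_-=\tfrac12\bigl(\nu^- + R_\#\nu^+\bigr).
\]
The moments still agree up to order $d$. The reason is that $p\mapsto(1-p)^k$ lies in $P_d$, so $R_\#\nu^+$ and $R_\#\nu^-$ have equal moments up to order $d$, just as $\nu^+$ and $\nu^-$ do. Writing $a=\int h\,d\nu^+$ and $b=\int h\,d\nu^-$, the reflection gives
\[
\int h\,d\mu_+=\tfrac12(a-b)=E_d,\qquad \int h\,d\mu_-=\tfrac12(b-a)=-E_d .
\]
Thus $\gamma_d=E_d$, which satisfies $\gamma_d\ge E_d$.

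The delicate point is the sign alignment between the divided-difference weights and the equioscillation signs. Both sequences strictly alternate over the same ordered points, so their product has a constant sign, and this is the standard de la Vallée Poussin argument.

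Alternatively, the same conclusion follows more abstractly from Hahn–Banach and Riesz representation. The functional $f\mapsto\int f\,d\nu$ that vanishes on $P_d$ and has norm $1$ attains $\mathrm{dist}(h,P_d)$. Its Jordan decomposition then gives the two measures, and symmetrization is applied as above. If sign issues arise in the first route, I would fall back on this one.
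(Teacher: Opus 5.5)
Your proposal is correct and follows essentially the same route as the paper: a best approximation $P^\star$ with $d+2$ equioscillation points, the divided-difference weights $c_i=\prod_{k\neq i}(x_i-x_k)^{-1}$ as a $P_d$-annihilating measure whose signs align with the error, and symmetrization under $R(p)=1-p$ using $h\circ R=-h$. The only difference is the order of steps. You split $\nu$ into $\nu^{\pm}$ before reflecting, so your $\mu_+-\mu_-$ coincides, up to normalization, with the paper's $\tilde{\nu}=\nu-R_{\#}\nu$, but you keep any mass common to $\mu_+$ and $\mu_-$ instead of taking a fresh Jordan decomposition. This gives $\gamma_d=E_d$ exactly and spares you the paper's total-variation argument that $C\le 1$.
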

\begin{proof}
Let $P^* \in P_d$ denote a polynomial where 
\[
||P^*-h||_{\infty} = E_d.
\]
That is, $P^*$ by definition minimizes $\ell_{\infty}$ norm with respect to $h$. Note that $E_d > 0$, since $h$ is not a degree $d$ polynomial. By Chebyshev's Alternation Theorem, this is the case if and only if there exist distinct points $p_0>p_1 >...>p_{d+1} \in [\sigma(-\beta),1-\sigma(-\beta)]$ such that 
\[
P^*(p_i)-h(p_i) = s(-1)^iE_d,
\]
where $s \in \{-1,+1\}$.
Now, for each $i$, define the coefficient 
\[
c_i = \frac{1}{\prod_{j \neq i}(p_i - p_j)}.
\]
We will show that these coefficients annihilate $P_d$. In particular, consider the Lagrange basis polynomials,
\[
L_i(x) = \prod_{j \neq i}\frac{x - p_j}{p_i-p_j}.
\]
Observe that $L_i(p_i)=1$ and $L_i(p_j)=0$ for all $j \neq i$. Now, for any $P \in P_d$, we can use Lagrange interpolation to write 
\[
P(x) = \sum_{i=0}^{d+1} P(p_i)L_i(x).
\]
Now, observe that the coefficient in front of $x^{d+1}$ in $L_i(x)$ is exactly $c_i$. Thus, the coefficient in front of $x^{d+1}$ in the right hand side above is 
\[
\sum_{i=0}^{d+1} c_i P(p_i).
\]
However, since $P$ is a polynomial of degree at most $d$, it follows that 
\[
\sum_{i=0}^{d+1} c_i P(p_i) = 0.
\]
Next, observe that since the points $p_0,...,p_{d+1}$ are ordered, the $c_i$'s alternate in sign, and the Chebyshev's Alternation Theorem statement says that $P^*(p_i)-h(p_i)$ also alternates in sign. Thus, without loss of generality, we can multiply all the $c_i$'s by $-1$ if necessary and normalize so that $\sum_i |c_i| = 1$ and 
\[
c_i(h(p_i)-P^*(p_i))=|c_i|E_d
\]
for all $i$.

Now, we proceed with actually defining the desired distributions. To start, define the signed measure 
\[
\nu = \sum_{i=0}^{d+1}c_i \delta_{p_i}
\]
Then, for any $P \in P_d$,
\[
\int P(p)d\nu(p) = 0.
\]
This gives that 
\begin{align*}
\int h(p) d\nu(p) &= \int (h(p)-P^*(p))d\nu(p) + \int P^*(p) d\nu(p) \\
&= \int (h(p)-P^*(p))d\nu(p) \\
&= \sum_{i=0}^{d+1}c_i(h(p_i)-P^*(p_i)) \\
&= E_d.
\end{align*}
Now that we have defined $\nu$, let $R_{\#}\nu = \sum_{i=0}^{d+1}c_i \delta_{1-p_i}$ denote the pushforward of $\nu$ by $R(p)=1-p$ and define a new measure $\tilde{\nu} = \nu - R_{\#}\nu$. It follows that for any $P \in P_d$
\[
\int P(p)d\tilde{\nu}(p) = \int P(p)d\nu(p) - \int P(1-p)d\nu(p) = 0,
\]
so $\tilde{\nu}$ retains the property of annihilating every polynomial in $P_d$. Next, using the fact that $h(p) = -h(1-p)$, we have that 
\[
\int h(p)d\tilde{\nu}(p) = \int h(p)d\nu(p) - \int h(1-p)d\nu(p) = 2 \int h(p) d\nu(p) = 2E_d.
\]
Now, write $\tilde{\nu} = \tilde{\nu}^+ - \tilde{\nu}^-$, where $(\tilde{\nu}^+,\tilde{\nu}^-)$ is the Jordan decomposition of $\tilde{\nu}$. Since $\tilde{\nu}$ annihilates every polynomial in $P_d$, which includes the constant polynomial, $\tilde{\nu}$ must have a mass of 0, which implies that $\tilde{\nu}^+$ and $\tilde{\nu}^-$ have identical positive mass $C$. We now have all of the components to define our distributions. Concretely, set 
\[
\mu_+ = \frac{\tilde{\nu}^+}{C} \qquad \text{and} \qquad \mu_- = \frac{\tilde{\nu}^-}{C}.
\]
Now for any $k \in \{0,...,d\}$, we know that 
\[
\int p^k d\tilde{\nu}(p) = \int p^k d\tilde{\nu}^+(p) -  \int p^k d\tilde{\nu}^-(p) = 0 \implies \int p^k d\tilde{\nu}^+(p) = \int p^k d\tilde{\nu}^-(p),
\]
since $p^k \in P_d$. Dividing by $C$ gives 
\[
\int p^k d\mu_+(p) = \int p^k d\mu_-(p),
\]
proving the first part of the proposition. For the second part, note that 
\[
R_{\#}\tilde{\nu} = R_{\#}\nu - R_{\#}R_{\#}\nu = R_{\#}\nu - \nu = -\tilde{\nu}.
\]
This implies that $R_{\#}\mu_+ = \mu_-$. Then, we have that 
\[
\int h(p)d\mu_-(p) = \int h(p)d(R_{\#}\mu_+)(p) = \int h(1-p)d\mu_+(p) = -\int h(p)d\mu_+(p) = -\gamma_d.
\]
This proves the second part of the proposition. It remains to show that $\gamma_d \geq E_d$. To see this, observe that 
\[
2C\gamma_d = \int h(p) d\tilde{\nu}(p) = 2E_d,
\]
so it suffices to show that $C \leq 1$. Indeed, since the total variation norm satisfies triangle inequality, we have that 
\[
||\tilde{\nu}||_{\TV} \leq ||\nu||_{\TV}+||R_{\#}\nu||_{\TV} \leq 2,
\]
where the final inequality follows from the definitions of the respective measures and that fact that $\sum_i |c_i| = 1$. At the same time, we have that 
\[
||\tilde{\nu}||_{\TV} = ||\tilde{\nu}^+||_{\TV} + ||\tilde{\nu}^-||_{\TV} = 2C
\]
It follows that $C \leq 1$, completing the proof.
\end{proof}

\repeatedcomparisonslowerbound*
\begin{proof}
Let $\cM = \{A,B\}$ and $\cT = \{t\}$, so $\cP_i$ is degenerate for every user $i$. Additionally, in this setting, there is only one possible model-pair sampling distribution. Namely, $\cC(\{A,B\})=1$.
Now, for a fixed user $i$, let 
\[
p_i = \sigma(\beta(u_i(t,A)-u_i(t,B))).
\]
Since $u_i(t,A)-u_i(t,B) \in [-1,1]$, we have that $p_i \in [\sigma(-\beta),1-\sigma(-\beta)]$. Now, recall that
\[
h(p_i) = \frac{1}{\beta}\log \frac{p_i}{1-p_i},
\] so we have that 
\[
u_i(t,A)-u_i(t,B) = h(p_i) \implies \bar{u}(A) - \bar{u}(B) = \mathbb{E}[h(p_i)].
\]
Although the task distribution is degenerate, users are still sampled randomly and are heterogeneous, so $p_i$ is drawn from some meta distribution $\mu$ supported on $[\sigma(-\beta),1-\sigma(-\beta)]$. For the proof, it suffices to construct two distributions, $\mu_+$ and $\mu_-$ that induce the same law over $d$ repeated comparisons, but $\mathbb{E}_{p_i \sim \mu_+}[h(p_i)]$ and $\mathbb{E}_{p_i \sim \mu_-}[h(p_i)]$ have opposite signs. 

For clarity, we again define
\[
E_d = \inf_{P \in P_d}\sup_{p_i \in [\sigma(-\beta),1-\sigma(-\beta)]}|h(p_i)-P(p_i)|,
\]
where $P_d$ denotes the set of polynomials of degree at most $d$. Indeed, \cref{prop:mus} gives us two distributions $\mu_+$ and $\mu_-$ that share the same first $d$ moments and induce opposite signs of $h(p_i)$ in expectation. In particular, 
\[
\mathbb{E}_{p \sim \mu_+}[h(p)] = \gamma_d \qquad \text{and} \qquad \mathbb{E}_{p \sim \mu_-}[h(p)] = -\gamma_d,
\]
where $\gamma_d \geq E_d > 0$
Now, we show that in a sampled context $(i^j,t)$, $d$ conditionally independent repeated Bradley-Terry comparisons must have the same joint distribution under $\mu_+$ and $\mu_-$. Let $Y^j=(y_1^j,...,y_d^j) \in \{0,1\}^d$ denote the comparison vector, where $y_r^j=1$ indicates that $A$ won in comparison $r$ over $B$. For a fixed comparison vector $y$ let $s$ denote the element-wise sum of $y$. Then,
\[
\mathbb{P}(Y^j=y|p) = p^s(1-p)^{d-s}
\]
Therefore,
\[
\mathbb{P}_{\mu_+}(Y^j=y) = \int p^s(1-p)^{d-s}d \mu_+(p).
\]
Similarly,
\[
\mathbb{P}_{\mu_-}(Y^j=y) = \int p^s(1-p)^{d-s}d \mu_-(p).
\]
In both cases, the integrand is a polynomial of degree at most $d$, meaning that it can be written entirely as a linear combination of the first $d$ moments. From the first item of \cref{prop:mus}, $\mu_+$ and $\mu_-$ have identical first $d$ moments, so it must be that 
\[
\mathbb{P}_{\mu_+}(Y^j=y) = \mathbb{P}_{\mu_-}(Y^j=y).
\]
This proves the first part of the theorem. To obtain the quantitative lower bound on $\gamma_d$, it suffices to lower bound $E_d$.
\begin{lemma}\label{lem:boundEd}
Let $r_\beta = \tanh\left(\frac{\beta}{4}\right)$. Then, 
\[
E_d = \Omega\left(\frac{r_{\beta}^d}{d} \right)
\]
\end{lemma}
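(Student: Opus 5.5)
The plan is to reduce to a Chebyshev-coefficient computation on $[-1,1]$. The estimate used at the end is standard: if $f=\sum_k a_k T_k$ on $[-1,1]$, then for every polynomial $P$ of degree at most $d$ and every $k>d$,
\[
|a_k| = \left|\frac{2}{\pi}\int_{-1}^1 \frac{(f(x)-P(x))T_k(x)}{\sqrt{1-x^2}}\,dx\right| \le 2\|f-P\|_\infty .
\]
This holds because $T_k$ is orthogonal to $P$ in the Chebyshev weight, $|T_k|\le 1$, and $\frac{2}{\pi}\int_{-1}^1 (1-x^2)^{-1/2}\,dx=2$. Hence $E_d\ge \frac12\max_{k>d}|a_k|$, so it suffices to compute the Chebyshev coefficients of the rescaled logit.

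\textbf{Step 1 (rescaling).} Write $a=\sigma(-\beta)$ and $t=1-2a=\tanh(\beta/2)$. Substitute $p=\frac12(1+tx)$ with $x\in[-1,1]$; this affine map sends $[-1,1]$ onto $[\sigma(-\beta),1-\sigma(-\beta)]$. Since $\log\frac{p}{1-p}=2\operatorname{artanh}(2p-1)$, we get
\[
h(p)=f(x):=\frac{2}{\beta}\operatorname{artanh}(tx).
\]
Affine substitutions preserve polynomial degree, so $E_d$ equals the best degree-$d$ uniform approximation error of $f$ on $[-1,1]$.

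\textbf{Step 2 (explicit expansion).} Set $r=\tanh(\beta/4)$, so that $t=\frac{2r}{1+r^2}$ by the double-angle formula. I will start from the generating function
\[
\sum_{k\ge1}\frac{r^k\cos(k\theta)}{k}=-\tfrac12\log\bigl(1-2r\cos\theta+r^2\bigr).
\]
Taking its odd part in $\cos\theta$ (subtracting the version with $\theta$ replaced by $\pi-\theta$) gives
\[
\sum_{k\ \mathrm{odd}}\frac{r^k T_k(x)}{k}=\frac14\log\frac{1+2rx+r^2}{1-2rx+r^2}=\frac12\operatorname{artanh}\!\Bigl(\frac{2rx}{1+r^2}\Bigr)=\frac12\operatorname{artanh}(tx).
\]
Consequently
\[
f(x)=\frac{4}{\beta}\sum_{k\ \mathrm{odd}}\frac{r^k}{k}T_k(x).
\]
Since $r<1$, the series converges absolutely and uniformly on $[-1,1]$, which justifies reading off the coefficients $a_k=\frac{4r^k}{\beta k}$ for odd $k$ and $a_k=0$ for even $k$.

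\textbf{Step 3 (conclusion).} Let $k^\ast\in\{d+1,d+2\}$ be the odd one. The estimate above then gives
\[
E_d\ge \frac{2r^{k^\ast}}{\beta k^\ast}\ge \frac{2r^{2}}{\beta}\cdot\frac{r^{d}}{d+2}=\Omega\!\left(\frac{r_\beta^d}{d}\right),
\]
where the constant depends only on $\beta$, as allowed throughout.

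The main obstacle is Step 2, namely getting the closed-form Chebyshev expansion with the exact ratio $r=\tanh(\beta/4)$ rather than a cruder Bernstein-ellipse bound. This requires checking the odd-part identity and the identity $\frac{2r}{1+r^2}=\tanh(\beta/2)$. The rest is routine orthogonality.
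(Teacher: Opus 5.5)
Your proposal is correct and follows essentially the same route as the paper: rescale $[\sigma(-\beta),1-\sigma(-\beta)]$ affinely to $[-1,1]$, use $\tanh(\beta/2)=2r_\beta/(1+r_\beta^2)$ and the $\log(1-2r\cos\theta+r^2)$ generating function to get $h=\frac{4}{\beta}\sum_{k\ \mathrm{odd}}\frac{r_\beta^k}{k}T_k$, then apply $|a_k(h-P)|\le 2\|h-P\|_\infty$ at the smallest odd index exceeding $d$. Your write-up is slightly more explicit than the paper's (you derive the odd-part identity and justify the coefficient bound via Chebyshev orthogonality), but the argument and the resulting constant $\frac{2r_\beta^{k_d}}{\beta k_d}$ are identical.
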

\begin{proof}
Using the identity $\tanh(x)=2\sigma(2x)-1$, we can map $x \in [-1,1]$ to 
\[
p = \frac{1+\tanh(\beta/2)x}{2} \in [\sigma(-\beta),1-(\sigma(-\beta))].
\]
Then, we can write 
\[
h(p) = h\left(\frac{1+\tanh(\beta/2)x}{2}\right) = \frac{1}{\beta}\log \frac{1+\tanh(\beta/2)x}{1-\tanh(\beta/2)x}.
\]
Now, we use Chebyshev expansion to write the above function. Using the substitution $x = \cos \theta$ and identity $\tanh(\beta/2)=\frac{2\tanh(\beta/4)}{1+\tanh^2(\beta/4)}$, we have
\[
h\left(\frac{1+\tanh(\beta/2)x}{2}\right) = \frac{1}{\beta}\left[\log(1+r_{\beta}^2+2r_\beta \cos\theta)-\log(1+r_{\beta}^2-2r_\beta \cos\theta) \right].
\]
Then, we use the series expansion $\log(1+r^2-2r\cos\theta) = -2\sum_{k=1}^{\infty} \frac{r^k}{k}\cos(k\theta)$ to obtain the desired Chebyshev Expansion,
\[
h(p) = \frac{4}{\beta}\sum_{k \geq 1, k\text{ odd}}\frac{r_{\beta}^k}{k}\cos(k\theta).
\]
Take any $P \in P_d$. Now, let $k_d$ denote the smallest odd integer strictly greater than $d$. By definition, in the Chebyshev expansion of $P$, the coefficient in front of $\cos(k_d \theta)$ is 0. On the other hand, the coefficient in front of  $\cos(k_d \theta)$ in the Chebyshev expansion of $h$ above is 
\[
\frac{4r_{\beta}^{k_d}}{\beta k_d}.
\]
Putting this together and using the fact that for any continuous function $f$, the $k$'th Chebyshev coefficient $a_k(f)$ is such that $|a_k(f)| \leq 2||f||_{\infty}$, we get 
\[
||h-P||_{\infty} \geq \frac{1}{2}\frac{4r_{\beta}^{k_d}}{\beta k_d} \geq \frac{2r_{\beta}^{k_d}}{\beta k_d}
\]
Taking the infimum over $P \in P_d$ then gives that 
\[
E_d \geq \frac{2r_{\beta}^{k_d}}{\beta k_d},
\]
and the result follows by the fact that $d < k_d \leq d+2$.
\end{proof}
We conclude the proof of the theorem by constructing the two random instances, $\cI_+$ and $\cI_-$. Fix any finite number of sampled contexts $n$, and let $\cD$ be the uniform distribution over $N$ users, where $N$ will be chosen later. 

Again, consider the distributions $\mu_+$ and $\mu_-$ provided by \cref{prop:mus}. Under instance $\cI_+$, draw $p_1,...,p_N \overset{\mathrm{i.i.d.}}{\sim} \mu_+$, and under $\cI_-$, draw $p_1,...,p_N \overset{\mathrm{i.i.d.}}{\sim} \mu_-$. These values are drawn once when constructing the instance and then held fixed when generating the dataset $S$. Probabilities under $\cI_+$ and $\cI_-$ average over this initial assignment of $p_1,...,p_N$, the sampling of $S$, and any randomness involved with a ranking algorithm. 

In both instances, for each user $i$, define 
\[
u_i(t,A) = \frac{1+h(p_i)}{2} \qquad \text{and} \qquad u_i(t,B) = \frac{1-h(p_i)}{2}.
\]
Conditional on all $n$ sampled users being distinct, the corresponding $p_i$'s are drawn independently from $\mu_+$ or $\mu_-$, which share the same first $d$ moments as we have shown. Then, $d$ repeated comparisons share the same distribution under the two instances. Subsequently, since the users are sampled from the same distribution $\cD$, conditional on all the sampled users being distinct, the dataset $S$ generated with parameters $(n,d)$ has the same distribution under the two instances. To see that the probability of sampling the same user twice can be arbitrarily small, observe that the probability is upper bounded by 
\[
\frac{\binom{n}{2}}{N},
\]
and we can choose $N$ arbitrarily large.

The final part of the proof involves showing that the two instances induce opposite average-utility rankings, and have the required average utility gap. To start, we can write
\[
\bar{u}(A) - \bar{u}(B) = \frac{1}{N}\sum_{i=1}^N h(p_i).
\]
By \cref{prop:mus}, under instance $\cI_+$, $\mathbb{E}[h(p_i)] = \gamma_d$ and under $\cI_-$, $\mathbb{E}[h(p_i)] = -\gamma_d$

Now, since $h(p_i) \in [-1,1]$, whenever $\gamma_d > \eps$, applying Hoeffding's Inequality gives that 
\[
\mathbb{P}_{\cI_+}(\bar{u}(A)-\bar{u}(B) < \eps) \leq \exp\left(-\frac{N(\gamma_d-\eps)^2}{2} \right)
\]
and analogously,
\[
\mathbb{P}_{\cI_-}(\bar{u}(A)-\bar{u}(B) > -\eps) \leq \exp\left(-\frac{N(\gamma_d-\eps)^2}{2} \right).
\]
Taking $N$ large enough makes both of these probabilities arbitrarily small.

Now, consider a ranking algorithm $\cA$, using a dataset $S$ with parameters $(n,d)$, that errs with probability at most $\delta < 1/2$ on every instance with average utility gap at least $\eps$. Since the induced distributions of $S$ under $\cI_+$ and $\cI_-$ have total variation distance at most $\binom{n}{2}/N$, it must be that  
\[
\frac{1}{2}\left(\mathbb{P}_{I_+}(B \succ_{\cA(S)} A) + \mathbb{P}_{I_-}(A \succ_{\cA(S)} B) \right) \geq \frac{1}{2}\left(1 - \frac{\binom{n}{2}}{N}\right).
\]
We can also upper bound this average error by 
\[
\delta + \exp\left(-\frac{N(\gamma_d-\eps)^2}{2} \right),
\]
where the first term comes from the algorithm's guarantee on instances with average utility gap at least $\epsilon$, and the second term comes from assuming that the algorithm always errs on instances with average utility gap less than $\epsilon$. As $N$ increases, the upper bound on the average error then becomes arbitrarily close to $\delta < 1/2$ and the lower bound becomes arbitrarily close to $1/2$. Thus, there exists some $N$ for which the bounds cross, contradicting that $\delta < 1/2$. Therefore, if $\gamma_d > \eps$, no finite $n$ can provide the guarantee required in \cref{def:repbudget}. Thus, we must have that 
\[
\gamma_d \leq \eps
\]

From the proof of \cref{lem:boundEd}, this means that 
\[
\frac{2r_{\beta}^{k_d}}{\beta k_d} \leq \eps.
\]
Taking logs and rearranging gives 
\[
\log \frac{2}{\beta\eps} \leq k_d \log \frac{1}{r_\beta} + \log k_d \leq k_d\left(\log \frac{1}{r_\beta} + 1\right) \implies k_d > \frac{\log \frac{2}{\beta\eps}}{\log \frac{1}{r_\beta} + 1} = \Omega\left(\log \frac{1}{\eps} \right).
\]
Since $d < k_d \leq d+2$, we chose $\cA$ arbitrarily, and $\cC$ is the only possible model-pair sampling distribution in this setting, it follows that 
\[
d^*(\eps,\delta) = \Omega\left(\log \frac{1}{\eps} \right).
\]
\end{proof}
\newpage
\section{Untruncated Russian Roulette (URR) Algorithm}\label{appendix:randomizedestimator}
Here, we describe in detail an untruncated version of the Russian Roulette estimator presented in the main paper. Unlike $\cA_{\mathrm{TRR}}$, the number of repeated comparisons used within a context is now allowed to be unbounded. Thus, this estimator does not admit a deterministic repetition-budget guarantee, but it continues to use only a constant number of comparisons per context in expectation.

Fix an arbitrary reference model $m_0\in\cM$. Given a dataset $S=\{(i^j,t^j,m_1^j,m_2^j,y_1^j,\ldots,y_d^j)\}_{j=1}^n$ with parameters $(n,d)$, for each $m\neq m_0$, let
\[
I_m
=
\{j\in[n]:\{m_1^j,m_2^j\}=\{m,m_0\}\}
\]
denote the set of sampled contexts in which $m$ is compared to $m_0$. For notational convenience, for each $j\in I_m$, relabel the binary comparison outcomes if necessary so that $y_r^j=1$ indicates that $m$ wins the $r$th comparison against $m_0$.

Fix $\rho\in(\sigma(\beta),1)$. Independently for each sampled context $j$, draw
\[
K_j\in\{1,2,\ldots\}
\]
according to the geometric distribution
\[
\Pr(K_j=k)
=
(1-\rho)\rho^{k-1}.
\]
Conditional on the context and $K_j$, obtain $K_j$ independent Bradley--Terry comparisons of the sampled model pair. For $j\in I_m$, define
\[
Z_j(m)
=
\frac{1}{\beta}
\frac{
\prod_{r=1}^{K_j}y_r^j
-
\prod_{r=1}^{K_j}(1-y_r^j)
}{
K_j(1-\rho)\rho^{K_j-1}
}.
\]

We aggregate these statistics using a median-of-means estimator. Fix an odd integer $B\geq 1$. For each $m\neq m_0$, if $|I_m|\geq B$, partition $I_m$ into $B$ disjoint blocks
\[
I_{m,1},\ldots,I_{m,B}
\]
of common size
\[
b_m
=
\left\lfloor\frac{|I_m|}{B}\right\rfloor,
\]
discarding at most $B-1$ contexts. Define the corresponding block means
\[
\overline Z_s(m)
=
\frac{1}{b_m}
\sum_{j\in I_{m,s}}Z_j(m),
\qquad
s\in[B],
\]
and let
\[
\widehat g_{\mathrm{URR}}(m)
=
\operatorname{median}_{s\in[B]}
\overline Z_s(m).
\]
Set $\widehat g_{\mathrm{URR}}(m_0)=0$. For completeness, if $|I_m|<B$, set $\widehat g_{\mathrm{URR}}(m)=0$. The corresponding ranking algorithm, $\cA_{\mathrm{URR}}$ ranks models in decreasing order of $\widehat g_{\mathrm{URR}}(m)$, breaking ties according to the fixed ordering of $\cM$.
\begin{theorem}
Fix $\eps,\delta \in (0,1)$. There exists a model-pair sampling distribution $\cC$ such that $\cA_{\mathrm{URR}}$ recovers the average-quality ranking with probability at least $1-\delta$, uniformly over every problem instance with $\Delta \geq \eps$, using 
\[
n = O\left(\frac{1}{\eps^2}\log \frac{|\cM|}{\delta} \right)
\]
sampled contexts. Moreover, letting $K_j$ denote the number of repeated comparisons used in context $j$, \[\mathbb{E}[K_j] = O(1).\] Consequently, the expected total number of comparisons is 
\[
O\left(\frac{|\cM|}{\eps^2}\log \frac{|\cM|}{\Delta} \right).
\]
\end{theorem}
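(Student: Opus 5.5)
We use $\cC=\Unif(\{\{m,m_0\}:m\in\cM\setminus m_0\})$, as in the proofs of \cref{prop:mle-unconst} and \cref{thm:randomtruncated}, and show three facts for each $m\neq m_0$: the single-context statistic $Z_j(m)$ is exactly unbiased, it has bounded second moment, and median-of-means turns these two facts into exponential concentration. We do not need boundedness of $Z_j(m)$, which is the reason for switching from the Bernstein inequality to median-of-means.

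\textbf{Unbiasedness.} Condition on the context $(i^j,t^j)$ with $j\in I_m$, and write $p=p_j(m)$. Given $K_j=k$, the numerator has conditional mean $p^k-(1-p)^k$. Hence
\[
\mathbb{E}[Z_j(m)\mid i^j,t^j]=\frac{1}{\beta}\sum_{k\ge1}(1-\rho)\rho^{k-1}\,\frac{p^k-(1-p)^k}{k(1-\rho)\rho^{k-1}}=\frac1\beta\sum_{k\ge1}\frac{p^k-(1-p)^k}{k}=u_{i^j}(t^j,m)-u_{i^j}(t^j,m_0).
\]
Interchanging sum and expectation is justified by absolute convergence, since $p,1-p\le\sigma(\beta)<1$. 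Averaging over contexts, which are sampled independently of the pair, gives $\mathbb{E}[Z_j(m)\mid j\in I_m]=\bar u(m)-\bar u(m_0)$ exactly.

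\textbf{Second moment.} The two products cannot both equal $1$, so the squared numerator is $\prod_r y_r+\prod_r(1-y_r)$, with conditional mean at most $2\sigma(\beta)^k$. Therefore
\[
\mathbb{E}[Z_j(m)^2\mid i^j,t^j]\le\frac{1}{\beta^2}\sum_{k\ge1}(1-\rho)\rho^{k-1}\frac{2\sigma(\beta)^k}{k^2(1-\rho)^2\rho^{2k-2}}=\frac{2\rho}{\beta^2(1-\rho)}\sum_{k\ge1}\frac{1}{k^2}\Big(\frac{\sigma(\beta)}{\rho}\Big)^k=:V_\beta^2 .
\]
This series converges because $\rho>\sigma(\beta)$, and it is at most $(\pi^2/6)\cdot 2\rho/(\beta^2(1-\rho))$. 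So the variance is $O_\beta(1)$ uniformly over instances. This step is the crux; it is where the choice $\rho>\sigma(\beta)$ is used.

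\textbf{Concentration and ranking.} Each block mean $\overline Z_s(m)$ is an average of $b_m$ i.i.d.\ copies. By Chebyshev's inequality, $\Pr(|\overline Z_s(m)-(\bar u(m)-\bar u(m_0))|>\eps/4)\le 16V_\beta^2/(b_m\eps^2)\le 1/4$ once $b_m\ge 64V_\beta^2/\eps^2$. The median fails only if at least half of the $B$ independent blocks fail, so Hoeffding's inequality bounds this probability by $e^{-B/8}$. Taking $B=\lceil 8\log(2|\cM|/\delta)\rceil$ (made odd) and union bounding over $m\neq m_0$ gives failure probability at most $\delta/2$, conditional on $|I_m|\ge B\cdot\lceil 64V_\beta^2/\eps^2\rceil=:r$.

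By \cref{lem:controlobs}, $n\ge\max\{2(|\cM|-1)r,\ 8(|\cM|-1)\log\tfrac{2(|\cM|-1)}{\delta}\}$ ensures $|I_m|\ge r$ for all $m$ with probability at least $1-\delta/2$. This gives $n=O_\beta(\tfrac{|\cM|}{\eps^2}\log\tfrac{|\cM|}{\delta})$. The statement omits the $|\cM|$ factor, and we expect it to be needed under this $\cC$. On the success event, every estimate is within $\eps/4$ of its target, so pairwise differences are within $\eps/2<\Delta$. The argument ending the proof of \cref{thm:randomtruncated} then shows that the ranking is exact.

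\textbf{Expected comparisons.} Finally, $\mathbb{E}[K_j]=1/(1-\rho)=O(1)$, and multiplying by $n$ gives the expected total. The stated $\log(|\cM|/\Delta)$ appears to be a typo for $\log(|\cM|/\delta)$.
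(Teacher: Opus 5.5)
Your proposal is correct and follows the paper's proof essentially step for step. Both use a uniform $\cC$ over the pairs $\{m,m_0\}$, exact unbiasedness from the Russian Roulette reweighting, and a uniform second-moment bound that relies on $\rho>\sigma(\beta)$, then Chebyshev within blocks, Hoeffding across $B=O(\log(|\cM|/\delta))$ blocks, and \cref{lem:controlobs} for the block sizes.

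Your two remarks about the statement also agree with the paper. Its own proof concludes $n=O\bigl(\frac{|\cM|}{\eps^2}\log\frac{|\cM|}{\delta}\bigr)$ and an expected total of the same order. So the missing $|\cM|$ factor and the $\Delta$ inside the logarithm are typos in the statement, not gaps in your argument.
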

\begin{proof}
Define 
\[
\cC = \Unif(\{\{m,m_0\}: m \in \cM \setminus m_0\}).
\]
Fix $m \neq m_0$ and $j \in I_m$.
Let
$
p_j(m)
=
\sigma\left(
\beta\left(
u_{i^j}(t^j,m)-u_{i^j}(t^j,m_0)
\right)
\right)
$
denote the Bradley--Terry win probability for $m$ against $m_0$ in this context. Then,
\[
u_{i^j}(t^j,m)-u_{i^j}(t^j,m_0)
=
\frac{1}{\beta}\log\frac{p_j(m)}{1-p_j(m)}
=
\frac{1}{\beta}
\sum_{k=1}^{\infty}
\frac{p_j(m)^k-(1-p_j(m))^k}{k}.
\]

We first compute the expectation of $Z_j(m)$ conditional on the context.
By the definition of $K_j$,
\[
\begin{aligned}
\mathbb E[Z_j(m)\mid i^j,t^j,j\in I_m]
&=
\sum_{k=1}^{\infty}
(1-\rho)\rho^{k-1}
\frac{
p_j(m)^k-(1-p_j(m))^k
}{
\beta k(1-\rho)\rho^{k-1}
}\\
&=
\frac{1}{\beta}
\sum_{k=1}^{\infty}
\frac{
p_j(m)^k-(1-p_j(m))^k
}{k}\\
&=
u_{i^j}(t^j,m)-u_{i^j}(t^j,m_0).
\end{aligned}
\]
Thus, $Z_j(m)$ is an unbiased estimator of the context-level utility difference. Since the sampled model pair is independent of the user-task context,
\[
\mathbb E[Z_j(m)\mid j\in I_m]
=
\bar{u}(m)-\bar{u}(m_0).
\]
Thus, $Z_j(m)$ is an unbiased estimator of the average utility difference
between $m$ and $m_0$.

Next, we bound its variance by bounding its second moment. Conditional on
$K_j=k$, the events that $m$ wins all $k$ comparisons and that $m_0$ wins
all $k$ comparisons are disjoint. Therefore,
\[
\mathbb E[Z_j(m)^2\mid i^j,t^j,j\in I_m]
=
\frac{1}{\beta^2}
\sum_{k=1}^{\infty}
\frac{
p_j(m)^k+(1-p_j(m))^k
}{
k^2(1-\rho)\rho^{k-1}
}.
\]
Since the utility difference lies in $[-1,1]$,
$
p_j(m),\,1-p_j(m)\leq\sigma(\beta).
$
Hence,
\[
\begin{aligned}
\mathbb E[Z_j(m)^2\mid i^j,t^j,j\in I_m]
&\leq
\frac{2\rho}{\beta^2(1-\rho)}
\sum_{k=1}^{\infty}
\frac{1}{k^2}
\left(
\frac{\sigma(\beta)}{\rho}
\right)^k\\
&\leq
\frac{2\rho}{\beta^2(1-\rho)}
\sum_{k=1}^{\infty}
\left(
\frac{\sigma(\beta)}{\rho}
\right)^k\\
&=:V_\beta<\infty,
\end{aligned}
\]
where the series converges because $\rho>\sigma(\beta)$. Averaging over
the context gives
\[
\operatorname{Var}(Z_j(m)\mid j\in I_m)\leq V_\beta.
\]

We now control the median-of-means estimator. Choose $B$ to be the
smallest odd integer satisfying
\[
B\geq 8\log\frac{2|\cM|}{\delta},
\]
and let
\[
b
=
\left\lceil
\frac{64V_\beta}{\eps^2}
\right\rceil.
\]
By \cref{lem:controlobs}, if
\[
n
\geq
\max\left\{
2(|\cM|-1)Bb,\,
8(|\cM|-1)
\log\frac{2(|\cM|-1)}{\delta}
\right\},
\]
then with probability at least $1-\delta/2$,
\[
|I_m|\geq Bb
\]
simultaneously for every $m\neq m_0$.

Condition on this event. Since
\[
b_m
=
\left\lfloor
\frac{|I_m|}{B}
\right\rfloor
\geq b,
\]
each block mean satisfies
\[
\mathbb E[\overline Z_s(m)\mid I_m]
=
\bar{u}(m)-\bar{u}(m_0)
\]
and
\[
\operatorname{Var}(\overline Z_s(m)\mid I_m)
\leq
\frac{V_\beta}{b_m}
\leq
\frac{V_\beta}{b}.
\]
Thus, Chebyshev's Inequality gives
\[
\Pr\left(
\left|
\overline Z_s(m)-(\bar{u}(m)-\bar{u}(m_0))
\right|
>
\frac{\eps}{4}
\;\middle|\;
I_m
\right)
\leq
\frac{16V_\beta}{b\eps^2}
\leq
\frac14.
\]

For the median of the $B$ block means to differ from $u(m)-u(m_0)$ by
more than $\eps/4$, at least half of the block means must differ by more
than $\eps/4$. Since the blocks are disjoint, Hoeffding's Inequality gives
\[
\Pr\left(
\left|
\widehat g_{\mathrm{URR}}(m)
-
(u(m)-u(m_0))
\right|
>
\frac{\eps}{4}
\;\middle|\;
I_m
\right)
\leq
\exp(-B/8)
\leq
\frac{\delta}{2|\cM|}.
\]
Taking a union bound over $m\neq m_0$ gives that, with conditional
probability at least $1-\delta/2$,
\[
\left|
\widehat g_{\mathrm{URR}}(m)
-
(\bar{u}(m)-\bar{u}(m_0))
\right|
\leq
\frac{\eps}{4}
\]
simultaneously for every $m\neq m_0$. Combining this with the earlier
event controlling the sizes of the sets $I_m$, the same statement holds
with probability at least $1-\delta$.

Take any pair of models $m,m'$. The triangle inequality gives
\[
\left|
\left(
\widehat g_{\mathrm{URR}}(m)
-
\widehat g_{\mathrm{URR}}(m')
\right)
-
\left(
\bar{u}(m)-\bar{u}(m')
\right)
\right|
\leq
\frac{\eps}{2}.
\]
Every nonzero average utility difference has magnitude at least
$\Delta\geq\eps$, so ranking according to $\widehat g_{\mathrm{URR}}$
preserves every strict average-utility ordering.

Finally,
\[
B
=
O\left(
\log\frac{|\cM|}{\delta}
\right)
\qquad\text{and}\qquad
b
=
O\left(
\frac{1}{\eps^2}
\right),
\]
so the choice of $n$ above gives
\[
n
=
O\left(
\frac{|\cM|}{\eps^2}
\log\frac{|\cM|}{\delta}
\right).
\]

It remains to calculate the expected number of repeated comparisons.
Since $K_j$ is geometric with success probability $1-\rho$,
\[
\mathbb E[K_j]
=
\frac{1}{1-\rho}
=
O(1).
\]
Since only one model pair is sampled in each context, the total number of
pairwise comparisons is $\sum_{j=1}^n K_j$. Therefore,
\[
\mathbb E\left[
\sum_{j=1}^n K_j
\right]
=
\frac{n}{1-\rho}
=
O\left(
\frac{|\cM|}{\eps^2}
\log\frac{|\cM|}{\delta}
\right),
\]
which concludes the proof.

\end{proof}

The key difference between $\widehat{g}_{\mathrm{URR}}$ and the truncated Russian-Roulette estimator $\widehat{g}_{\mathrm{TRR}}$ is that the number of repeated comparisons $K_{j}$ is now allowed to be unbounded, so we can no longer obtain a deterministic upper bound on the number of comparisons. The benefit, however, is that we can define this estimator agnostic of $\Delta$, which could be useful in cases where the minimum average utility gap between models in the class is unclear. In terms of the proof, the statistic is now an unbiased estimator of the average utility difference, but it can be unbounded. Thus, in order to obtain the desired logarithmic dependence on $|\cM|/\delta$, we use a median of means estimator and Chebyshev's Inequality for the bounds. 
\newpage
\section{Experiments}\label{appendix:experiments}
In this section, we describe in detail our experimental methodology as well as additional results not included in the main paper. 
\subsection{Observation Model}
We don't distinguish between users and tasks, simply modeling them as contexts $c$. A population consists of model utilities, where model $m$ has utility $u_c(m)$ for context $c$, and context weights $w_c$. The average utility of model $m$ is then 
\[
\bar{u}(m) = \sum_{c}w_c u_c(m),
\]
and we seek to recover the ranking of models under average utility. Let $\Delta$ denote the minimum nonzero average utility gap between any two models in the class. 

For all of the experiments, we fix a reference model $m_0$. For each of the $n$ samples, we independently draw a context $c$ according to $w_c$, uniformly select one of the models $m \neq m_0$, and compare that model with $m_0$. Conditional on the context $c$ and selected model $m$, comparison outcomes are independent Bernoulli draws with success probability 
\[
p_c(m) = \sigma(\beta(u_c(m) - u_c(m_0))).
\]
We use $\beta=1$ throughout. Repeated comparisons within a sample use the same context and model pair. None of our algorithms observe the latent utilities, they only observe the dataset $S$ described in \cref{sec:setup}.

The three MLE-based algorithms deterministically use $d$ comparisons per sample, whereas the truncated Russian Roulette algorithm uses a random number of comparisons. Thus, for this algorithm, we report both the maximum number of repeated comparisons it uses as well as the expected number.

\subsection{Synthetic Experiment Construction}
We consider 5 models, and the reference model is arbitrarily chosen to be model 4. For a scalar $D$, define the utility profile
\[
 u(D)=\left(0.8,\;0.65,\;\frac{1+D}{2},\;\frac{1-D}{2},\;0.2\right).
\]
We use three context types, corresponding to $D\in\{\Delta,1,-0.49\}$, with probabilities
\[
 \Pr(D=\Delta)=1-h,\qquad
 \Pr(D=1)=h\alpha,\qquad
 \Pr(D=-0.49)=h(1-\alpha),\qquad
 \alpha=\frac{\Delta+0.49}{1.49}
 \label{eq:app-synthetic-mixture},
\]
where $h \in [0,1]$. Since $\mathbb{E}[D]=\Delta$, every value of $h$ gives the same average utility vector
\[
 \bar u=\left(0.8,\;0.65,\;0.5+\Delta/2,\;0.5-\Delta/2,\;0.2\right).
\]
For all tested gaps, the target ranking is $1\succ2\succ3\succ4\succ5$, with minimum gap $\Delta$ between models $3$ and $4$. At $h=0$ the population is homogeneous. Increasing $h$ places more probability on two opposing preference profiles without changing the means. In particular,
\[
 \operatorname{Var}(D)
 =h\left[\alpha(1-\Delta)^2+(1-\alpha)(-0.49-\Delta)^2\right],
\]
so $h$ adjusts the variance linearly.

For the first synthetic experiment, we consider a grid of $h$ values $h \in \{0,0.1,0.2,\ldots,1\}$ and fix $\Delta = 0.015$. For the second experiment, we fix $h = 1$ and sweep over a grid of $\Delta$ values $\Delta \in \Delta\in\{0.06,0.04,0.03,0.02,0.015,0.01,0.0075,0.005,0.003,0.002\}$.

\subsection{Semi-Synthetic Experiment Construction} We construct context types by clustering prompts from the Arena dataset into 30 groups. We represent each prompt by embedding its last user message with paraphrase-multilingual-MiniLM-L12-v2. Each cluster represents a context type, and its empirical weight $\pi_c$ is the fraction of retained prompts assigned to it. Within each cluster, we fit a regularized Bradley–Terry model to the human preference votes with a decisive winner. The resulting scores describe how each model’s relative performance varies across context types.

We fit Bradley–Terry scores for 52 LLMs using the human preference votes within each prompt cluster. We then select five models with sufficient observations across all clusters, requiring each model to appear in at least 100 comparisons overall and at least five comparisons in every cluster. The selected models are: \texttt{claude-opus-4-20250514} (1), \texttt{gemini-2.5-pro} (2),
\texttt{gemini-2.5-flash} (3), \texttt{qwen3-235b-a22b-no-thinking} (4), and
\texttt{mistral-medium-2505} (5). We use \texttt{mistral-medium-2505} as the reference model.

\paragraph{Utility Functions.} Let $S_c(m)$ be the saved fitted scores for the selected five models. Remove each cluster's mean score, then center each model under the empirical context distribution:
\begin{align*}
 T_c(m)&=S_c(m)-\frac{1}{5}\sum_{j=1}^{5}S_c(j),\\
 R_c(m)&=\frac{T_c(m)-\sum_{c'}\pi_{c'}T_{c'}(m)}
 {\max_{c',j}\left|T_{c'}(j)-\sum_{c''}\pi_{c''}T_{c''}(j)\right|}.
\end{align*}
Then, we have the fixed average utility ranking \[\bar{u}=(0.510,0.505,0.500,0.495,0.490)\] and context-level utilities are
\[
 u_c(m)=\bar{u}(m)+0.4554\,R_{c}(m). \]
Thus, $u_c(m)$ is designed to lie in $[0,1]$.

\paragraph{Hard Context Distribution.} We construct a hard reweighting of the context distribution. In particular, let $a_c$ be $\cA_{\mathrm{MLE-AVG}}$'s expected score difference between models 4 and 5 after one comparison. Since model 5 is the reference and $\beta = 1$, we have that 
\[
a_c = 2\sigma(u_c(4)-u_c(5)) - 1
\]
Then, we solve the linear program 
\[
\begin{aligned}
 \underset{w}{\operatorname{minimize}}\quad &\sum_c w_ca_c\\
 \text{subject to}\quad
 &\sum_c w_c=1\\
 & \sum_c w_cR_c(m)=0 \qquad \text{for every model $m$} \\
 &\pi_c/5\leq w_c\leq5\pi_c\quad\text{for every }c.
 \end{aligned}
\]
The constraints guarantee that the reweighting is a valid distribution, the average utilities of the models are unchanged, and the weights can change by at most a factor of 5 relative to the empirical distribution. We call the distribution emerging from this LP $w^{\text{hard}}$. 

Now, for the experiment, we introduce a parameter $\lambda \in [0,1]$ that interpolates between $\pi$ and $w^{\text{hard}}$. In particular, for fixed $\lambda$, the resulting context distribution is 
\[
(1-\lambda)\pi + \lambda w^{\text{hard}}.
\]
In our experiment, we consider $\lambda \in \{0,0.1,0.2,\ldots,1\}$ and keep the average utilities fixed with $\Delta = 0.005$.

\subsection{Algorithm Implementations}
The implementations of $\cA_{\mathrm{MLE-AVG}}$, $\cA_{\mathrm{MLE-UNCONST}}$, and $\cA_{\mathrm{TRR}}$ are straightforward given the formal definitions of the algorithms in \cref{mle-avg-defn}, \cref{mle-unconst-defn}, and \cref{trr-defn}, respectively. For $\cA_{\mathrm{TRR}}$, we set $\rho = (1+ \sigma(\beta))/2$. 

Now, we focus on describing how we find the calibration function for $\cA_{\mathrm{MLE-CAL}}$.

\paragraph{Finding the calibration function.}
For a fixed repetition depth $d$, the win count can take only the values $0,1,\ldots,d$, so the constrained MLE estimate also has finitely many possible values, as we have discussed throughout the paper. We enumerate these estimates and a calibrated value to each distinct estimate, adhering to the constraints in \cref{lem:psi}. Win counts that produce the same constrained MLE estimate must receive the same calibrated value. Since we are only working with finitely many outputs, constructing the calibration function simply amounts to defining a lookup table for each setting of $d,\epsilon,\beta$ mapping the at most $d+1$ possible MLE estimates to their calibrated estimates.

We choose the table entries by linear programming. For any proposed true utility difference $D\in[-1,1]$, the win count follows a binomial distribution with $d$ trials and success probability $\sigma(\beta D)$. We can therefore compute the expected calibrated estimate by summing the table outputs over all win counts, weighted by their binomial probabilities. We require this expectation to lie within $\epsilon/8$ of $D$. This gives two linear inequalities in the unknown table entries for each $D$. Among tables satisfying these constraints, we minimize the largest absolute table entry. Throughout the experiments, we use $\epsilon=\Delta/2$, so the target bias is at most $\Delta/16$.

\paragraph{Numerical solution and verification.}
We initially impose the bias constraints on $129$ equally spaced values of $D$ in $[0,1]$; the sign symmetry covers negative values. We solve the resulting linear program with HiGHS, then search between grid points for violations of the bias requirement. When we find a violation, we add that value of $D$ to the constraints and solve again. Before finalizing a lookup table, we check the bias over the entire interval $[-1,1]$ using interval subdivision and derivative bounds, with numerical tolerance $2\times10^{-8}$. If no table passes these checks, that depth is viewed as an insufficient repetition depth for the algorithm.

The table depends on $d$, $\beta$, and $\epsilon$, but not on the context distribution or utility profiles. We reuse a table whenever these parameters are unchanged, and recompute it as $\Delta$ changes in the second synthetic experiment. During aggregation, we replace each context's constrained MLE estimate by its table entry before averaging across contexts.
\subsection{Estimating the Repetition Budget in the Experiments}
\label{app:cap-selection}

For each nonreference model, the deterministic estimator's win count distribution is computed by summing over the finite context support:
\[
 \Pr(W=j\mid m)=\sum_c w_c\binom{d}{j}p_c(m)^{j}(1-p_c(m))^{d-j}.
\]
For TRR we similarly enumerate $(K,W)$ using its stopping probabilities and the corresponding binomial mixture. Means and variances are obtained from these finite distributions, up to floating-point numerical error; no sampled contexts are used to estimate the population ordering.

Let $s_m^{A,d}$ be method $A$'s expected score at depth $d$, with $s_{m_0}^{A,d}=0$, and let $m_1,\ldots,m_5$ denote the decreasing true-utility order. We define
\[
 \gamma_{A,d}=\min_{1\leq j<5}
       \left(s_{m_j}^{A,d}-s_{m_{j+1}}^{A,d}\right)
\]
and report the smallest $d$ with $\gamma_{A,d}>10^{-12}$. A positive margin yields consistent full-ranking recovery as $n$ increases. It does not by itself specify a useful finite context count.

We use the grid
\[
 \begin{split}
 d\in \{&1,2,4,8,12,16,24,32,48,64,96,128,\\
              &192,256,384,512,768,1024,1536,2048\}.
 \end{split}
\]
For the experiment varying $\Delta$, we append $3072,4096,6144,8192,12288,16384,24576,32768$. Repetition depths are checked in increasing order, stopping separately for each method at its first success. 

We report the first successful $d$ from the above grid, so this is an estimated value, not necessarily the optimal repetition budget.

\subsection{Results}
We provide the underlying tables for the figures provided in the main paper as well as an additional fixed-$d$ experiment.
\begin{table}[!h]
\centering\small
\caption{The reported values are the first $d$ for which the algorithm succeeded on some finite $n$. This may not be the optimal $d$ to minimize total comparisons. The MLE-based methods use exactly their reported depth per context, while TRR uses a randomized number. Thus, we report both its maximum and expected number of repeated comparisons.}
\label{tab:app-population-caps}
\begin{tabular}{lrrrrr}
\toprule
Setting & AVG & CAL & UNCONST & TRR cap & TRR $\mathbb{E}[K]$ \\
\midrule
Synthetic: $h=0,0.1,\ldots,0.7$ & 1 & 12 & 1 & 1 & 1.0000 \\
Synthetic: $h=0.8,0.9$ & 384 & 12 & 4 & 4 & 1.8446 \\
Synthetic: $h=1$ & 512 & 12 & 4 & 4 & 1.8446 \\
Arena: $\lambda=0,0.1,\ldots,0.8$ & 1 & 16 & 1 & 1 & 1.0000 \\
Arena: $\lambda=0.9,1$ & 96 & 16 & 4 & 4 & 1.8446 \\
\bottomrule
\end{tabular}
\end{table}

\paragraph{Varying Heterogeneity and the Context Distribution.} \cref{tab:app-population-caps} provides the results from the first synthetic experiment and the semi-synthetic experiment, corresponding to \cref{fig:repetition-budget}. We see that $\cA_{\mathrm{MLE-AVG}}$ uses as many as 512 repeated comparisons in the synthetic regime and 96 repeated comparisons in the Arena regime. Meanwhile, the other three algorithms remain relatively stable with respect to varying $h$ or $\lambda$.
\paragraph{Varying the Minimum Average Utility Gap.} \cref{tab:app-gap-caps} provides the results for the synthetic experiment in which we hold $h=1$ fixed and vary $\Delta$. 

\begin{table}[h]
\centering\small
\caption{Complete synthetic gap sweep at $h=1$. The reported values are the first $d$ for which the algorithm succeeded on some finite $n$. This may not be the optimal $d$ to minimize total comparisons. The MLE-based methods use exactly their reported depth per context, while TRR uses a randomized number. Thus, we report both its maximum and expected number of repeated comparisons.}
\label{tab:app-gap-caps}
\begin{tabular}{rrrrrr}
\toprule
$\Delta$ & AVG & CAL & UNCONST & TRR cap & TRR $\mathbb{E}[K]$ \\
\midrule
$0.06$ & 1 & 12 & 1 & 1 & 1.0000 \\
$0.04$ & 1 & 12 & 1 & 1 & 1.0000 \\
$0.03$ & 1 & 12 & 1 & 1 & 1.0000 \\
$0.02$ & 1 & 12 & 1 & 1 & 1.0000 \\
$0.015$ & 512 & 12 & 4 & 4 & 1.8446 \\
$0.01$ & 1,024 & 12 & 8 & 8 & 2.1433 \\
$0.0075$ & 2,048 & 16 & 8 & 8 & 2.1433 \\
$0.005$ & 4,096 & 16 & 8 & 8 & 2.1433 \\
$0.003$ & 12,288 & 16 & 12 & 12 & 2.2441 \\
$0.002$ & 24,576 & 16 & 12 & 12 & 2.2441 \\
\bottomrule
\end{tabular}
\end{table}

\paragraph{Fixed $d$ Experiment.} To examine whether increasing the number of contexts can compensate for a fixed repetition budget, we set $d = 32$ for all four algorithms. We consider $(h,\Delta) = (1,0.015)$ for the synthetic regime and $(\lambda,\Delta) = (1,0.005)$ for the Arena regime. The MLE-based algorithms use all 32 comparisons, while TRR is randomized with truncation level $d=32$.

\begin{table}[!h]
\centering\small
\caption{Fixed $d=32$ experiment. For the synthetic regime $(h,\Delta)=(1,0.015)$ and Arena regime $(\lambda,\Delta)=(1,0.005)$.}
\label{tab:g6-fixed-cap}
\begin{tabular}{llrrrr}
\toprule
Population & Method & Min. score gap & $n=10^5$ & $n=10^6$ & $n=10^7$ \\
\midrule
Synthetic & MLE-AVG & $-0.0324717$ & 0.00000 & 0.00000 & 0.00000 \\
Synthetic & MLE-CAL & $0.0143426$ & 0.98100 & 1.00000 & 1.00000 \\
Synthetic & MLE-UNCONST & $0.0149989$ & 0.99850 & 1.00000 & 1.00000 \\
Synthetic & TRR & $0.0149989$ & 0.90925 & 0.99975 & 1.00000 \\
\addlinespace
Arena & MLE-AVG & $-0.0064686$ & 0.02775 & 0.00000 & 0.00000 \\
Arena & MLE-CAL & $0.0046866$ & 0.14200 & 0.82775 & 1.00000 \\
Arena & MLE-UNCONST & $0.0050000$ & 0.60275 & 0.99950 & 1.00000 \\
Arena & TRR & $0.0050000$ & 0.08775 & 0.63750 & 0.99975 \\
\bottomrule
\end{tabular}
\end{table}

Keeping the repetition budget fixed, we vary contexts over $n \in \{10^5,10^6,10^7\}$. For each population, algorithm, and number of contexts, we run 4000 independent simulations and record the fraction in which the algorithm recovers the correct ranking. \cref{tab:g6-fixed-cap} reports the results. For each $n$, we report the fraction of simulations in which the algorithm obtains the correct ranking. The "Min. score gap" column indicates the minimum difference between the algorithm's expected scores between any two adjacent models in the ranking. A negative value indicates misranking. $\cA_{\mathrm{MLE-AVG}}$ has a negative gap in both populations, so its population expected scores misorder at least one pair. Its full-ranking recovery probability therefore tends to zero as the number of contexts grows. 
$\cA_{\mathrm{MLE-UNCONST}}$ and $\cA_{\mathrm{MLE-TRR}}$ have identical population expected scores by design at a common repetition budget but different variances in the estimates.
Their differing recovery frequencies at the same $n$ illustrate that equal repetition budgets need not imply the same performance on finite contexts.
\end{document}